\documentclass[twoside]{article}

\usepackage{microtype}
\usepackage{graphicx}
\usepackage{booktabs}
\usepackage{subcaption}
\usepackage{hyperref}

\newcommand{\mcD}{\mathcal{D}}
\newcommand{\mcL}{\mathcal{L}}
\newcommand{\mcR}{\mathcal{R}}
\newcommand{\mcF}{\mathcal{F}}
\newcommand{\bx}{\mathbf{x}}
\newcommand{\bX}{\mathbf{X}}

\usepackage[accepted]{icml2025}

\usepackage{amsmath}
\usepackage{amssymb}
\usepackage{mathtools}
\usepackage{amsthm}
\usepackage{multirow}
\usepackage{graphicx}
\usepackage{thmtools, thm-restate}
\usepackage{bbm}
\usepackage[italicdiff]{physics}
\usepackage{makecell}

\usepackage[capitalize,noabbrev]{cleveref}

\theoremstyle{plain}
\newtheorem{theorem}{Theorem}[section]

\theoremstyle{definition}
\newtheorem{definition}[theorem]{Definition}

\theoremstyle{remark}

\icmltitlerunning{Near-Optimal Decision Trees in a SPLIT Second}

\begin{document}

\definecolor{commentgreen}{rgb}{0,0.5,0.5}

\twocolumn[

\icmltitle{Near-Optimal Decision Trees in a SPLIT Second}

\icmlsetsymbol{equal}{*}

\begin{icmlauthorlist}
\icmlauthor{Varun Babbar}{equal,yyy}
\icmlauthor{Hayden McTavish}{equal,yyy}
\icmlauthor{Cynthia Rudin}{yyy}
\icmlauthor{Margo Seltzer}{sch}
\end{icmlauthorlist}

\icmlaffiliation{yyy}{Department of Computer Science, Duke University, Durham, USA}
\icmlaffiliation{sch}{Department of Computer Science, University of British Columbia, Vancouver, Canada}

\icmlcorrespondingauthor{Varun}{varun.babbar@duke.edu}
\icmlcorrespondingauthor{Hayden}{hayden.mctavish@duke.edu}

\icmlkeywords{Machine Learning, ICML}

\vskip 0.3in
]

\printAffiliationsAndNotice{\icmlEqualContribution}

\begin{abstract}
Decision tree optimization is fundamental to interpretable machine learning. The most popular approach is to greedily search for the best feature at every decision point, which is fast but provably suboptimal. Recent approaches find the global optimum using branch and bound with dynamic programming, showing substantial improvements in accuracy and sparsity at great cost to scalability. An ideal solution would have the accuracy of an optimal method and the scalability of a greedy method. We introduce a family of algorithms called SPLIT (SParse Lookahead for Interpretable Trees) that moves us significantly forward in achieving this ideal balance. We demonstrate that not all sub-problems need to be solved to optimality to find high quality trees; greediness suffices near the leaves. Since each depth adds an exponential number of possible trees, this change makes our algorithms orders of magnitude faster than existing optimal methods, with negligible loss in performance. We extend this algorithm to allow scalable computation of sets of near-optimal trees (i.e., the Rashomon set).
\end{abstract}




\section{Introduction}
\label{sec:introduction}

Decision tree optimization is core to interpretable machine learning \citep{rudin2022interpretable}. Simple decision trees present the entire model reasoning process transparently, directly allowing faithful interpretations of the model~\citep{arrieta2020explainable}. This helps users choose whether to trust the model and to critically examine any perceived flaws.

\begin{figure}[H]
    \centering
    \includegraphics[width=0.95\linewidth]{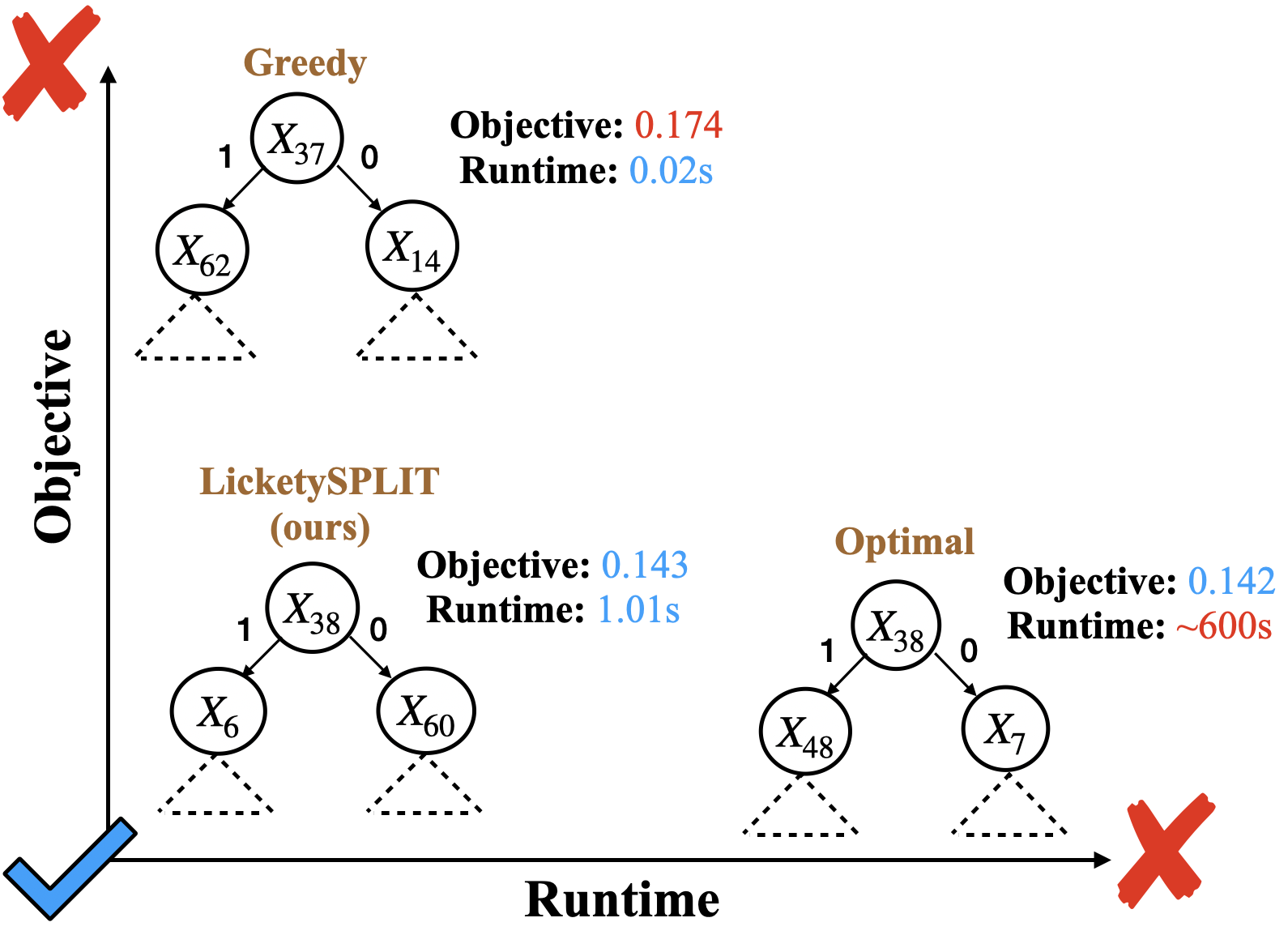}
    \caption{An illustration of the power of our optimization algorithm. We train $3$ decision trees on the Bike dataset, with the aim of predicting bike rentals in Washington DC in a given time period. A greedy tree is fast but suboptimal. An optimal tree is well performing but \textit{very} slow. Our algorithm strikes the perfect balance, providing well performing trees in a \textit{SPLIT} second, orders of magnitude faster than optimal approaches seen in literature.}
    \label{fig:headline_figure}
\end{figure}
Optimizing the performance of decision trees while preserving their simplicity presents a significant challenge. Traditional greedy methods scale linearly with both dataset size and the number of features \citep{breiman1984classification, quinlan2014c45}. 

However, these methods tend to yield suboptimal results, lacking general guarantees on either sparsity or accuracy. Recent advances in decision tree algorithms use dynamic programming techniques combined with branch-and-bound strategies, offering solutions that are faster than brute-force approaches and provably optimal \citep{gosdt, dl85,murtree, gosdt_guesses}. In fact, \citet{murtree} and \citet{van2024optimal} reveal an average gap of $1$-$2$ percentage points between greedy and optimal trees, with \citet{murtree} showing that some datasets can exhibit gaps as large as $10$ percentage points. These algorithms struggle to scale to datasets with hundreds or thousands of features or to deeper trees. It seems that we should return to greedy methods for larger-scale problems, but this would come at a loss of performance. Ideally, we should leverage greed only when it does not significantly deviate from optimality and use dynamic programming otherwise.
Dynamic programming approaches build trees recursively, downward from the root. Problems farther from the root contain fewer samples and produce fewer splits. As we show, \textit{greedy splits near the root sacrifice performance}, while \textit{greedy splits near the leaves produce performance close to the optimal}. This suggests that we can tolerate less precision on problems close to leaves than on problems closer to the root -- and that full optimization on those problems closer to the leaves yields only marginal returns relative to greedy, since we only have a few splits remaining. This has enormous implications, since the number of candidate trees increases exponentially with increases in depth; using greedy splitting closer to the leaves of the tree massively reduces the search space.

We leverage this observation to construct SPLIT (SParse Lookahead for Interpretable Trees), a family of decision tree algorithms that are over \textbf{100$\times$ faster than state of the art optimal decision tree algorithms, with negligible sacrifice in performance}. They can also be tuned to a user-defined level of sparsity. Instead of searching through the entire space of decision trees up to a given depth, our algorithm performs dynamic programming with branch and bound up to only 
a shallow ``lookahead'' depth, conditioned on all splits henceforth being chosen greedily.

Our contributions are as follows.

- We develop a family of decision tree algorithms that scale with the dataset size and number of features comparably to standard greedy algorithms but produce trees that are as accurate and sparse as optimal ones \citep[e.g.,][]{gosdt}.

- We extend our decision tree algorithms to allow scalable, accurate approximations of the Rashomon set of decision trees \citep{breiman2001statistical, xin2022treefarms}.

- We theoretically prove that our algorithms scale exponentially faster in the number of features than optimal decision tree methods and are capable of performing arbitrarily better than a purely greedy approach.

\section{Related Work}
\label{sec:related}
We are interested in accurate, interpretable decision tree classifiers that we can find efficiently. We discuss these three goals as they pertain to existing work. 

Consistent with recommendations from \citet{rudin2022interpretable, costa2023recent}, we emphasize sparsity, expressed in terms of the number of leaves, as the primary mechanism for tree interpretability. Sparsity has a strong correlation with user comprehension \citep{Piltaver2016Comprehensible}.
\citet{zhou2018measuring} fit a regression model to user-reported interpretability for decision trees, also finding that trees with fewer leaves were more interpretable. They also found that deep, sparse trees were more interpretable than shallow trees with the same sparsity. \citet{izza2022tackling} provides a way to use a sparse decision tree to provide succinct individual explanations. However, finding deep, sparse trees with existing methods can be computationally infeasible. We bridge this gap -- our algorithms are capable of finding sparse trees without constraining them to be shallow.

\paragraph{Greedy Decision Trees}
A long line of work explores greedy algorithms such as CART \citep{breiman1984classification} and C4.5 \citep{quinlan2014c45}. These methods first define a heuristic feature quality metric such as the Gini impurity score \citep{breiman1984classification} or the information gain \citep{quinlan2014c45} rather than choosing a global objective function. At every decision node, the feature with the highest quality is chosen as the splitting feature. This process is repeated until a termination criteria is reached. One such criteria often used is the minimum support of each leaf. Trees can then be postprocessed with pruning methods.
\paragraph{Branch and Bound Optimization}\label{sec:bnb}
Among the many methods for globally optimizing trees, Branch-and-bound approaches with dynamic programming are state of the art for scalability, because they exploit the structure of decision trees 
\citep{costa2023recent, gosdt, murtree, gosdt_guesses, dl85}. While many other methods exist for optimizing trees, such as MIP solvers \citep{bertsimas2017optimal, verwer2019learning}, we focus our discussion and comparison of globally optimal decision tree methods on the currently fastest types of approaches -- dynamic programming with branch and bound (DPBnB). 
These approaches search through the space of decision trees while tracking lower and upper bounds of the overall objective at each split to reduce the search space. 
They can find optimal trees on medium-sized datasets with tens of features and shallow maximum tree depths \citep{maptree, dl85, gosdt, murtree}. \citet{dl85} uses a DPBnB method with advanced caching techniques to find optimal decision trees, though it does not explicitly optimize for sparsity. In contrast, \citet{gosdt,osdt} use a DPBnB approach to find a tree that optimizes a weighted combination of empirical risk and sparsity, defined by the number of leaves in the tree. \citet{gosdt_guesses} further enhances this approach by incorporating smart guessing strategies to construct tighter lower bounds for DPBnB, resulting in computational speedups. \citet{murtree} extends the work of \citet{dl85} by focusing on finding the optimal tree with a hard constraint on the number of permissible nodes, using advanced caching techniques and an optimized depth-2 decision tree solver. \citet{quantbnb} addresses continuous features by defining lower and upper bounds based on quantiles of feature distributions. However, their method is applicable only to shallow optimal trees with depth $\leq 3$, limiting its utility in scenarios with higher-order feature interactions. 

\paragraph{Lookahead Trees}

Some older approaches to greedy decision tree optimization consider multiple levels of splits before selecting the best split at a given iteration
\citep{Norton1989GeneratingBD}. 
That is, unlike the other greedy approaches, these approaches do not pick the split that optimizes a heuristic immediately. Instead, they pick a split that sets up the best possible heuristic value on the following split.

These approaches still focus on locally optimizing a heuristic measure that is not necessarily aligned with a global objective. By contrast, our method selects splits to directly optimize the sparse misclassification rate of the final tree. We globally optimize the search up to the specified lookahead depth, switching to heuristics only when deciding splits past our lookahead depth. In so doing, our method largely avoids the pathology noted in \citet{murthy1995lookahead}, who note cases where their own lookahead approach results in a substantially worse tree than one constructed with a standard greedy approach. For our method, it is provably impossible for a fully greedy entropy-based method with the same constraints as our approach to achieve a better training set objective than our approach. (See Theorem \ref{thm:relwork})

\paragraph{Other Hybrid Methods}
Several other approaches are compatible with branch and bound techniques. \citet{topk} seek to bridge the gap between greedy and optimal decision trees by selecting a fixed subset of the top $k$ feature splits for each sub-problem.
However, this framework does not explicitly account for sparsity. Further, the method is limited by using a \textit{global} setting for search precision: the approach considers the same number of candidate splits at each subproblem. As we show in our experiments, there is merit to tailoring the level of search precision to parts of the search space where it is most needed. The Blossom algorithm \citep{demirovic2023blossom} traverses a branch and bound dependency graph structure while using greedy heuristics to guide the search order. Relative to our approach, this algorithm optimizes from the bottom up, starting with greedy splits at each level, then optimizing the splits furthest from the root first. This choice guarantees eventual optimality while giving anytime behavior, but misses out on leveraging the property motivating this work -- that greedy splits are most detrimental near the top of the tree.
Like the approach of \citet{topk}, Blossom also does not account for sparsity. 

There are a few methods that use probabilistic search techniques to optimize trees. \citet{maptree} take a Bayesian approach, finding the maximum-a-posteriori tree by optimizing over an AND/OR graph, akin to the graph used in earlier branch-and-bound methods like that of \citet{gosdt}. Although their method demonstrates strong performance, their experimental results reveal that it is not responsive to sparsity-inducing hyperparameters -- accordingly, we found in our experiments that the method struggles to optimize for sparsity.

Recent work by \citet{thompson_aistats} devises a Monte Carlo Tree Search algorithm using Thompson sampling to enable online, adaptive learning of sparse decision trees. 
We show that our method achieves superior performance and sparsity on all datasets tested.

\section{Preliminaries}
\label{sec:prelim}

We consider a typical supervised machine learning setup, with a dataset $D = \{(\bx_i, y_i)\}_{i=1}^N$ sampled from a distribution $\mcD$, where $\bx_i \in \{0,1\}^K$ is a binary feature vector and $y_i\in \{0, 1\}$ is a binary label.\footnote{The discussions and methods in this paper can trivially be extended to multiclass problems; we focus our discussion and evaluation of the methodology on binary labels.} Let $\mcF$ be the set of features. Define \(D(f)\) as the subset of \(D\) consisting of all samples where feature \(f \in \mathcal{F}\) is 1 (and \(D(\bar{f})\) as the subset where feature $f$ is 0). Let \(D^+\) and \(D^-\) denote the set of examples with positive and negative labels, respectively. 
\paragraph{Node specific notation}
Let $D_t$ be the support set of node $t$ in a tree (i.e., the set of training examples assigned to this node); we call each $D_t$ a \textit{subproblem}. 
Let $f_t \in \mcF$ be the feature we split on at $t$. Let $D_t(f_t)$ and $D_t(\bar{f_t})$ be the support sets of the children of $t$. Unless stated otherwise, a greedy split at node $t$ chooses the feature $f$ that maximizes the information gain, which is equivalent to solving:
\[
      f_t = \min_{f\in \mcF} \frac{|D_t(f)|}{|D_t|} H\Bigg(\frac{|D_t^+(f)|}{|D_t(f)|}\Bigg)
    + \frac{|D_t(\bar{f})|}{|D_t|}H\Bigg(\frac{|D_t^+(\bar{f})|}{|D_t(\bar{f})|}\Bigg)
\]
with entropy $H(p) = -p\log p - (1-p)\log(1-p)$.
\paragraph{Tree specific notation}
We now briefly discuss sparse greedy and optimal trees.
We define $T_g(D, d, \lambda)$ to be a decision tree of depth at most $d$ trained greedily on $D$ with sparsity penalty $\lambda$. Intuitively, this sparse greedy algorithm will make a split at a node only when the gain in overall accuracy is greater than $\lambda$. Algorithm \ref{alg:greedy} in the Appendix illustrates this procedure. Modern methods such as \citet{gosdt, gosdt_guesses}, on the other hand, find a tree $T$ in the space of decision trees $\mathcal{T}$ that solves the following optimization problem:
\begin{align}
    &\mcL^*(D, d, \lambda) = \min_{T \in \mathcal{T}} L(T, D, \lambda) \textrm{ s.t.  depth$(T)$ $\leq d$}
    \label{eqn:obj} \\
    &= \min_{T \in \mathcal{T}} \sum_{i=1}^{|D|}\frac{1}{N}\Big({l\big(T(\bx_i), y_i\big) + \lambda S(T)\Big)} \textrm{s.t. depth$(T)$ $\leq d$} \nonumber
\end{align}
where $L(T, D, \lambda)$ is the regularized loss of tree $T$ on dataset (or data subset) $D$, $S(T)$ is the number of leaves in $T$, $\ell(T(\bx), y)$ is the loss incurred by $T$ in its prediction on $\bx$ (for this paper, we set $\ell$ to be the 0-1 loss), and $N$ is the global dataset size. 
As discussed in Section \ref{sec:bnb}, the fastest contemporary methods solve this problem using a branch-and-bound approach~\citep{costa2023recent, gosdt, murtree,  gosdt_guesses}.

\paragraph{Rashomon Sets} Our work is motivated by the properties of near-optimal decision trees and allows for scalable approximation of that set. 
\citet{xin2022treefarms} define the Rashomon set, denoted by $\mathcal{R}(D,\lambda, \epsilon,d)$, as the collection of all trees whose objective is within $\epsilon$ of the minimum value in Equation \ref{eqn:obj}. Formally: 
\begin{align} 
\mathcal{R}(D,\lambda, \epsilon,d) = \{T \in \mathcal{T}&: L(T,D, \lambda)\leq \mathcal{L}^*(D,d,\lambda) +\epsilon \nonumber \\ \ \ \land \ &\textrm{depth}(T) \leq d\}. 
\end{align} 
In Section \ref{sec:characterization_of_near_optimal_trees}, we use Rashomon sets to investigate properties of near-optimal trees.

Rashomon sets can be used for a range of downstream tasks \cite{RudinEtAlAmazing2024}; one crucial task is the measurement of variable importance over a set of near-optimal models instead of only for a single model \cite{donnelly2023the, fisher2018model}. Reliable variable importance measures in this setting rely on minimal feature selection prior to computing the Rashomon set and minimal constraints on the tree's depth to allow high-order interactions. Our approach can be used to accelerate the computation of a Rashomon set, supporting the feasibility of these approaches. 

\paragraph{Branch and Bound}
Given a depth budget $d$, branch and bound with a sparsity penalty \citep{gosdt, gosdt_guesses} finds the optimal loss $\mathcal{L}^*(D,d,\lambda)$ that minimizes Equation \ref{eqn:obj}. 

The key insight behind branch and bound is that the optimal solution for dataset $D$ at depth $d'$ has a dependency on the optimal solution for datasets $D(f)$ and $D(\bar{f})$ at depth $d'-1$, for each $f \in \mcF$. Starting from the root, branch and bound algorithms consider different candidate features, $f$, on which to split in the process of determining the objective. As candidates are considered, we identify the subproblems we encounter by the subset of data they relate to and their remaining depth. We track current upper and lower bounds of subproblems in order to prune parts of the search space as we explore it. In particular, if our lower bounds on $ \mcL^*(D_t(f_1), d^\prime-1, \lambda)$ and $\mcL^*(D_t(\bar{f_1}), d^\prime-1, \lambda)$ sum to a larger value than the sum of upper bounds on $\mcL^*(D_t(f_2), d^\prime-1, \lambda)$ and $\mcL^*(D_t(\bar{f_2}), d^\prime-1, \lambda)$, for example, then we have proven that $f_1$ is not the minimizing split for dataset $D$.

$\mcL(D_t, d', \lambda)$ can always start with an upper bound of $ub = \lambda + \min\Big(\frac{|D_t^-|}{|D_t|}, \frac{|D_t^+|}{|D_t|}\Big)$. A universal lower bound is $\lambda$. To get a tighter lower bound, if $d' > 0$, the lower bound can start at $\min(ub, 2 \lambda)$, since either $\mcL(D_t, d', \lambda) = ub$, or the objective will be the sum of two other $\mcL$ calls, both of which must necessarily have cost at least $\lambda$. 
These upper and lower bounds are then updated as we explore a graph structure containing these subproblems. Once these bounds have converged, and we know the value of $\mcL(D, d', \lambda)$ for the whole dataset $D$, we can extract the optimal tree by simply tracking the feature $f$ that leads to the optimal score for $D$ and then successively track the splits for the optimal value with respect to $D(f)$ and $D(\bar{f})$, and so on. 

\paragraph{Discretization}
Our algorithm will assume feature vectors to be binary, i.e., $\bx_i \in \{0,1\}^K$. Real-world datasets often have features that require discretization to fit our setting. While some methods preserve optimality (e.g., splitting at the mean between unique values in the training set), others such as bucketization \citep[described and proven to be suboptimal in][]{gosdt}, binning into quantiles, and feature engineering reduce the search space at the cost of optimality. In our experiments, we use threshold guessing \citep{gosdt_guesses}, which sacrifices optimality with respect to a real-valued dataset but maintains theoretical and empirical guarantees relative to a reference decision tree ensemble. 

\section{Algorithm Motivation}
A key motivating property of SPLIT is that we can find high quality trees even when splitting greedily far from the root of the tree. To support this intuition, we empirically investigate how frequently near optimal trees behave greedily far from the root. To do so, we first generate the Rashomon set of decision trees for various values of sparsity penalty $\lambda$ and Rashomon bound $\epsilon$.  Let $T \in \mathcal{R}(D, \lambda, \epsilon, d)$ be a tree in the Rashomon set, and let $n \in T$ be any node in $T$. Then, we compute the fraction of all nodes at a given level $\ell \leq d$ (where level $0$ corresponds to the root) that were greedy (by which we mean that the split at this node in the tree is optimal with respect to information gain). This corresponds to the following proportion:
{
\begin{equation}
\label{eq:rset_ratio}
\frac{\sum\limits_{T\in \mcR(D, \lambda, \epsilon, d)} \sum\limits_{\textrm{$n$} \in T }\mathbbm{1}[\textrm{$n$ is greedy $\land$ level$(n) = \ell$}]}{\sum\limits_{T \in \mcR(D, \lambda, \epsilon, d)} \sum\limits_{\textrm{$n$} \in T }\mathbbm{1}[\textrm{level}(n) = \ell]}.
\end{equation}
}
Figure \ref{fig:greedy_heatmap_rset} shows the results of this investigation for $6$ different datasets for different values of $\epsilon$ and $\lambda$. We note that there is a general increase in percentage of greedy splits as one goes deeper in the tree.
\label{sec:characterization_of_near_optimal_trees}
\begin{figure}[H]
    \centering
    \includegraphics[width=\linewidth]{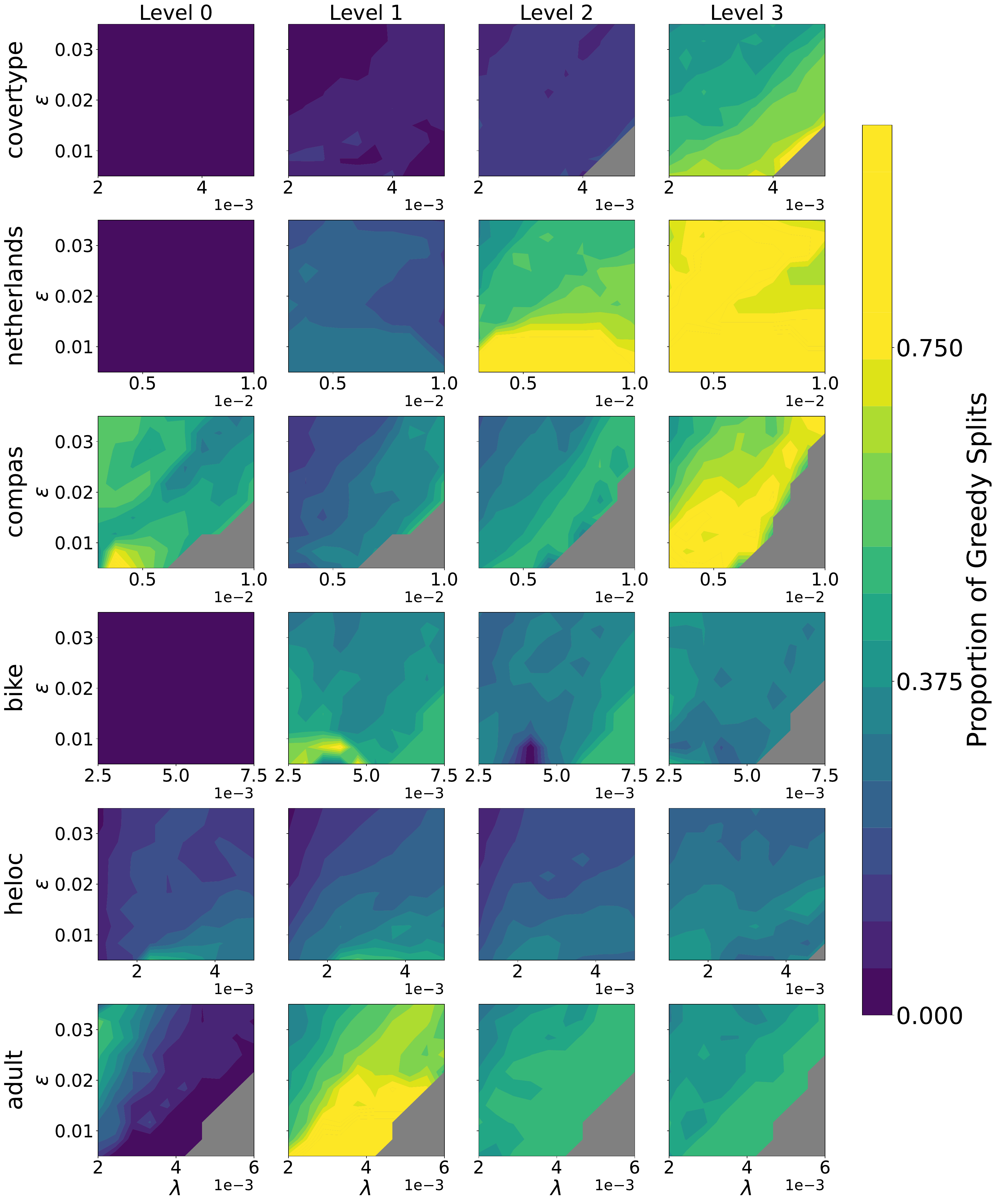}
    \caption{A heatmap of the proportion of splits of trees in the Rashomon set that are greedy, stratified by level, for different ($\lambda, \epsilon$) combinations. Only $4$ levels are shown as the $5^{th}$ level corresponds to the leaf. The greyed out regions in the bottom right of a plot represent ($\lambda, \epsilon$) for which the Rashomon set did not contain any trees of that depth. Generally, as we approach the leaves, the proportion of splits appearing in $\epsilon$-optimal trees become increasingly greedy. This is especially noticeable for the Netherlands, Covertype, and COMPAS datasets.}
    \label{fig:greedy_heatmap_rset}
\end{figure}
Additional motivating empirical results for using greedy splits far from the root of the tree are provided in Appendix \ref{sec:near_optimal_monotonic}.
\section{Algorithm Details}
\label{sec:bnb_lookahead}
\subsection{SParse Lookahead for Interpretable Trees (SPLIT)}

We now formalize our main algorithm, SPLIT, which takes as input a \textit{lookahead depth} parameter. This is the depth up to which a search algorithm optimizes over all combinations of feature splits, conditioned on splits beyond this depth behaving greedily. Our algorithm exploits the fact that sub-problems closer to the leaves exhibit smaller optimality gaps than those at the root, providing a mechanism to trade off among runtime, accuracy, and sparsity.

\paragraph{Formulating the optimization problem} 
Concretely, for a given depth budget $d$, lookahead depth $d_l < d$, and feature set $\mcF$, we \textbf{first} solve the following recursive equation:
{
\begin{align} 
\label{eqn:lookahead_eqn_og}
    &\mcL(D, d^\prime, \lambda) = \nonumber \\
    &\begin{cases}
    \begin{aligned}
        &\min\Bigg\{\lambda + \frac{|D^-|}{N}, \lambda + \frac{|D^+|}{N},\\& \min_{f \in \mcF}\Big\{L\Big(T_g\big(D(f),d^\prime, \lambda\big)\Big) + L\Big(T_g\big(D(\bar{f}), d^\prime, \lambda\big)\Big)\Big\}\Bigg\} \end{aligned} \ \ \\ \hspace{5cm} \text{if $d^\prime = d-d_l$} \\
    \begin{aligned}
    &\min\Bigg\{\lambda + \frac{|D^-|}{N},\lambda + \frac{|D^+|}{N},\\ &\min_{f \in \mcF}\Big\{\mcL\Big(D(f), d^\prime-1, \lambda\Big) + \mcL\Big(D(\bar{f}), d^\prime-1, \lambda\Big)\Big\}
        \Bigg\}\end{aligned} \ \ \\ \hspace{5cm}\text{if $d^\prime > d-d_l$.}
    \end{cases}
\end{align}
}
\begin{algorithm}[ht]
\caption{get\_bounds($D$, $d_l$, $d$, $d^\prime$, $N$) $\to$ lb, ub}
\label{alg::bounds}
\begin{algorithmic}[1]
\REQUIRE $D$, $d_l$, $d$, $d^\prime$, $N$ \COMMENT{\textcolor{commentgreen}{support, lookahead depth, current search depth, maximum search depth, size of full dataset in GOSDT call}}
\IF {$d^\prime = d_l$}
\STATE $T_g = $ Greedy$(D,d-d_l,\lambda)$ \COMMENT{\textcolor{commentgreen}{Find greedy tree rooted at $D$ (Alg \ref{alg:greedy} in the Appendix)}}
\STATE $S(T_g) = \# $ Leaves in $T_g $
\STATE $\alpha\gets \frac{1}{N}\sum_{(x,y) \in D} \mathbf{1}[y \neq T_g(x)] + \lambda S(T_g)$
\STATE $lb \gets \alpha$ 
\STATE $ub \gets \alpha$ \COMMENT{\textcolor{commentgreen}{subproblem solved because ub = lb}}
\ELSE[\textcolor{commentgreen}{use basic initial bounds}]
    \STATE $lb \gets 2\lambda$ 
    \STATE $ub \gets \lambda + \min \Big\{\frac{|D^-|}{N}, \frac{|D^+|}{N}\Big\}$ 
\ENDIF
\STATE \textbf{return} lb,ub \COMMENT{\textcolor{commentgreen}{Return Lower and Upper Bounds}}
\end{algorithmic}
\end{algorithm}
\begin{algorithm}[t]
\caption{SPLIT($\ell$, D, $\lambda$, $d_l$, $d$, $p$)}
\label{alg::lookahead}
\begin{algorithmic}[1]
\REQUIRE $\ell$, $D$, $\lambda$, $d_l$, $d$ , $p$ \COMMENT{\textcolor{commentgreen}{loss function, samples, regularizer, lookahead depth, depth budget, postprocess flag}} 
\vspace{-0.32cm}
\STATE ModifiedGOSDT = GOSDT reconfigured to use \textbf{get$\_$bounds} (Algorithm \ref{alg::bounds}) whenever it encounters a new subproblem
\item $t_{lookahead} = $ ModifiedGOSDT$(\ell, D, \lambda,d_l)$ \COMMENT{\textcolor{commentgreen}{Call ModifiedGOSDT with depth budget $d_l$}}
\IF[\textcolor{commentgreen}{Fill in the leaves of this prefix}]{$p$}
\FOR {leaf $u \in t_{lookahead} $}
    \STATE $d_{u} = $ depth of leaf 
    \STATE $D(u) = $ subproblem associated with $u$
    \STATE $\lambda_{u} = \lambda \frac{|D|}{|D(u)|}$ \COMMENT{\textcolor{commentgreen}{Renormalize $\lambda$ for the subproblem in question}}
    \STATE $t_u = $ GOSDT$(D(u), d - d_{u},\lambda_{u})$ \COMMENT{\textcolor{commentgreen}{Find the optimal subtree for $D(u)$}}
    \IF {$t_u$ is not a leaf}
    \STATE Replace leaf $u$ with sub-tree $t_u$
    \ENDIF
\ENDFOR
\ENDIF
\RETURN $t_{lookahead}$
\end{algorithmic}
\end{algorithm}
Where N is the size of the dataset at the root. We can constrain the search space to include only greedy trees past the lookahead depth by modifying the lower and upper bounds used in branch and bound (see Algorithm \ref{alg::bounds}). In particular, sub-problem nodes initialized at depths up to the lookahead depth are assigned initial lower and upper bounds equivalent to that in GOSDT \citep{gosdt} (see Section \ref{sec:bnb}). At the lookahead depth, however, the lower and upper bounds for a subproblem are fixed to be the loss of a greedy subtree trained on that subproblem. After these bound assignments, our algorithm uses the GOSDT algorithm with these new bounds to solve Equation \ref{eqn:lookahead_eqn_og} -- this is summarized by Lines $1$-$2$ in Algorithm \ref{alg::lookahead}. We defer more details of the GOSDT algorithm to Section \ref{sec:gosdt_details} in the Appendix.

\paragraph{Postprocessing with Optimal Subtrees}
Once we have solved Equation \ref{eqn:lookahead_eqn_og}, we do not need to use greedy sub-trees past the lookahead depth. We can improve our approach by replacing these subtrees with fully optimal decision trees. Lines $3$-$9$ in Algorithm \ref{alg::lookahead} illustrate this.
Thus, the performance of the lookahead tree with the aforementioned greedy subtrees is just an upper bound on the objective of the tree our method ultimately finds.

Note that the renormalization in line $6$ of Algorithm \ref{alg::lookahead} ensures that the $\lambda$ penalty stays proportional to the penalty for each misclassified point.
Our objective (Equation \ref{eqn:obj}) assigns a $\frac{1}{N}$ penalty for each misclassification, where $N$ is the size of the full dataset with which GOSDT was called. If the original dataset is $D$, then when we call GOSDT on any descendent subproblem $D(u)$, our penalty per misclassification goes up by a factor of $\frac{|D|}{|D(u)|}$.  We need to scale $\lambda$ appropriately to stay proportional to the original dataset $D$.


\subsection{LicketySPLIT: Polynomial-time SPLIT}\label{subsec:recursive}
We present a polynomial-time variant of SPLIT, called LicketySPLIT, in Algorithm \ref{alg::recursive_lookahead}. This method works by recursively applying SPLIT with lookahead depth 1. That is, we first find the optimal initial split for the dataset, given that we are fully greedy henceforth. Then, during postprocessing, instead of doing what SPLIT would do \textemdash running a fully optimal decision tree algorithm on the root's left and right subproblems \textemdash we run LicketySPLIT recursively on these two subproblems.
We stop considering further calls to LicketySPLIT for a subproblem if SPLIT returns a leaf instead of making splits (either due to the depth limit or $\lambda$).

\begin{algorithm}[ht]
\caption{LicketySPLIT($\ell$, $D$, $\lambda$, $d$)}\label{alg::recursive_lookahead}
\begin{algorithmic}[1]
\REQUIRE $\ell$, $D$, $\lambda$, $d$ \COMMENT{\textcolor{commentgreen}{loss function, samples, regularizer, full depth}}
\item $t_{lookahead} = $ SPLIT$(\ell, D, \lambda,1,d,0)$ \COMMENT{\textcolor{commentgreen}{Call SPLIT with lookahead depth $1$ and no post-processing}}
\IF{$t_\textrm{lookahead}$ is not a leaf}
\FOR {child $u \in t_\textrm{lookahead} $}
    \STATE $D(u) = $ subproblem associated with $u$
    \STATE $\lambda_{u} = \lambda \frac{|D|}{|D(u)|}$ \COMMENT{\textcolor{commentgreen}{Renormalize $\lambda$ for the subproblem in question}}
    \STATE $t_u = $ LicketySPLIT$(\ell, D(u), \lambda_{u}, d - 1)$
    \STATE Replace $u$ with subtree $t_u$
\ENDFOR
\ENDIF
\RETURN $t_\textrm{lookahead}$
\end{algorithmic}
\end{algorithm}

\subsection{RESPLIT: Rashomon set Estimation with SPLIT}

At the cutting edge of compute requirements for decision tree optimization is the computation of Rashomon sets of decision trees. \citet{xin2022treefarms} compute a Rashomon set of all near-optimal trees, based on the GOSDT algorithm \cite{gosdt}. This task generates an extraordinary number of trees and has high memory and runtime costs. To make this tractable, \citet{xin2022treefarms} leverage depth constraints and feature selection from prior work to reduce the depth and set of features considered \citep{gosdt_guesses}. While necessary for scalability, this can prevent exploration of near-optimal models across all features or at greater decision tree depths. Both factors are relevant for work on variable importance based on Rashomon sets \citep{fisher2018model, dong2020exploring, donnelly2023the}. We leverage SPLIT as a way to dramatically improve scalability of Rashomon set computation, reliably approximating the full Rashomon set and allowing feasible exploration while relaxing or removing depth and feature constraints.

Our algorithm, RESPLIT, is described in Appendix \ref{sec:resplit_alg}; it first leverages SPLIT as a subroutine to obtain a set of prefix trees such that completing them greedily up to the depth budget would result in an $\epsilon$ approximation of the optimal solution to Equation \ref{eqn:lookahead_eqn_og}. At each leaf of each prefix tree, it calls TreeFARMS \cite{xin2022treefarms} to find a large set of shallow subtrees that are at least as good as being greedy, yielding an approximate Rashomon set computed much faster than state of the art. We also show a novel indexing mechanism to query RESPLIT trees in Appendix \ref{sec:resplit_indexing}.
\begin{figure*}[ht]
    \centering
    \includegraphics[width=0.74\linewidth]{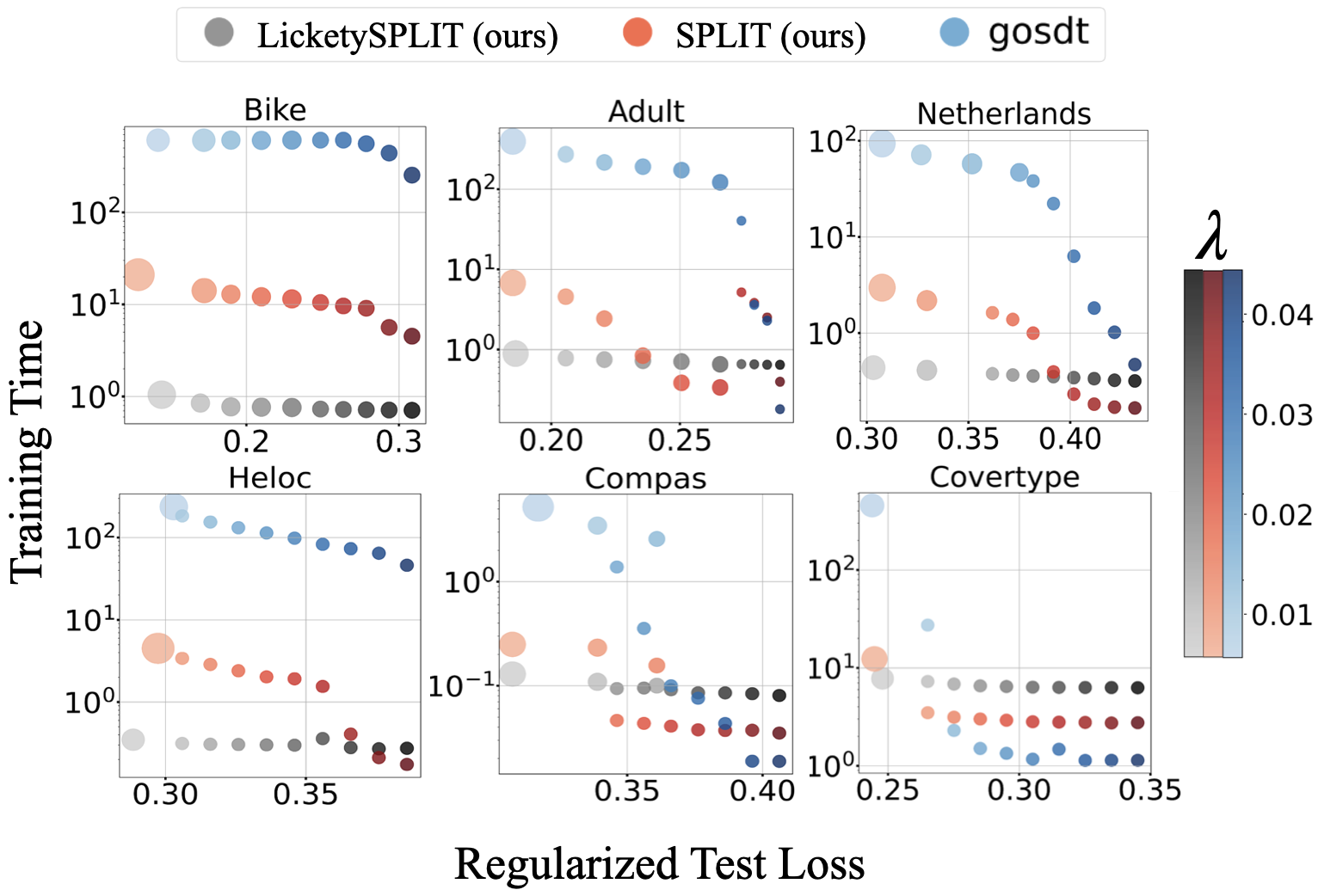}
    \caption{Regularized test loss vs training time (in seconds) for GOSDT \citep{gosdt_guesses} vs our algorithms. The size of the points indicates the number of leaves in the resulting tree. Both SPLIT and LicketySPLIT are much faster for most values of sparsity penalty $\lambda$, with the only potential slowdown being in the sub-second regime due to overhead costs. }
    \label{fig:gosdt_vs_lookahead}
\end{figure*}
\section{Theoretical Analysis of Runtime and Optimality}
\label{sec:theory}

We present theoretical results establishing the performance and scalability of our algorithms. All proofs, including additional lemmas not described below, are in Appendix Section \ref{sec:proofs}.
Even without the speedups discussed in Section \ref{subsec:recursive}, Algorithm \ref{alg::lookahead} is quite scalable. Theorem \ref{thm:runtime-lookahead} shows the asymptotic analysis of the algorithm, with and without caching. Note that the default behaviour of Algorithm \ref{alg::lookahead} is to cache repeated sub-problems. 

\begin{restatable}[Runtime Complexity of SPLIT]{theorem}{runtimelookahead}
\label{thm:runtime-lookahead}
    For a dataset $D$ with $k$ features and $n$ samples, depth constraint $d$ such that $d \ll k$, and lookahead depth $0 \leq d_l < d$, Algorithm \ref{alg::lookahead} has runtime $\mathcal{O}\big(n(d-d_l)k^{d_l+1}+ nk^{d-d_l}\big)$. If we cache repeated subproblems, the runtime reduces to $\mathcal{O}\Big(\frac{n(d-d_l)k^{d_l+1}}{d_l!} + \frac{nk^{d - d_l}}{(d-d_l)!}\Big)$. 
\end{restatable}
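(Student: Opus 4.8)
The plan is to bound the worst-case running time, i.e., the regime in which branch and bound prunes nothing, by accounting separately for the two stages of Algorithm~\ref{alg::lookahead}: the lookahead search (Lines~1--2), which runs \textsc{ModifiedGOSDT} up to depth $d_l$ and, via Algorithm~\ref{alg::bounds}, fits a greedy tree of remaining depth $d-d_l$ at every subproblem reached at depth $d_l$; and the postprocessing stage (Lines~3--9), which runs full \textsc{GOSDT} at each leaf of the resulting prefix. The basic counting tool is to organize the explored subproblems by their depth $j$ and exploit a partition property: for any fixed ordered sequence of $j$ split features, the $2^j$ sign combinations route each training point to exactly one subproblem, so the sizes $|D_t|$ of those subproblems sum to at most $n$. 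Since a root-to-node path uses distinct binary features, there are at most $k(k-1)\cdots(k-j+1) = O(k^j)$ such sequences, hence $\sum_{\text{depth-}j}|D_t| = O(n k^j)$.

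For the lookahead stage, evaluating all $k$ candidate splits at a subproblem of size $|D_t|$ costs $O(|D_t|\,k)$, so the splitting work across depth $j$ is $O\big(k \sum_{\text{depth-}j}|D_t|\big) = O(n k^{j+1})$; summing $j$ from $0$ to $d_l-1$ gives $O(n k^{d_l})$. The dominant cost is the greedy completion: a greedy tree of depth $d-d_l$ on an $m$-sample subproblem costs $O\big((d-d_l)\,m\,k\big)$, since each of its $d-d_l$ levels scans a partition of the data once per feature. Summing over the depth-$d_l$ frontier, where $\sum|D_t| = O(nk^{d_l})$, yields $O\big(n(d-d_l)k^{d_l+1}\big)$, which subsumes the $O(nk^{d_l})$ search term and gives the first summand.

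For postprocessing, I would first show that full \textsc{GOSDT} with depth budget $D'$ on an $m$-sample input costs $O(m k^{D'})$ by the identical level-by-level partition argument (internal splitting work $\sum_{j<D'} m k^{j+1} = O(mk^{D'})$, plus $O(mk^{D'})$ to score the leaves). The frontier leaves of the prefix partition $D$, so $\sum_u |D(u)| = n$; in the worst case---a complete prefix of depth $d_l$, which also maximizes the lookahead branching---each leaf sits at depth $d_l$ and is completed with budget $d-d_l$, so the total is $\sum_u O\big(|D(u)|\, k^{d-d_l}\big) = O(n k^{d-d_l})$, the second summand. The one subtlety here is that a leaf appearing at depth $d_u < d_l$ would invoke \textsc{GOSDT} with the larger budget $d-d_u$; I would argue that the complete-prefix configuration is the relevant worst case for the combined bound and note that shallower prefixes trade postprocessing depth against a smaller frontier.

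Finally, for the cached variant, the key change is that a cached subproblem is identified by the \emph{set} of features constraining it rather than the ordered sequence in which they were split. There are $\binom{k}{j} = \Theta(k^j/j!)$ such sets at depth $j$, so every ordered count $k^j$ above is replaced by $\binom{k}{j}$; in particular the depth-$d_l$ frontier count drops by a factor $d_l!$ and the deepest postprocessing level by a factor $(d-d_l)!$, converting the two summands into $\tfrac{n(d-d_l)k^{d_l+1}}{d_l!}$ and $\tfrac{nk^{d-d_l}}{(d-d_l)!}$. I expect the main obstacle to be making the counting bookkeeping fully rigorous under caching---confirming that the per-level data sums still telescope to $O\big(n\binom{k}{j}\big)$ once subproblems are deduplicated, and pinning down the exact factorial in the denominator (the leading level and the level just above it differ by a factor of the depth)---rather than the high-level structure, which is routine.
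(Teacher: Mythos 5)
Your proof follows essentially the same route as the paper's: the same two-stage decomposition (branch-and-bound lookahead with greedy completions at the depth-$d_l$ frontier, then optimal-subtree postprocessing), the same per-level counting argument $\sum_{\textrm{depth-}j}|D_t| = \mathcal{O}(nk^j)$ with greedy cost $\mathcal{O}(mk(d-d_l))$ per frontier subproblem and $\mathcal{O}(mk^{d-d_l})$ per postprocessed leaf, and the same caching argument that identifies subproblems by unordered feature sets to gain the $d_l!$ and $(d-d_l)!$ factors. If anything, you are slightly more careful than the paper, which silently assumes every prefix leaf sits at depth exactly $d_l$ and does not discuss either the lookahead search's own splitting work or the shallow-leaf subtlety you flag (though your claim that the complete prefix is the worst case for that subtlety would itself need a real argument, since a leaf surviving at depth $d_u < d_l$ triggers a GOSDT call with the larger budget $d-d_u$).
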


This algorithm is linear in sample size and, because $d_l < d$ and $d - d_l < d$, is exponentially faster than a globally optimal approach, which searches through $\mathcal{O}((2k)^d)$ subproblems in the worst case. 

Corollaries \ref{corollary:depth-runtime} and \ref{corollary:savings} show that, compared to globally optimal approaches, we see substantial improvements in runtime when lookahead depth is around half the global search depth.

\begin{restatable}[Optimal Lookahead Depth for Minimal Runtime]{corollary}{optimaldepth}
\label{corollary:depth-runtime}
The optimal lookahead depth that minimizes the asymptotic runtime of Algorithm \ref{alg::lookahead} is $d_l = \frac{(d-1)}{2}$ for large $k$, regardless of whether subproblems are cached. 
\end{restatable}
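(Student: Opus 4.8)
The plan is to treat the lookahead depth $d_l$ as a continuous parameter and, for large $k$, identify the dominant term in each of the two runtime expressions from Theorem~\ref{thm:runtime-lookahead}. The central observation is that both expressions are sums of exactly two terms whose $k$-dependence is $k^{d_l+1}$ and $k^{d-d_l}$, respectively; every remaining factor ($n$, the polynomial factor $d-d_l$, and in the cached case the factorials $d_l!$ and $(d-d_l)!$) is independent of $k$. Hence, as $k \to \infty$, the runtime in either case is $\Theta\big(k^{\max(d_l+1,\,d-d_l)}\big)$ up to a $k$-independent constant, so minimizing the asymptotic runtime in $k$ reduces to minimizing the exponent $\max(d_l+1,\,d-d_l)$.

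First I would define the two exponents $e_1(d_l) = d_l + 1$ and $e_2(d_l) = d - d_l$. Since $e_1$ is strictly increasing in $d_l$ while $e_2$ is strictly decreasing, their pointwise maximum $M(d_l) = \max\big(e_1(d_l),\,e_2(d_l)\big)$ is a convex, V-shaped, piecewise-linear function, which attains its minimum exactly where the two branches cross, i.e.\ where $e_1 = e_2$. Solving $d_l + 1 = d - d_l$ gives $2d_l = d - 1$, hence $d_l = \frac{d-1}{2}$, at which point $M = \frac{d+1}{2}$. For $d_l < \frac{d-1}{2}$ the second term dominates and $M = d - d_l$ can be lowered by increasing $d_l$; for $d_l > \frac{d-1}{2}$ the first term dominates and $M = d_l + 1$ can be lowered by decreasing $d_l$. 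This confirms that $d_l = \frac{d-1}{2}$ is the unique minimizer of the leading exponent.

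Finally I would argue that the conclusion is identical with or without caching. The only difference between the two formulas is the $k$-independent factorial prefactors $\frac{1}{d_l!}$ and $\frac{1}{(d-d_l)!}$ multiplying the two terms. Because these do not alter the power of $k$, they cannot change which term dominates for large $k$ away from the balance point, and at the balance point $d_l = \frac{d-1}{2}$ the two terms already share the same power of $k$, so the leading order remains $\Theta\big(k^{(d+1)/2}\big)$ regardless. Thus the optimal lookahead depth is $d_l = \frac{d-1}{2}$ in both regimes. One mild subtlety to flag is that this is a continuous relaxation: for integer depth budgets one rounds $\frac{d-1}{2}$ to the nearest feasible integer, which affects the constant but not the asymptotic order. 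The main (and essentially only) step requiring care is verifying that the factorial prefactors in the cached case are genuinely independent of $k$ and therefore asymptotically irrelevant to the exponent being minimized.
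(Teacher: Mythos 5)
Your proof is correct, but it takes a genuinely different and far more elementary route than the paper's. The paper treats the runtime expression as an exact function of $d_l$ for fixed $k$, differentiates it, solves the stationarity condition via the Lambert $W$ function (non-cached case) or the Gamma/digamma functions (cached case), sandwiches the resulting minimizer between explicit upper and lower bounds, and verifies a second-order condition — concluding that the exact continuous minimizer satisfies $\frac{d-1}{2} - \mathcal{O}(1/\sqrt{\log k}) \leq d_l^* \leq \frac{d-1}{2} + \mathcal{O}(\log\log k/\log k)$ and hence converges to $\frac{d-1}{2}$. You instead observe that every factor other than the powers of $k$ (namely $n$, $d-d_l$, and the factorials) is $k$-independent, so the asymptotic runtime is $\Theta\bigl(k^{\max(d_l+1,\,d-d_l)}\bigr)$, and minimizing the convex piecewise-linear exponent $\max(d_l+1, d-d_l)$ immediately yields the balance point $d_l = \frac{d-1}{2}$; the caching case is then free, since the factorials cannot alter which exponent dominates. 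This exponent-balancing argument is a clean, fully valid proof of the corollary as stated (one only needs, as you note, that away from the balance point the exponent gap eventually overwhelms any $k$-independent constant, which is immediate since $d_l$ ranges over a finite set). What the paper's heavier machinery buys is quantitative finite-$k$ information: an explicit characterization of the optimizer at finite $k$ and its rate of convergence to $\frac{d-1}{2}$, which the paper uses (e.g., in its Figure comparing theory to observed runtimes at $k=20$, $d=5$, where the true minimizer is slightly below $2$) and which also disambiguates the constant-level comparison between the two integer roundings when $d$ is even — something your argument deliberately leaves to the constants. What your approach buys is transparency: the result and the reason for it (balancing the cost of the lookahead search against the cost of the post-lookahead completion) are visible in three lines, with no special functions.
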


\begin{restatable}[Runtime Savings of SPLIT Relative to Globally Optimal Approaches]{corollary}{runtimesavings}
\label{corollary:savings}
Asymptotically, under the same conditions as Theorem \ref{thm:runtime-lookahead} and with caching repeated subproblems, Algorithm \ref{alg::lookahead} saves a factor of $\mathcal{O}\Big(k^{\frac{d-1}{2}}\Big(\frac{d}{2}\Big)!\Big)$ in runtime relative to globally optimal approaches (e.g., GOSDT).
\end{restatable}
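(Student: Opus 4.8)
The plan is to chain the two preceding results: take the cached runtime of SPLIT from Theorem~\ref{thm:runtime-lookahead}, specialize the lookahead depth to the minimizer from Corollary~\ref{corollary:depth-runtime}, and then divide by the cost of a globally optimal search. First I would substitute $d_l = \frac{d-1}{2}$ into the cached bound $\mathcal{O}\big(\frac{n(d-d_l)k^{d_l+1}}{d_l!} + \frac{nk^{d-d_l}}{(d-d_l)!}\big)$. The key simplification is that this choice makes $d_l+1 = d-d_l = \frac{d+1}{2}$, so both summands carry the \emph{same} power $k^{(d+1)/2}$ of the feature count; the two terms then differ only in their $d$-dependent prefactors, which is what ultimately produces a clean factorial expression.

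Next I would identify the dominant summand. Writing the two terms as $\frac{n\,\frac{d+1}{2}\,k^{(d+1)/2}}{((d-1)/2)!}$ and $\frac{n\,k^{(d+1)/2}}{((d+1)/2)!}$ and using $((d+1)/2)! = \frac{d+1}{2}((d-1)/2)!$, their ratio is exactly $(\frac{d+1}{2})^2$, so the first term dominates and the cached SPLIT runtime is $\Theta\!\big(\frac{n\,\frac{d+1}{2}\,k^{(d+1)/2}}{((d-1)/2)!}\big)$. For the comparison baseline I would use the globally optimal DPBnB cost: as noted after Theorem~\ref{thm:runtime-lookahead}, such a search explores $\mathcal{O}((2k)^d)$ subproblems in the worst case, giving runtime $\mathcal{O}(n(2k)^d)$. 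Forming the ratio of the globally optimal cost to the cached SPLIT cost yields $\frac{2^d\,k^{(d-1)/2}\,((d-1)/2)!}{(d+1)/2}$. Finally, treating $d$ as fixed while $k\to\infty$ (consistent with the standing assumption $d \ll k$), the factors $2^d$ and $\frac{1}{(d+1)/2}$ are constants, and bounding $((d-1)/2)! \le (d/2)!$ collapses the expression to the claimed savings of $\mathcal{O}\big(k^{(d-1)/2}(d/2)!\big)$.

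The main obstacle is reconciling the runtime conventions rather than the algebra: the cached SPLIT bound benefits from the $1/d_l!$ reduction due to repeated subproblems, whereas the globally optimal figure is the worst-case $(2k)^d$ count, so one must be explicit that the factorial $(d/2)!$ in the savings arises precisely from the denominator $((d-1)/2)!$ that caching introduces into SPLIT's cost but not into the worst-case optimal search. A secondary bit of care is the parity of $d$, since $d_l = \frac{d-1}{2}$ is an integer only for odd $d$; for even $d$ one rounds $d_l$ and checks that the factorial ratios change only by constant (in $k$) factors, which are absorbed into the $\mathcal{O}(\cdot)$. The Stirling-type step $((d-1)/2)! \le (d/2)!$ and the dropping of $2^d$ and polynomial-in-$d$ prefactors are routine once the fixed-$d$, large-$k$ regime is fixed.
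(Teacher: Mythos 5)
Your proposal is correct and follows essentially the same route as the paper's proof: substitute the minimizing lookahead depth $d_l = \frac{d-1}{2}$ (from Corollary~\ref{corollary:depth-runtime}) into the cached runtime of Theorem~\ref{thm:runtime-lookahead}, divide the worst-case globally optimal cost by it, and simplify the ratio to $\mathcal{O}\big(k^{\frac{d-1}{2}}\big(\frac{d}{2}\big)!\big)$ in the fixed-$d$, large-$k$ regime. The only (immaterial) differences are that the paper uses the tighter baseline $\mathcal{O}(nk^d)$ for the optimal search, where your $2^d$ factor is already absorbed because each sample follows exactly one branch per split, and that the paper carries both summands of SPLIT's runtime through the algebra rather than isolating the dominant one as you do.
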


Theorem \ref{thm:runtime-greedy} describes the runtime complexity of our LicketySPLIT method from Section \ref{subsec:recursive}, showing that it can be even faster than Algorithm \ref{alg::lookahead} (indeed, achieving low-order polynomial runtime). 

\begin{restatable}[Runtime Complexity of LicketySPLIT]{theorem}{runtimegreedylookahead}
\label{thm:runtime-greedy}
    For a dataset $D$ with $k$ features and $n$ samples, and for depth constraint $d$, Algorithm \ref{alg::recursive_lookahead} has runtime $O(nk^2d^2)$.
\end{restatable}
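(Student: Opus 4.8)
The plan is to set up a recurrence for the total work done by LicketySPLIT and solve it. Let $R(n, d)$ denote the runtime of Algorithm \ref{alg::recursive_lookahead} on a subproblem with $n$ samples and remaining depth budget $d$. The algorithm first invokes $\mathrm{SPLIT}$ with lookahead depth $1$ and no postprocessing, then recurses on the two children. By Theorem \ref{thm:runtime-lookahead} with $d_l = 1$ and a full depth of $d$ (so the internal greedy completions run to depth $d-1$ below the single chosen split), the cost of this one $\mathrm{SPLIT}$ call is $\mathcal{O}(nk^2 + nk^{d-1})$; the dominant term here is the greedy completion, but the \emph{key observation} I would make is that for the purpose of \emph{LicketySPLIT} we only need the optimal first split plus its two child subproblems, so the relevant per-node cost is really the depth-$1$ optimization, which scans all $k$ features and, for each, evaluates a greedy completion to bound its quality. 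I would therefore first argue carefully that the non-recursive work at each LicketySPLIT node is $\mathcal{O}(nk^2 d)$: we consider $k$ candidate splits, and evaluating the greedy lookahead objective for each candidate costs $\mathcal{O}(nkd)$ (a greedy tree of depth $d$ touches each of the $n$ points at each of $d$ levels, choosing among $k$ features per node).

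Next I would account for the recursion. The two children of a node partition its $n$ samples, i.e. $n = n_{f} + n_{\bar f}$, and each is called with depth budget $d-1$. Writing the recurrence,
\[
R(n,d) \;=\; \mathcal{O}(nk^2 d) \;+\; R(n_f, d-1) \;+\; R(n_{\bar f}, d-1),
\]
with $R(n, 0) = \mathcal{O}(1)$ (a leaf). The structure to exploit is that at every fixed recursion level the subproblems are \emph{disjoint} in their sample sets, so the total number of samples summed across all nodes at any single level is at most $n$. Hence the aggregate non-recursive work performed across an entire level is bounded by $\mathcal{O}(nk^2 d)$, independent of how many nodes sit at that level, because the per-sample factor $k^2 d$ is uniform and the sample counts add up to at most $n$.

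I would then sum over levels. There are at most $d$ recursion levels (depth decreases by one per recursive call, terminating at $0$), each contributing $\mathcal{O}(nk^2 d)$ aggregate work, giving a total of $\mathcal{O}(nk^2 d \cdot d) = \mathcal{O}(nk^2 d^2)$, which is the claimed bound. A clean way to make the "sum of sample counts per level is $\le n$" argument rigorous is induction on $d$: assuming each child costs $\mathcal{O}(n_{\text{child}} k^2 (d-1)^2)$, the two child costs sum to at most $\mathcal{O}(n k^2 (d-1)^2)$ since $n_f + n_{\bar f} \le n$, and adding the $\mathcal{O}(nk^2 d)$ root work keeps the total at $\mathcal{O}(nk^2 d^2)$.

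The main obstacle I anticipate is pinning down the per-node cost correctly, specifically justifying the factor of $d$ inside the greedy-evaluation step and confirming that the depth-$1$ $\mathrm{SPLIT}$ call does not secretly incur the $nk^{d-1}$ term that Theorem \ref{thm:runtime-lookahead} would naively suggest. The resolution is that LicketySPLIT uses $\mathrm{SPLIT}$ only to \emph{select} the single best root split under greedy completions; the greedy completion is used purely as a bound/evaluation and need not be materialized or recursed into, so its cost is the greedy-construction cost $\mathcal{O}(nkd)$ per candidate feature rather than the full optimal-search cost. Making this accounting explicit, and ensuring the telescoping over levels genuinely relies on the disjointness of sample sets rather than on any bound on the branching factor, is where the care is needed; the final summation is then routine.
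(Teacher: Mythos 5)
Your proposal is correct and follows essentially the same route as the paper's proof: the same per-node cost of $\mathcal{O}(nk^2d)$ (scanning $k$ candidate splits, each evaluated by greedy completions costing $\mathcal{O}(nkd)$ across the two children), the same recurrence recursing only on the chosen split's children, and the same induction exploiting $n_f + n_{\bar f} \le n$ to conclude $\mathcal{O}(nk^2d^2)$. Your resolution of the apparent $nk^{d-1}$ term is also the right one --- that term in Theorem \ref{thm:runtime-lookahead} is the optimal-subtree postprocessing cost, which LicketySPLIT skips by calling SPLIT with the postprocess flag off.
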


We can thus use Algorithm \ref{alg::recursive_lookahead} to leverage a recursive search while remaining comfortably polynomial. This is a dramatic improvement to asymptotic scalability relative to globally optimal decision tree construction methods, which solve an NP-hard problem. 

\begin{restatable}[SPLIT Can be Arbitrarily Better than Greedy]{theorem}{arbitrarilybetter}
    \label{thm::arbitrarily_better}
    For every $\epsilon > 0$ and depth budget $d$, there exists a data distribution $\mathcal{D}$ and sample size $n$ for which, with high probability over a random sample $S \sim \mathcal{D}^n$, Algorithm \ref{alg::lookahead} with $d_l = \frac{d-1}{2}$ achieves accuracy at least $1-\epsilon$ but a pure greedy approach achieves accuracy at most $\frac{1}{2} + \epsilon$.
\end{restatable}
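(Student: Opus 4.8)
The plan is to exhibit a single hard distribution built around a parity (XOR) function, on which a greedy learner is blind but SPLIT's lookahead is not. Concretely, I would let $\mcD$ be the uniform distribution over $\{0,1\}^k$ with deterministic label $y = x_1 \oplus x_2$, where the features $x_3,\dots,x_k$ are independent fair coins (``dummies'') carrying no information about $y$; I take $k$ large (so that $d \ll k$ holds), a sparsity penalty $\lambda \in (0,1/6)$, and a sample size $n$ to be fixed last. Since $d_l = (d-1)/2 \ge 1$ whenever $d \ge 3$ (and the XOR is realizable within depth $2 \le d$), this is the regime of interest; the degenerate small-$d$ cases where $d_l = 0$ are excluded because there SPLIT coincides with greedy.

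For the SPLIT side, I would show that the depth-$2$ tree which splits on $x_1$ and then on $x_2$ in each child lies in the feasible class searched by Algorithm~\ref{alg::lookahead} for any $d_l \ge 1$ and has exactly zero training and population error, so it is the unique minimizer of SPLIT's regularized objective (every tree of depth $\le 1$ misclassifies a $\tfrac12$ fraction, costing far more than the $O(\lambda)$ leaf penalty once $\lambda < 1/6$). The one sub-step to verify is that the greedy completion beneath the optimized root split on $x_1$ actually recovers $x_2$: inside each child $y$ equals $x_2$ deterministically, so $x_2$ has maximal information gain and an accuracy gain of about $\tfrac12 > \lambda$, passing the split gate. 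Thus, with high probability over the sample (the only randomness being that each child is non-degenerate, which holds once $n$ is large), SPLIT outputs the XOR tree and attains accuracy $1 \ge 1-\epsilon$.

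For the greedy side, the key observation is that each feature is marginally independent of $y$: $x_1 \perp y$ and $x_2 \perp y$ because the XOR is only revealed by the \emph{pair}, and every dummy is independent of $y$ by construction. Hence at the root every single feature has population information gain and accuracy gain exactly $0$, and by concentration (Hoeffding over the $O(k)$ features) all empirical gains fall below $\lambda$ with high probability for $n$ large. A greedy learner using the sparsity gate therefore makes no split at the root and returns the majority leaf, i.e.\ accuracy exactly $\tfrac12 \le \tfrac12 + \epsilon$. For an ungated greedy learner that always splits on the top empirical information gain, I would instead argue it never decodes the parity: on any node reached by splitting on dummies alone, $x_1$ and $x_2$ are statistically indistinguishable from additional dummies with respect to $y$, so each is the empirical argmax with probability $O(1/k)$; a union bound over the at most $2^{d+1}$ reachable nodes (a constant, independent of $n$) shows that with high probability greedy splits only on dummies, every leaf remains a balanced mixture of the two labels, and the resulting accuracy is $\tfrac12$.

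The main obstacle is precisely this last point for the ungated greedy learner -- ruling out a ``lucky'' noise-driven split on $x_1$ (or $x_2$) at some deep node, after which the complementary variable would let greedy recover the label. I would control it via the exchangeability of $x_1,x_2$ with the dummies relative to $y$ (their empirical-gain statistics match a dummy's, since all are marginally independent of $y$ along any dummy-only path), combined with the union bound over the constantly-many reachable nodes and the freedom to take both $k$ and $n$ as large as needed. The remaining work is routine bookkeeping: fixing $\lambda$ and then $n$ so that, simultaneously and with high probability, SPLIT's objective provably selects the XOR tree while every greedy threshold fails; the concentration inequalities involved are standard.
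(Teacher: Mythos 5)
Your proposal is correct in substance, but it takes a genuinely different route from the paper. The paper's proof imports the Tribes-plus-Majority construction of \cite{topk, blanc2019top}: the label equals $\mathrm{Tribes}_{d_l}$ on the first block with probability $1-\epsilon$ and $\mathrm{Majority}$ on a second block with probability $\epsilon$. There, greedy fails \emph{robustly}: the Majority block acts as a decoy whose features have genuinely larger correlation with the label than the Tribes features (correlation $\Omega(1/\sqrt{u})$ versus $O(\log d_l / d_l)$), so greedy actively prefers the decoys no matter how ties are broken, and the accuracy bound $\tfrac12+\epsilon$ follows from the near-balance of Tribes. The cost is that the argument is asymptotic in $d$ (the $o(1)$ balance terms and the window $\Omega(d\log d)\le u \le O(d_l^2/\log^2 d_l)$ require $d_l$ large for a given $\epsilon$), and it leans on external lemmas. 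Your pure XOR-plus-dummies construction is elementary, self-contained, non-asymptotic in $d$ (any odd $d\ge 3$ with $\lambda$ small), and yields exact population accuracies $1$ versus $\tfrac12$; it is in fact closer in spirit to the paper's own proofs of the LicketySPLIT and RESPLIT analogues (Theorems \ref{thm::arbitrarily_better_recursive} and \ref{thm::arbitrarily_better_treefarms}), which also use parity, though notably those proofs \emph{retain} an $\epsilon$-weight Majority decoy block precisely to give greedy nonzero-correlation features to chase, avoiding the issue you must confront. That issue is the one place where your ``routine bookkeeping'' is least routine: in your construction every feature has exactly zero population gain at the root, so the ungated greedy's behavior is entirely noise-driven, and you must control the empirical argmax. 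Your conditional-exchangeability idea does work — given the $y$-labels inside a node reached by dummy-only splits, $x_1$ and the unsplit dummies are conditionally i.i.d.\ uniform (and $x_2$ is determined by $x_1$ and $y$), so each of $x_1,x_2$ wins the argmax with probability roughly $2/(k-d)$ — but two refinements are needed for rigor: the union bound must be organized as a sequential (filtration) argument over the data-dependent tree rather than over a fixed set of ``reachable nodes,'' and ties among the discrete empirical gains must be handled (ties have vanishing probability only once $n$ is taken large after $k$). With those repairs, and choosing $k$ first and then $n$, your argument goes through and actually proves a slightly cleaner statement than the paper's for small fixed depths.
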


Theorem \ref{thm::arbitrarily_better} shows that Algorithm \ref{alg::lookahead} can arbitrarily outperform greedy methods in accuracy, even when we choose its minimum runtime configuration of $d_l = \frac{d-1}{2}$. We prove a similar claims for LicketySPLIT and RESPLIT in the appendix (see Theorems \ref{thm::arbitrarily_better_recursive} and \ref{thm::arbitrarily_better_treefarms}). 

\section{Experiments}
\label{sec:experiments}

\begin{figure*}[htbp]
    \centering
        \centering
        \includegraphics[width=0.675\linewidth]{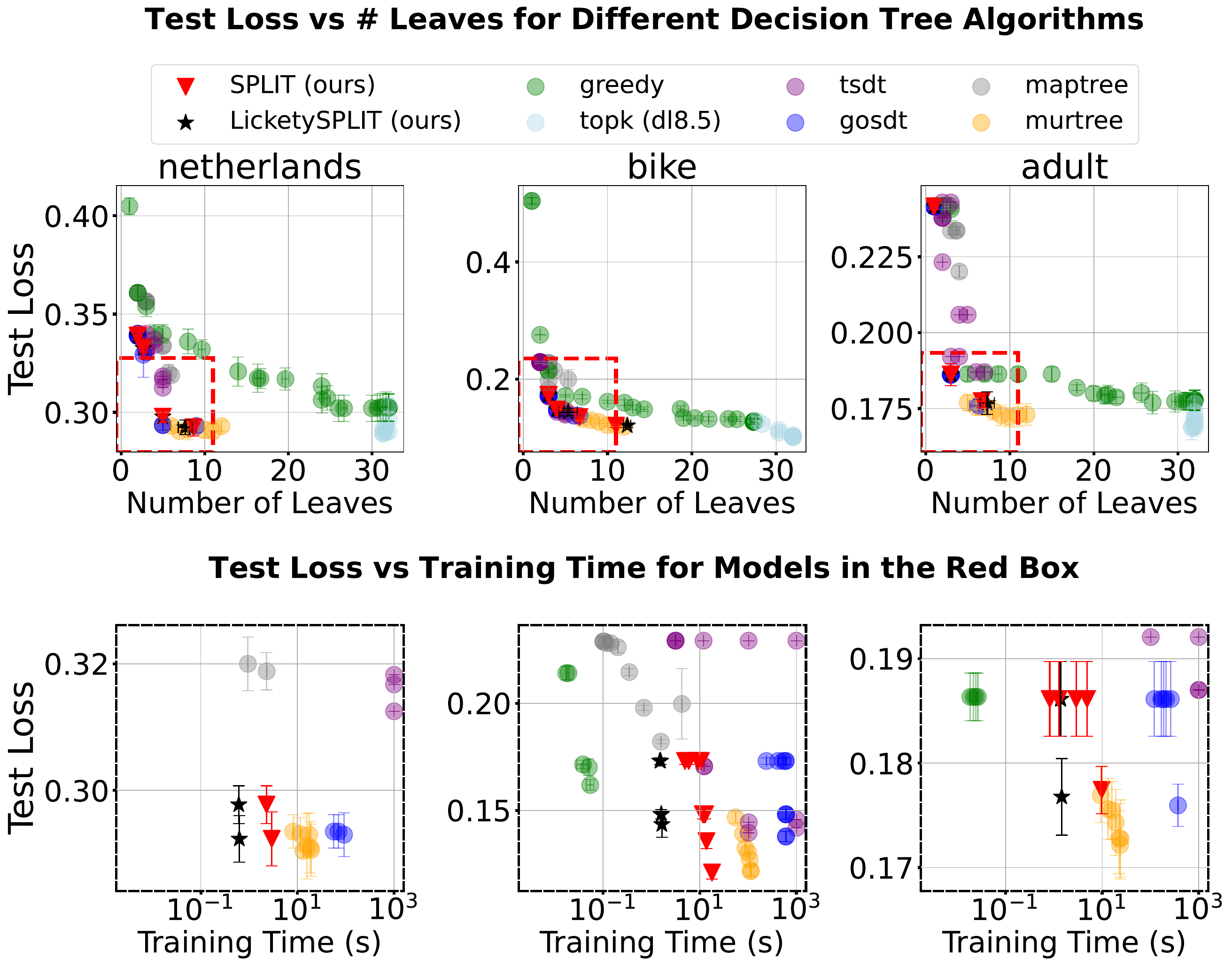}
        \caption{A comparison between the performance of our algorithms and competitors (depth budget $5$, lookahead depth $2$). The red box in the upper plot illustrates the region containing sparse and accurate models. The lower plots show the test loss vs training time for models in the red box. SPLIT and LicketySPLIT consistently lie on the bottom left of the test loss-sparsity frontier, with runtimes orders of magnitude faster than many competitors. Our algorithms also offer the ideal compromise between runtime and loss. All metrics are averaged over $3$ test-train splits.}
        \label{fig:split_comparisons}
    \label{fig:comparisons}
\end{figure*}

Our experiments provide an evaluation of decision trees, considering aspects of performance, interpretability, and training budget. To this end, our evaluation addresses the following questions:
\begin{enumerate}
    \item How fast are SPLIT and LicketySPLIT compared to unmodified GOSDT?
    \item Are SPLIT and LicketySPLIT able to produce trees that lie on the frontier of sparsity, test loss performance, and training time?
    \item How good is the Rashomon set approximation produced by RESPLIT?
\end{enumerate}
For all experiments below we set the depth budget of our algorithms to $5$. The lookahead depth for Algorithm \ref{alg::lookahead} is set to $2$ since, from Corollary \ref{corollary:depth-runtime}, this produces the lowest runtime for the chosen depth budget. We defer more details of our experimental setup and datasets to Appendix \ref{sec:setup}. Appendix \ref{sec:appendix_evaluation} has additional evaluations of our methods. 
\subsection{How do our algorithms compare to GOSDT?}
Our first experiments support the claim that our method is significantly faster than GOSDT whilst achieving similar regularized test losses. This is shown in Figure \ref{fig:gosdt_vs_lookahead}. Here, we vary the sparsity penalty, $\lambda$, which is a common input to all algorithms in this figure, and compute the regularized test objective from Equation \ref{eqn:obj} for each value of $\lambda$. We set a timeout limit of $1200$ seconds for GOSDT, after which it gives the best solution found so far. We note two regimes:
\begin{itemize}
\item When all methods have lower regularized objective values (left side of each plot), \textbf{our methods are orders of magnitude faster than GOSDT.} For instance, on the Bike dataset, SPLIT has training times of $\sim$10 seconds, while GOSDT runs for $\sim$$10^3$ seconds. LicketySPLIT takes merely a second in most cases. This is the regime most relevant to our algorithms. 

\item When the optimal objective is high and the tree is super-sparse (right side of each plot), SPLIT and Lickety-SPLIT have small overhead costs and can be slower, because we need to train a greedy tree for each subproblem encountered at the lookahead depth in order to initialize bounds via Algorithm \ref{alg::bounds}. However, in this regime, all methods already have runtimes of $\sim$1 second, so the extra overhead cost is insignificant. This is especially seen in the COMPAS and Netherlands datasets. 
\end{itemize}

\subsection{Characterising the Frontier of Test Loss, Sparsity, and Runtime}
Figure \ref{fig:comparisons} characterises the frontier of training time, sparsity, and test loss for several algorithms. Here, we vary hyper-parameters associated with each algorithm to produce trees of varying sparsity levels (where sparsity is the number of leaves). We see that there exists a frontier between test loss and sparsity, and different methods lie on different parts of the frontier. To maximize interpretability and accuracy, we want a tree to lie in the bottom left corner of the frontier, within the highlighted red rectangle. Out of all algorithms tested, ours consistently lie on the frontier and in the red rectangle. Alongside state of the art performance, our algorithms are often \textbf{over 100$\times$ faster} than their contemporaries. For more datasets, see Figure \ref{fig:comparisons_2} in the Appendix. 

\subsection{Rashomon Set Approximation}
We now show that RESPLIT enables fast, accurate approximation of the Rashomon set of near-optimal trees, while scaling much more favorably than state-of-the-art method TreeFARMS \cite{xin2022treefarms}. We demonstrate that variable importance conclusions using RID \cite{donnelly2023the} remain almost identical under RESPLIT, relative to the full Rashomon set. That is, RESPLIT allows accurate summary statistics of the full Rashomon set to be computed at greater depths and over more binary features while enhancing scalability. Table \ref{tab:results} shows computation of RID with and without RESPLIT. RESPLIT enables $10-20\times$ faster variable importance computation. Furthermore, the correlation between variable importances is very close to $1$, suggesting that RESPLIT trees serve as good proxies for estimating importances derived from the complete Rashomon set. Table \ref{tab:recovery_rate} also shows that most of the trees output by RESPLIT lie in the true Rashomon set or very close to it.

\begin{table}[h]
\centering
\begin{tabular}{c|c|c|c}
{\textbf{Dataset}} & \textbf{Full}\textbf{ (s)} & \textbf{RESPLIT} \textbf{(s)} & $\tau$ \\
\hline
COMPAS & 152 & \textbf{18}  & 1.0 \\
Spambase & 2659 & \textbf{154} & 0.930 \\
Netherlands & 4255 & \textbf{216} & 0.932 \\ 
HELOC & 5564 & \textbf{337} & 0.979 \\
HIV & 9273  & \textbf{388} & 0.959 \\
Bike & 14330 & \textbf{194} & 0.999 \\
\end{tabular}
\caption{Table summarizing the advantages of RESPLIT. The first $2$ columns show the time taken to compute all bootstrapped Rashomon sets for the Rashomon Importance Distribution (RID) \cite{donnelly2023the} with and without RESPLIT. $\#$ of bootstrapped datasets $= 10$, $\lambda = 0.02$, $\epsilon = 0.01$, depth budget $5$, lookahead depth $3$. The last column shows the Pearson correlation between variable importances computed by RID and RID + RESPLIT. There is nearly perfect correlation seen in every case.}
\label{tab:results}
\end{table}

\begin{table}[H]
\centering
\resizebox{0.48\textwidth}{!}{%
\begin{tabular}{c|c|c}
\textbf{Dataset} & \textbf{Precision} & \textbf{Precision (Slack $\mathbf{.01}$)} \\
\hline
Bike        & 0.974 (370/380) & 1.000 \\
COMPAS      & 1.000 (27/27) & 1.000 \\
HELOC       & 0.974 (528/542) & 1.000 \\
HIV         & 0.528 (243/460) & 0.984 \\
Netherlands & 0.911 (102/112) & 1.000 \\
Spambase    & 0.597 (850/1422) & 0.933 \\
\end{tabular}
}
\caption{Proportion of RESPLIT Trees in the true Rashomon set (precision) and within at most $.01$ loss of being in the set. Most of the trees output by RESPLIT end up being in the Rashomon set. Trees which are not in the Rashomon set are almost always very close to being in it. We employ the same parameters as Table \ref{tab:results}.}
\label{tab:recovery_rate}
\end{table}

\vspace{-0.6cm}
\section{Conclusion}
\label{sec:conclusion}
We introduced SPLIT, LicketySPLIT, and RESPLIT, a novel family of decision tree optimization algorithms. At their core, these algorithms perform branch and bound search up to a lookahead depth, beyond which they switch to greedy splitting. Our experimental results show dramatic improvements in runtime compared to state of the art algorithms, with negligible loss in accuracy or sparsity. RESPLIT also scalably finds a set of near-optimal trees without adversely impacting downstream variable importance tasks. Future work could explore conditions under which subproblems exhibit large optimality gaps, offering new insights for efficient decision tree and Rashomon set optimization.

\section*{Acknowledgements}
We acknowledge funding from the National Institutes of Health under 5R01-DA054994, the National Science Foundation under award NSF 2147061, and through the Department of Energy under grant DE-SC0023194. We thank Srikar Katta, Jon Donnelly, Zachery Boner, Yixiao Wang, and Zakk Heile for helpful discussions and feedback throughout this project.

\section*{Impact Statement}

This paper presents work whose goal is to advance the field of 
Machine Learning. There are many potential societal consequences 
of our work, none which we feel must be specifically highlighted here.

\bibliography{References}

@inproceedings{gosdt,
  title={Generalized and scalable optimal sparse decision trees},
  author={Lin, Jimmy and Zhong, Chudi and Hu, Diane and Rudin, Cynthia and Seltzer, Margo},
  booktitle={International Conference on Machine Learning},
  pages={6150--6160},
  year={2020},
  organization={PMLR}
}

@inproceedings{gosdt_guesses,
  title={Fast sparse decision tree optimization via reference ensembles},
  author={McTavish, Hayden and Zhong, Chudi and Achermann, Reto and Karimalis, Ilias and Chen, Jacques and Rudin, Cynthia and Seltzer, Margo},
  booktitle={Proceedings of the {AAAI} Conference on Artificial Intelligence},
  volume={36},
  pages={9604--9613},
  year={2022}
}

@article{murtree,
  title={Murtree: Optimal decision trees via dynamic programming and search},
  author={Demirovi{\'c}, Emir and Lukina, Anna and Hebrard, Emmanuel and Chan, Jeffrey and Bailey, James and Leckie, Christopher and Ramamohanarao, Kotagiri and Stuckey, Peter J},
  journal={Journal of Machine Learning Research},
  volume={23},
  number={26},
  pages={1--47},
  year={2022}
}

@misc{spambase_94,
  author       = {Hopkins M. and Reeber E. and Forman G. and Suermondt, J.},
  title        = {{Spambase}},
  year         = {1999},
  howpublished = {UCI Machine Learning Repository},
  note         = {{DOI}: https://doi.org/10.24432/C53G6X}
}

@inproceedings{demirovic2023blossom,
  title={Blossom: an anytime algorithm for computing optimal decision trees},
  author={Demirovi{\'c}, Emir and Hebrard, Emmanuel and Jean, Louis},
  booktitle={International Conference on Machine Learning},
  pages={7533--7562},
  year={2023},
  organization={PMLR}
}

@article{topk,
  title={Harnessing the power of choices in decision tree learning},
  author={Blanc, Guy and Lange, Jane and Pabbaraju, Chirag and Sullivan, Colin and Tan, Li-Yang and Tiwari, Mo},
  journal={Advances in Neural Information Processing Systems},
  volume={36},
  year={2024}
}

@inproceedings{dl85,
  title={Learning optimal decision trees using caching branch-and-bound search},
  author={Aglin, Ga{\"e}l and Nijssen, Siegfried and Schaus, Pierre},
  booktitle={Proceedings of the {AAAI} Conference on Artificial Intelligence},
  volume={34},
  pages={3146--3153},
  year={2020}
}

@article{maptree,
  title = {{MAPT}ree: Beating “Optimal” Decision Trees with {B}ayesian Decision Trees},
  volume = {38},
  number = {8},
  journal = {Proceedings of the {AAAI} Conference on Artificial Intelligence},
  author = {Sullivan,  Colin and Tiwari,  Mo and Thrun,  Sebastian},
  year = {2024},
  month = mar,
  pages = {9019–9026}
}

@inproceedings{osdt,
 author = {Hu, Xiyang and Rudin, Cynthia and Seltzer, Margo},
 booktitle = {Advances in Neural Information Processing Systems},
 pages = {7265-7273},
 title = {Optimal Sparse Decision Trees},
 volume = {32},
 year = {2019}
}

@InProceedings{quantbnb,
  title = 	 {Quant-{B}n{B}: A Scalable Branch-and-Bound Method for Optimal Decision Trees with Continuous Features},
  author =       {Mazumder, Rahul and Meng, Xiang and Wang, Haoyue},
  booktitle = 	 {International Conference on Machine Learning},
  pages = 	 {15255--15277},
  year = 	 {2022},
  volume = 	 {162},
  month = 	 {17--23 Jul},
  publisher =    {PMLR}
}

@article{costa2023recent,
  title={Recent advances in decision trees: An updated survey},
  author={Costa, Vin{\'\i}cius G and Pedreira, Carlos E},
  journal={Artificial Intelligence Review},
  volume={56},
  number={5},
  pages={4765--4800},
  year={2023},
  publisher={Springer}
}

@book{breiman1984classification,
  title={Classification and regression trees},
  author={Breiman, Leo},
  year={1984},
  publisher={Routledge}
}

@article{Piltaver2016Comprehensible,
title = {What makes classification trees comprehensible?},
journal = {Expert Systems with Applications},
volume = {62},
pages = {333-346},
year = {2016},
author = {Rok Piltaver and Mitja Luštrek and Matjaž Gams and Sanda Martinčić-Ipšić},
}

@book{quinlan2014c45,
  title={C4.5: programs for machine learning},
  author={Quinlan, J Ross},
  year={2014},
  publisher={Elsevier}
}

@inproceedings{Norton1989GeneratingBD,
  title={Generating Better Decision Trees},
  author={Steven W. Norton},
  booktitle={International Joint Conference on Artificial Intelligence},
  year={1989}
}

@inproceedings{murthy1995lookahead,
  title={Lookahead and pathology in decision tree induction},
  author={Murthy, Sreerama and Salzberg, Steven},
  booktitle={International Joint Conference on Artificial Intelligence},
  pages={1025--1033},
  year={1995}
}

@article{xin2022treefarms,
  title={Exploring the whole rashomon set of sparse decision trees},
  author={Xin, Rui and Zhong, Chudi and Chen, Zhi and Takagi, Takuya and Seltzer, Margo and Rudin, Cynthia},
  journal={Advances in Neural Information Processing Systems},
  volume={35},
  pages={14071--14084},
  year={2022}
}

@article{blanc2019top,
  title={Top-down induction of decision trees: rigorous guarantees and inherent limitations},
  author={Blanc, Guy and Lange, Jane and Tan, Li-Yang},
  journal={arXiv preprint arXiv:1911.07375},
  year={2019}
}

@InProceedings{thompson_aistats,
  title = 	 {Online Learning of Decision Trees with {T}hompson Sampling},
  author =       {Chaouki, Ayman and Read, Jesse and Bifet, Albert},
  booktitle = 	 {Proceedings of The 27th International Conference on Artificial Intelligence and Statistics},
  pages = 	 {2944--2952},
  year = 	 {2024},
  editor = 	 {Dasgupta, Sanjoy and Mandt, Stephan and Li, Yingzhen},
  volume = 	 {238},
  series = 	 {Proceedings of Machine Learning Research},
  month = 	 {02--04 May},
  publisher =    {PMLR}
}

@article{rudin2022interpretable,
  title={Interpretable machine learning: Fundamental principles and 10 grand challenges},
  author={Rudin, Cynthia and Chen, Chaofan and Chen, Zhi and Huang, Haiyang and Semenova, Lesia and Zhong, Chudi},
  journal={Statistics Surveys},
  volume={16},
  pages={1--85},
  year={2022}
}

@article{arrieta2020explainable,
  title={Explainable Artificial Intelligence ({XAI}): Concepts, taxonomies, opportunities and challenges toward responsible {AI}},
  author={Arrieta, Alejandro Barredo and D{\'\i}az-Rodr{\'\i}guez, Natalia and Del Ser, Javier and Bennetot, Adrien and Tabik, Siham and Barbado, Alberto and Garc{\'\i}a, Salvador and Gil-L{\'o}pez, Sergio and Molina, Daniel and Benjamins, Richard and others},
  journal={Information Fusion},
  volume={58},
  pages={82--115},
  year={2020},
  publisher={Elsevier}
}

@article{van2024optimal,
  title={Optimal or Greedy Decision Trees? Revisiting their Objectives, Tuning, and Performance},
  author={van der Linden, Jacobus GM and Vos, Dani{\"e}l and de Weerdt, Mathijs M and Verwer, Sicco and Demirovi{\'c}, Emir},
  journal={arXiv preprint arXiv:2409.12788},
  year={2024}
}

@article{lambert,
      title={Explicit and recursive estimates of the {L}ambert {W} function}, 
      author={Lajos Lóczi},
      year={2021},
      journal={arXiv 2008.06122},
}

@article{lambert_bound,
   title={Bounds on the Lambert Function and Their Application to the Outage Analysis of User Cooperation},
   volume={17},
   number={8},
   journal={IEEE Communications Letters},
   publisher={Institute of Electrical and Electronics Engineers (IEEE)},
   author={Chatzigeorgiou, Ioannis},
   year={2013},
   month=aug, 
   pages={1505–1508} }

@inproceedings{zhou2018measuring,
  title={Measuring interpretability for different types of machine learning models},
  author={Zhou, Qing and Liao, Fenglu and Mou, Chao and Wang, Ping},
  booktitle={Trends and Applications in Knowledge Discovery and Data Mining: PAKDD 2018 Workshops, BDASC, BDM, ML4Cyber, PAISI, DaMEMO, Melbourne, VIC, Australia, June 3, 2018, Revised Selected Papers 22},
  pages={295--308},
  year={2018},
  organization={Springer}
}

@article{fanaee2013event,
  author = {Fanaee-T, Hadi and Gama, Joao},
  title = {Event labeling combining ensemble detectors and background knowledge},
  journal = {Progress in Artificial Intelligence},
  year = {2013},
  pages = {1--15},
  publisher = {Springer Berlin Heidelberg},
  doi = {10.1007/s13748-013-0040-3}
}

@misc{heloc,
  author       = {FICO},
  title        = {Home Equity Line of Credit (HELOC) Dataset},
  year         = 2018,
  howpublished = {\url{https://community.fico.com/s/explainable-machine-learning-challenge}},
  note         = {FICO Explainable Machine Learning Challenge}
}

@misc{compas,
  author       = {Julia Angwin and Jeff Larson and Surya Mattu and Lauren Kirchner},
  title        = {ProPublica COMPAS Dataset},
  year         = 2016,
  howpublished = {\url{https://www.propublica.org/datastore/dataset/compas-recidivism-risk-score-data-and-analysis}},
  note         = {ProPublica, 2016}
}

@misc{covertype,
  author       = {Blackard, Jock},
  title        = {{Covertype}},
  year         = {1998},
  howpublished = {UCI Machine Learning Repository},
  note         = {{DOI}: https://doi.org/10.24432/C50K5N}
}

@misc{adult_2,
  author       = {Becker, Barry and Kohavi, Ronny},
  title        = {{Adult}},
  year         = {1996},
  howpublished = {UCI Machine Learning Repository},
  note         = {{DOI}: https://doi.org/10.24432/C5XW20}
}

@misc{bank_marketing_222,
  author       = {Moro, S. and Rita, P. and Cortez, P.},
  title        = {{Bank Marketing}},
  year         = {2014},
  howpublished = {UCI Machine Learning Repository},
  note         = {{DOI}: https://doi.org/10.24432/C5K306}
}

@misc{thyroid_disease_102,
  author       = {Quinlan, Ross},
  title        = {{Thyroid Disease}},
  year         = {1986},
  howpublished = {UCI Machine Learning Repository},
  note         = {{DOI}: https://doi.org/10.24432/C5D010}
}

@article {hiv,
	author = {Manickam, Ashokkumar and Peterson, Jackson J and Harigaya, Yuriko and Murdoch, David M and Margolis, David M and Oesterling, Alex and Guo, Zhicheng and Rudin, Cynthia D and Jiang, Yuchao and Browne, Edward P},
	title = {Integrated single-cell multiomic analysis of HIV latency reversal reveals novel regulators of viral reactivation},
	elocation-id = {2022.07.26.501558},
	year = {2022},
	doi = {10.1101/2022.07.26.501558},
	publisher = {Cold Spring Harbor Laboratory},
	abstract = {Despite the success of antiretroviral therapy, HIV cannot be cured because of a reservoir of latently infected cells that evades therapy. To understand the mechanisms of HIV latency, we employed an integrated single-cell RNA-seq/ATAC-seq approach to simultaneously profile the transcriptomic and epigenomic characteristics of ~4000 latently infected cells after reactivation using three different latency-reversing agents (LRAs). Differentially expressed genes and differentially accessible motifs were used to examine transcriptional pathways and transcription factor (TF) activities across the cell population. We identify cellular transcripts and TFs whose expression/activity was correlated with viral reactivation and demonstrate that a machine learning model trained on these data was 68\% accurate at predicting viral reactivation. Finally, we validate the role of a new candidate HIV-regulating factor, GATA3, in the viral response to prostratin stimulation. These data demonstrate the power of integrated multimodal single-cell analysis to uncover novel relationships between host cell factors and HIV latency.Competing Interest StatementThe authors have declared no competing interest.},
	URL = {https://www.biorxiv.org/content/early/2022/07/27/2022.07.26.501558},
	eprint = {https://www.biorxiv.org/content/early/2022/07/27/2022.07.26.501558.full.pdf},
	journal = {bioRxiv}
}

@article{tollenaar2013method,
  title={Which method predicts recidivism best?: a comparison of statistical, machine learning and data mining predictive models},
  author={Tollenaar, Nikolaj and Van der Heijden, PGM},
  journal={Journal of the Royal Statistical Society: Series A (Statistics in Society)},
  volume={176},
  number={2},
  pages={565--584},
  year={2013},
  publisher={Wiley Online Library}
}

@article{LarsonMaKiAn16,
title={How We Analyzed the {COMPAS} Recidivism Algorithm},
__author={Jeff Larson and Surya Mattu and Lauren Kirchner and Julia Angwin},
author={J. Larson and S. Mattu and L. Kirchner and J. Angwin},
journal={ProPublica},
year={2016}
}

@article{izza2022tackling,
  title={On tackling explanation redundancy in decision trees},
  author={Izza, Yacine and Ignatiev, Alexey and Marques-Silva, Joao},
  journal={Journal of Artificial Intelligence Research},
  volume={75},
  pages={261--321},
  year={2022}
}

@article{breiman2001statistical,
  title={Statistical modeling: The two cultures (with comments and a rejoinder by the author)},
  author={Breiman, Leo},
  journal={Statistical Science},
  volume={16},
  number={3},
  pages={199--231},
  year={2001},
  publisher={Institute of Mathematical Statistics}
}

@inproceedings{
donnelly2023the,
title={The Rashomon Importance Distribution: Getting {RID} of Unstable, Single Model-based Variable Importance},
author={Jon Donnelly and Srikar Katta and Cynthia Rudin and Edward P Browne},
booktitle={Advances in Neural Information Processing Systems},
year={2023}
}

@article{dong2020exploring,
  title={Exploring the cloud of variable importance for the set of all good models},
  author={Dong, Jiayun and Rudin, Cynthia},
  journal={Nature Machine Intelligence},
  volume={2},
  number={12},
  pages={810--824},
  year={2020},
  publisher={Nature Publishing Group UK London}
}

@article{fisher2018model,
  title={All models are wrong, but many are useful: Learning a variable's importance by studying an entire class of prediction models simultaneously},
  author={Fisher, Aaron and Rudin, Cynthia and Dominici, Francesca},
  journal={Journal of Machine Learning Research},
  volume={20},
  number={177},
  pages={1--81},
  year={2019}
}

@inproceedings{RudinEtAlAmazing2024,
title={Amazing Things Come From Having Many Good Models},
author={Cynthia Rudin and Chudi Zhong and Lesia Semenova and Margo Seltzer and Ronald Parr and Jiachang Liu and Srikar Katta and Jon Donnelly and Harry Chen and Zachery Boner},
year={2024},
booktitle={Proceedings of the International Conference on Machine Learning}
}

@inproceedings{slow_greedy,
author = {Balcan, Maria-Florina and Sharma, Dravyansh},
title = {Learning accurate and interpretable decision trees},
year = {2024},
publisher = {JMLR.org},
abstract = {Decision trees are a popular tool in machine learning and yield easy-to-understand models. Several techniques have been proposed in the literature for learning a decision tree classifier, with different techniques working well for data from different domains. In this work, we develop approaches to design decision tree learning algorithms given repeated access to data from the same domain. We propose novel parameterized classes of node splitting criteria in top-down algorithms, which interpolate between popularly used entropy and Gini impurity based criteria, and provide theoretical bounds on the number of samples needed to learn the splitting function appropriate for the data at hand. We also study the sample complexity of tuning prior parameters in Bayesian decision tree learning, and extend our results to decision tree regression. We further consider the problem of tuning hyperparameters in pruning the decision tree for classical pruning algorithms including min-cost complexity pruning. We also study the interpretability of the learned decision trees and introduce a data-driven approach for optimizing the explainability versus accuracy trade-off using decision trees. Finally, we demonstrate the significance of our approach on real world datasets by learning data-specific decision trees which are simultaneously more accurate and interpretable.},
booktitle = {Proceedings of the Fortieth Conference on Uncertainty in Artificial Intelligence},
articleno = {13},
numpages = {20},
location = {Barcelona, Spain},
series = {UAI '24}
}

@article{bertsimas2017optimal,
  title={Optimal classification trees},
  author={Bertsimas, Dimitris and Dunn, Jack},
  journal={Machine Learning},
  volume={106},
  pages={1039--1082},
  year={2017},
  publisher={Springer}
}

@inproceedings{verwer2019learning,
  title={Learning optimal classification trees using a binary linear program formulation},
  author={Verwer, Sicco and Zhang, Yingqian},
  booktitle={Proceedings of the AAAI Conference on Artificial Intelligence},
  volume={33},
  pages={1625--1632},
  year={2019}
}
\bibliographystyle{icml2025}

\onecolumn
\appendix

\label{sec:appendix}
\section{Appendix}

\subsection{Further Comparisons With Other Methods}
\subsubsection{More datasets with depth $5$ trees}
In Section \ref{sec:experiments} of the paper, we showed results for three datasets. Here, we evaluate SPLIT, LicketySPLIT, and its contemporaries on $6$ additional datasets. All datasets were evaluated on three random $80$-$20$ train-test splits of the data, with the average and standard error reported. Results are in Figure \ref{fig:comparisons_2}. Note that Covertype has smaller error bars because the dataset size is much larger -- it has $\sim5\times 10^6$ examples, while COMPAS and HELOC have only $\sim10^4$ examples.
\newpage
\begin{figure}[H]
    \centering
    \includegraphics[width=0.77\linewidth]{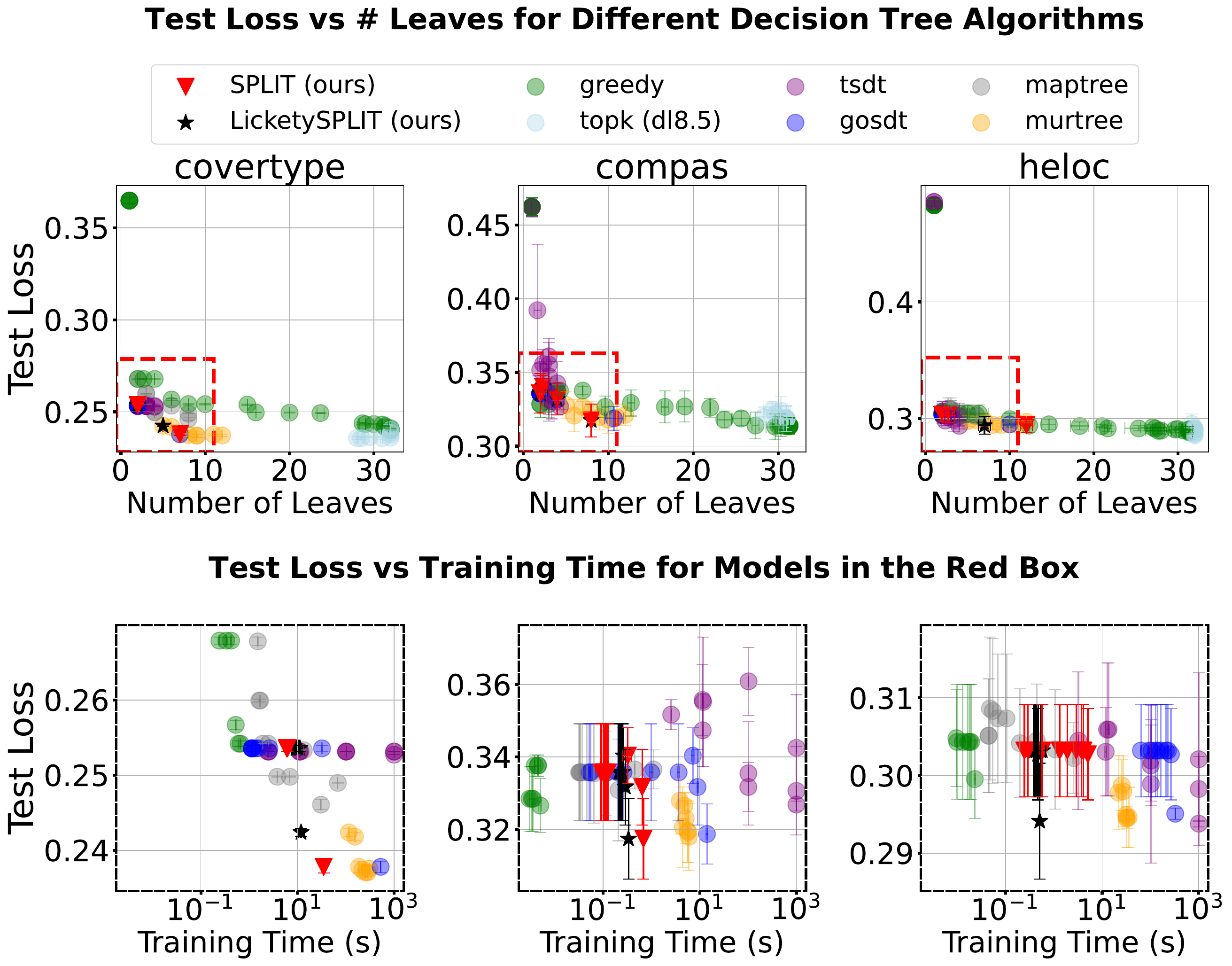}
\end{figure}
\begin{figure}[H]
\centering
    \includegraphics[width=0.8\linewidth]{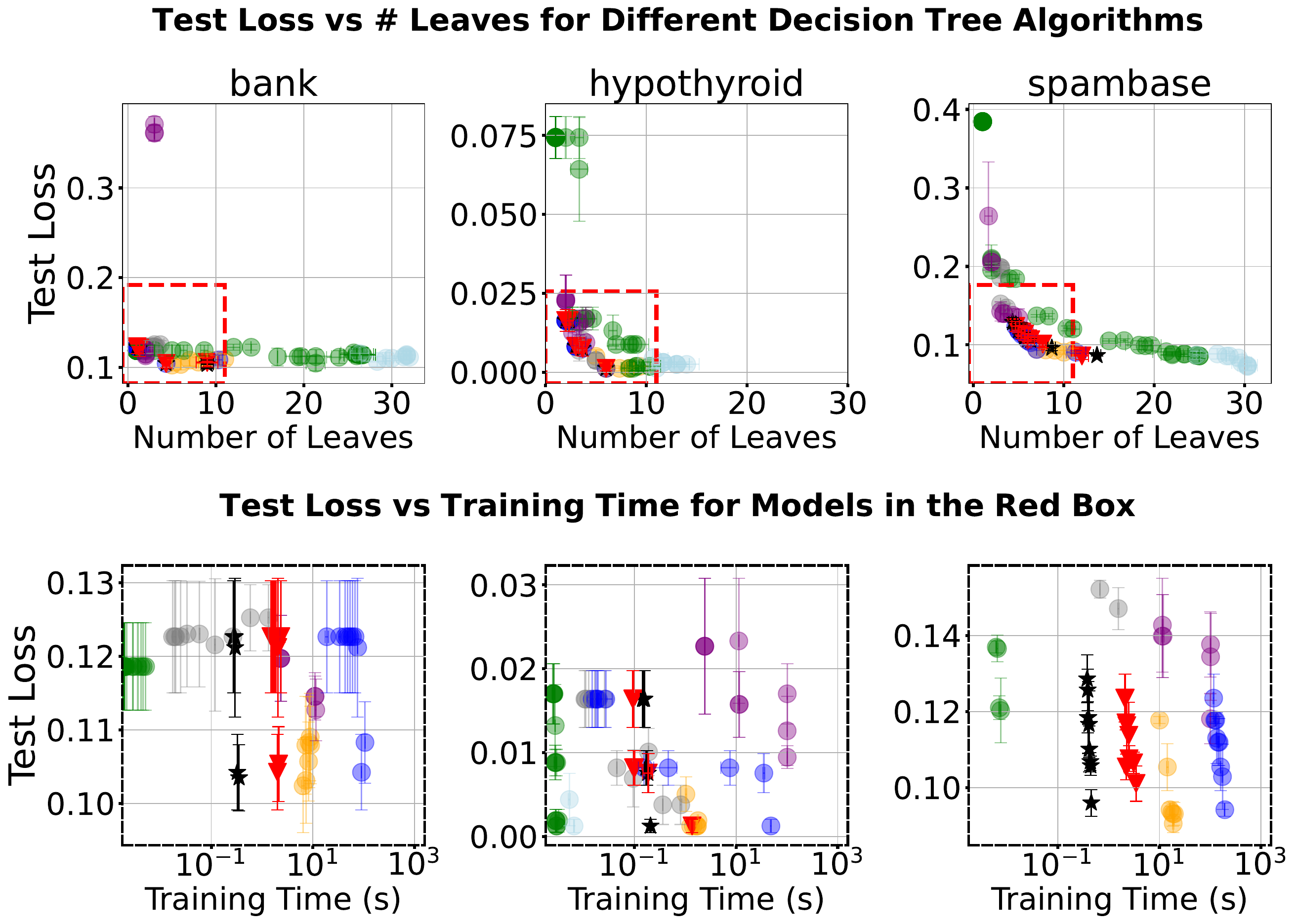}
    \caption{A performance comparison between our algorithm and those in literature. The lower row are zoomed in versions of the red boxes in the upper row. This is complementary to Figure \ref{fig:comparisons} and shows more datasets for completeness. The depth budget for all algorithms whose depth budget can be specified is $5$.}
    \label{fig:comparisons_2}
\end{figure}
\subsubsection{What about depth $4$ trees?}
In this section, we perform the same evaluation as above, but with depth $4$ trees. We set the lookahead depth as $2$. 
\begin{figure}[H]
    \centering
    \includegraphics[width=0.8\linewidth]{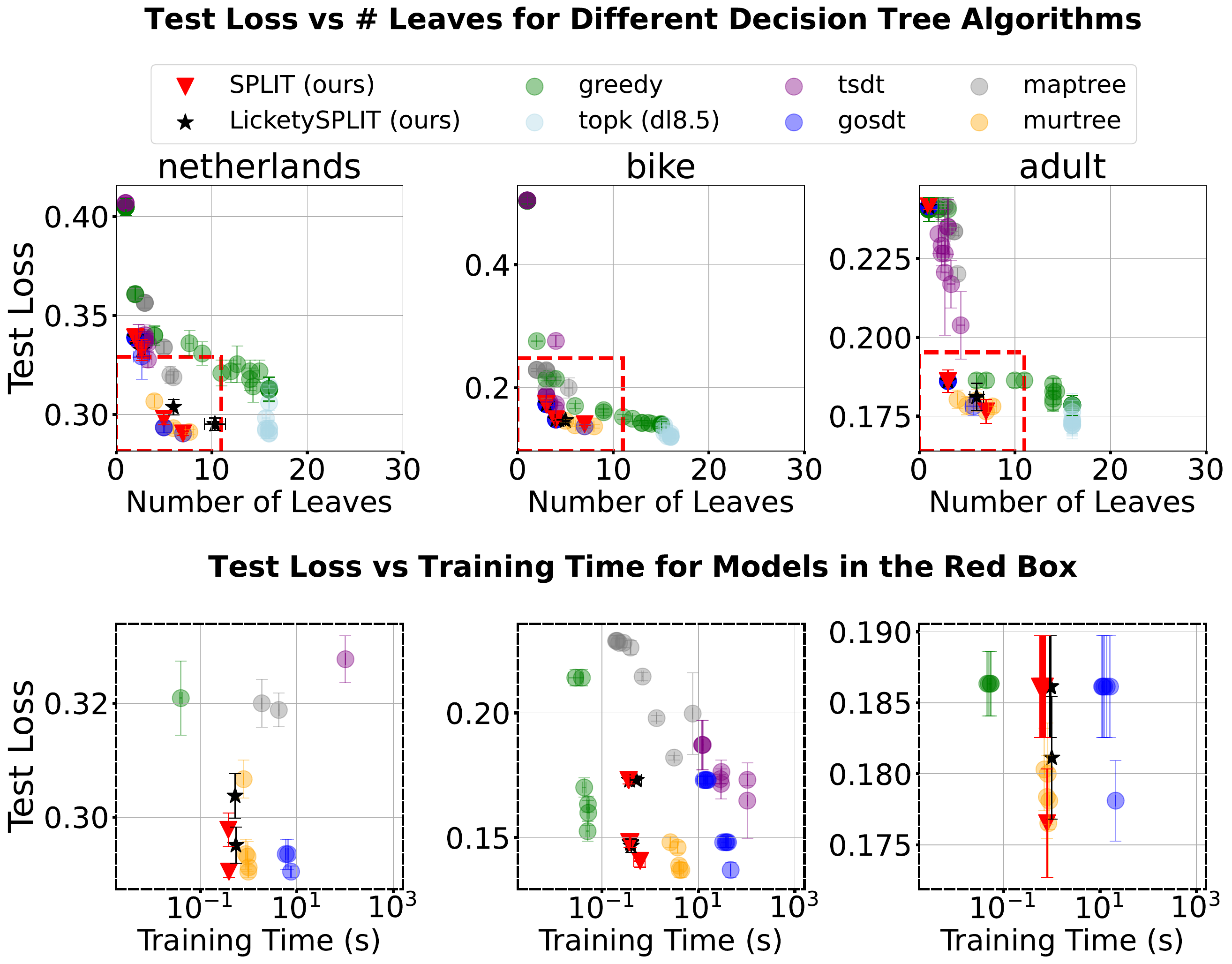}
\end{figure}
\begin{figure}[H]
    \centering
    \includegraphics[width=0.8\linewidth]{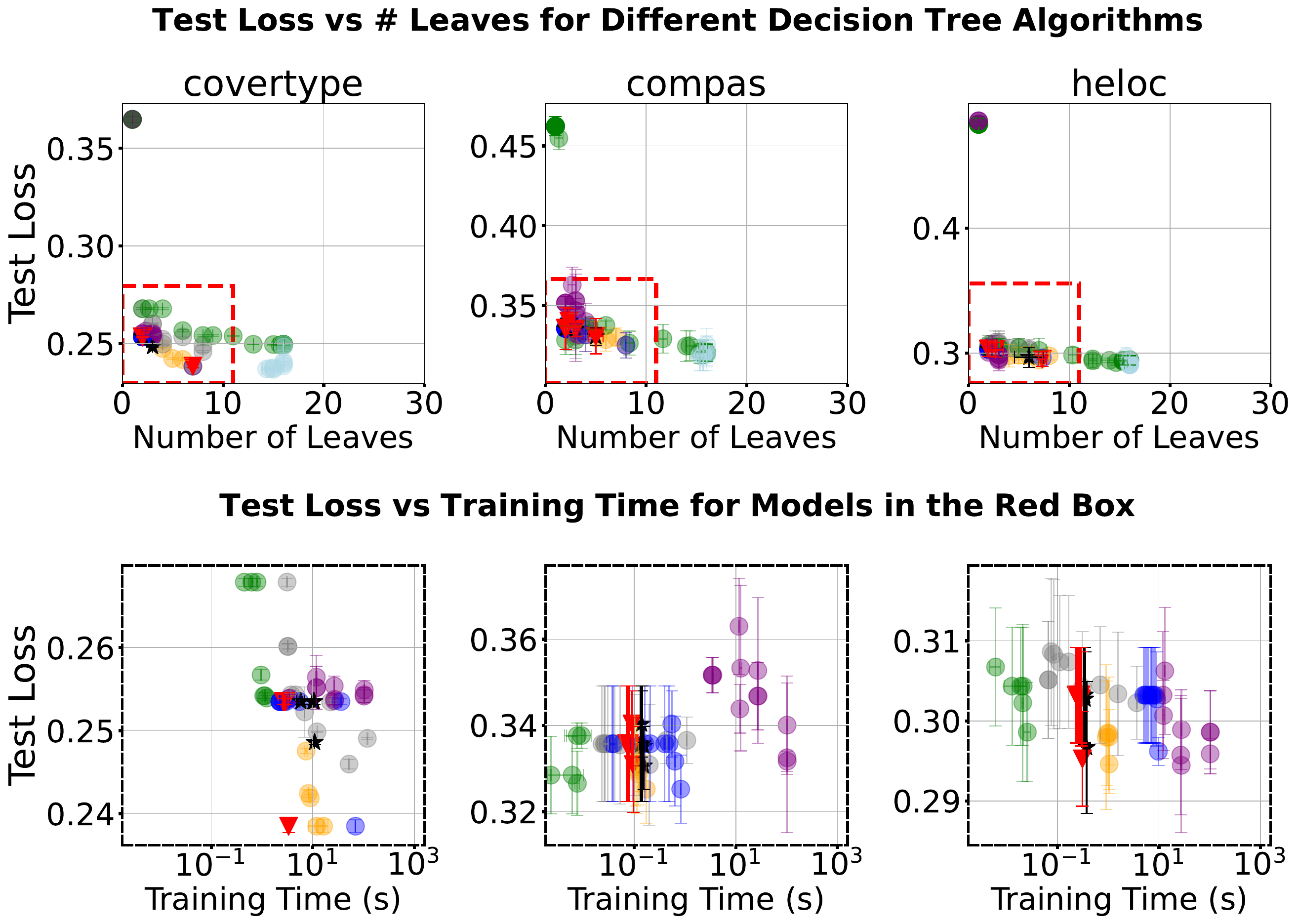}
\end{figure}
\begin{figure}[H]
    \centering
    \includegraphics[width=0.8\linewidth]{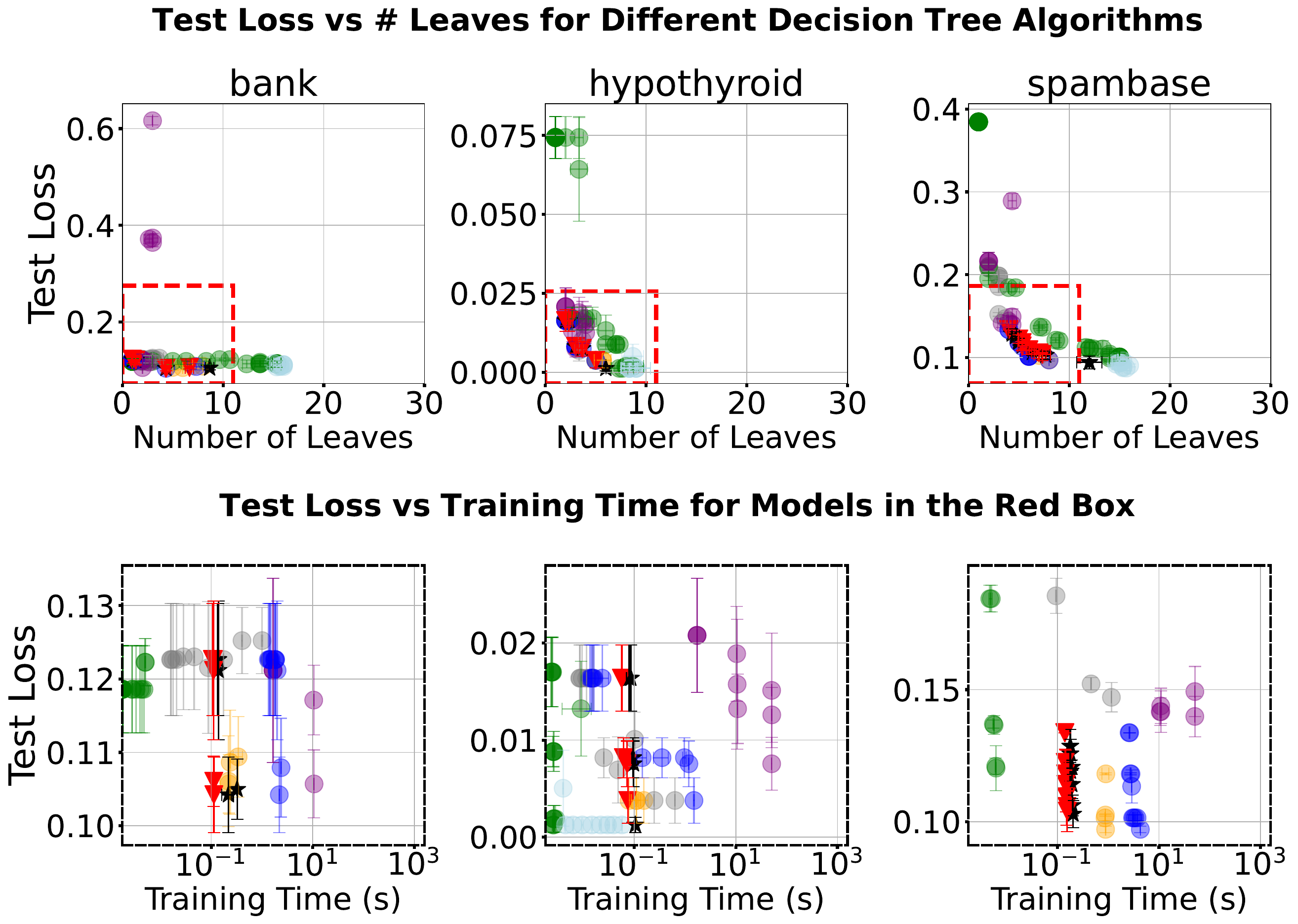}
    \caption{A performance comparison between our algorithm and those in literature. Depth $4$ -- Lookahead depth $2$.}
    \label{fig:split_comparisons_appendix_depth_4}
\end{figure}
\newpage
\subsubsection{What about depth $6$ trees?}
In this section, we perform the same evaluation as above, but with depth $6$ trees. We set the lookahead depth as $2$. Note that Murtree and GOSDT are not included in the comparison as they take much longer to run for deeper trees.
\begin{figure}[H]
    \centering
    \includegraphics[width=0.78\linewidth]{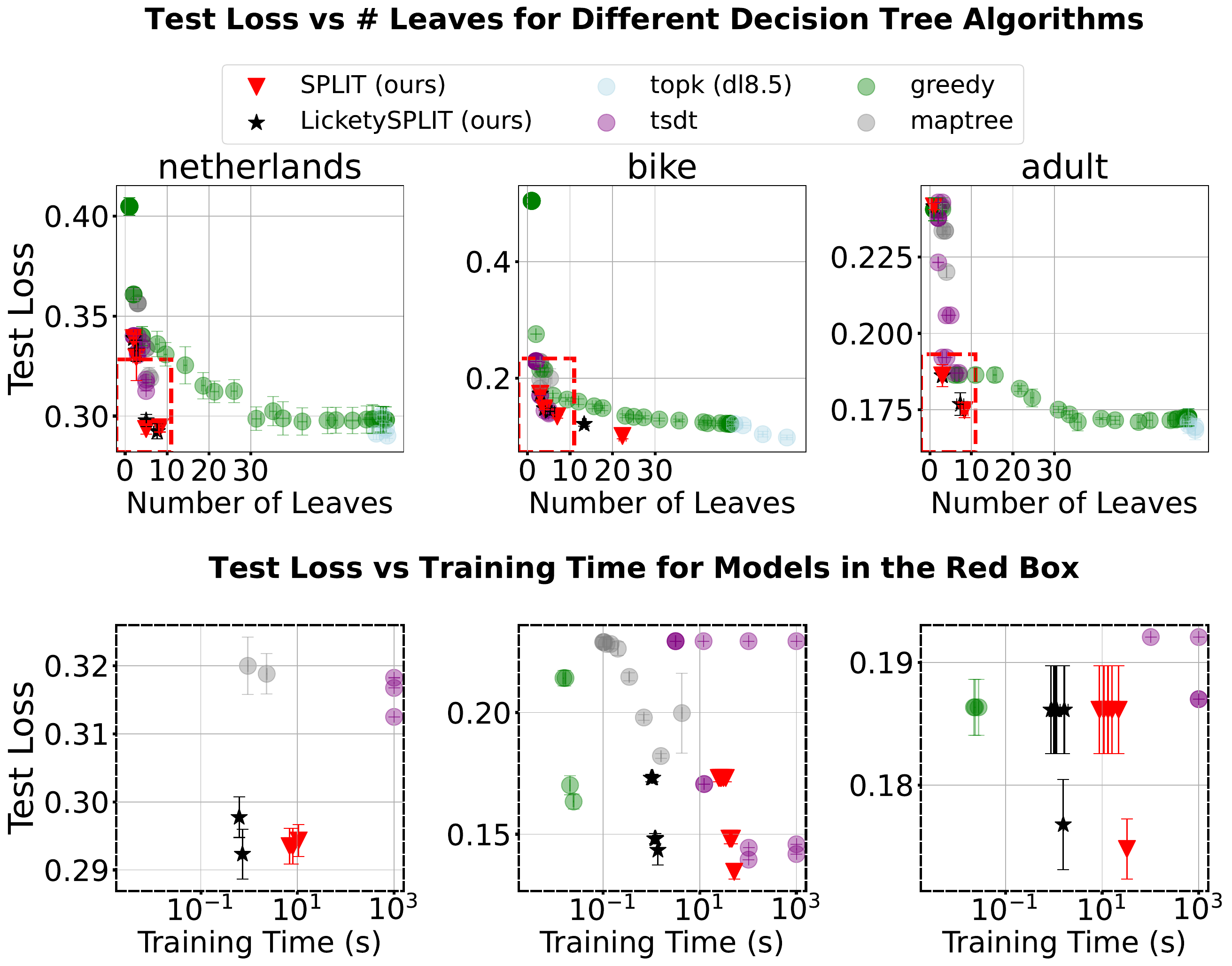}
\end{figure}
\begin{figure}[H]
    \centering
    \includegraphics[width=0.78\linewidth]{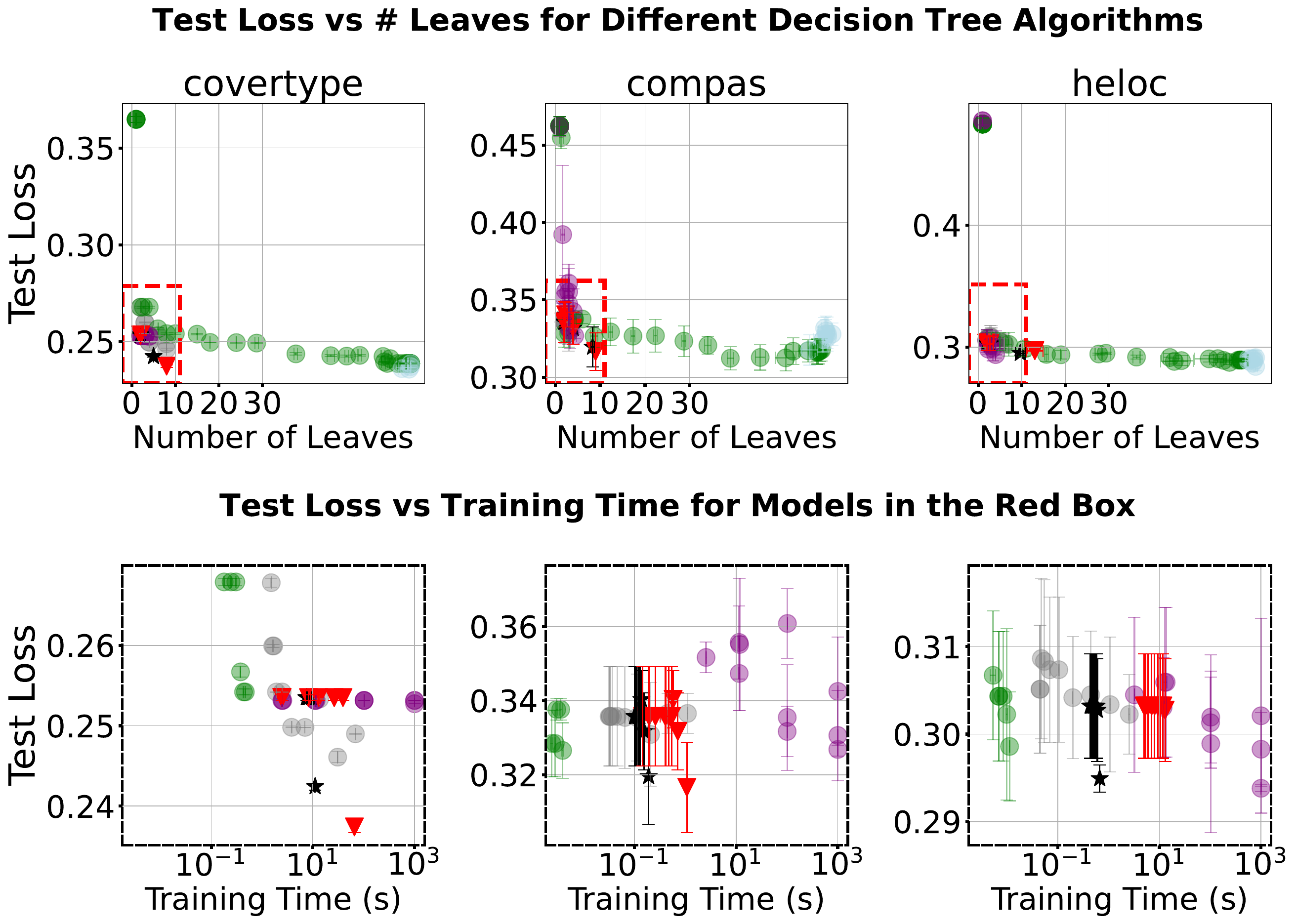}
\end{figure}
\begin{figure}[H]
    \centering
    \includegraphics[width=0.78\linewidth]{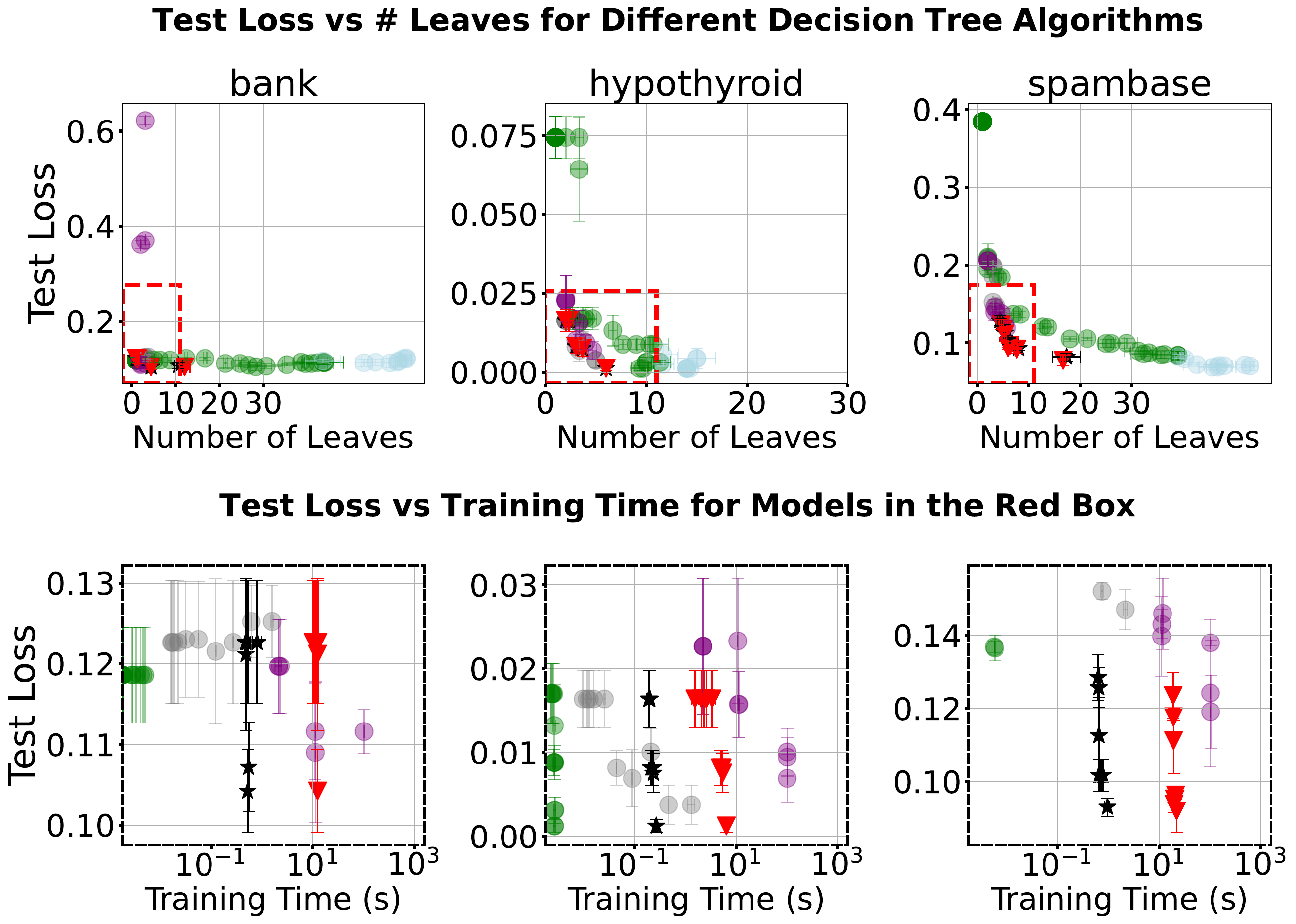}
    \caption{A performance comparison between our algorithm and those in literature. Depth $6$ -- Lookahead depth $2$.}
    \label{fig:split_comparisons_appendix_depth_6}
\end{figure}

\subsection{Many Near-Optimal Trees Exhibit Monotonically Decreasing Optimality Gaps Closer to Leaves}
\label{sec:near_optimal_monotonic}
Consider an $\epsilon$-optimal tree $T \in \mathcal{R}(D, \lambda, \epsilon, d)$. For a subtree $t$ of $T$, define $\lambda_{t}$ as the value of $\lambda$ that results in the greedy tree, $T_{g}$, having the same number of leaves as $t$. We now define the \textit{optimality gap} $\delta(D_{t}, t)$ as the difference between the loss of $t$ and the loss of an equally sparse greedy tree on the sub-problem associated with $t$. This enables a fair performance comparison between greedy and optimal trees, as the training loss of any given tree will otherwise monotonically decrease with the number of leaves. 
\begin{equation}
    \delta(D_{t}, t) = L(t, D_{t}, \lambda) - L(T_{g}(D_{t}, \text{depth}(t), \lambda_{t}), D_{t},\lambda_t).
\end{equation}
For a tree $T \in \mcR$, we then compute the average optimality gap associated with subtrees at each level. That is, given a level $\ell$, we compute:
\begin{equation}
   \beta(T, D, \ell) = \frac{\sum\limits_{\textrm{$t$} \in T }\delta(D_{t}, t)\mathbbm{1}[\textrm{$t$ is rooted at level $\ell$}]}{{\sum\limits_{\textrm{$t$} \in T }\mathbbm{1}[\textrm{$t$ is rooted at level $\ell$}]}}.
\end{equation}
We want to determine if $\beta(T, D, l)$ is monotonically decreasing with $\ell$ for a given tree $T$ -- if this is true, then being greedy closer to the leaf does not incur much loss in performance. Our intuition is as follows: if there are many such near optimal trees, then a semi-greedy search strategy could potentially uncover at least one of them. The following statistic computes the proportion of all trees in the Rashomon set that have monotonically decreasing optimality gaps as $\ell$ increases (i.e., moves from root towards leaves):
\begin{equation}
m(D,\lambda, \epsilon,d) =\frac{
    \sum_{T \in \mathcal{R}(D,\lambda, \epsilon, d)} 
    \mathbbm{1}\left[ \beta(T,D,\ell) 
    \text{ is monotonically decreasing with } \ell \right]
}{|\mathcal{R}(D,\lambda, \epsilon,d)|
}.
\end{equation}
Figure \ref{fig:rset_monotonically_decreasing_optimality_gap} shows this statistic for Rashomon sets with varying values of the sparsity penalty $\lambda$. We fix $\epsilon = 0.025$. The sparser a near-optimal tree, the more likely that it will be greedy, however, for all datasets, there exist near-optimal trees with monotonically decreasing optimality gaps even for low sparsity penalties. This has important algorithmic implications for developing interpretable models, because it means that a search strategy that is increasingly greedy near the leaves can produce a near-optimal tree.

\begin{figure}[H]
    \centering
    \includegraphics[width=0.7\linewidth]{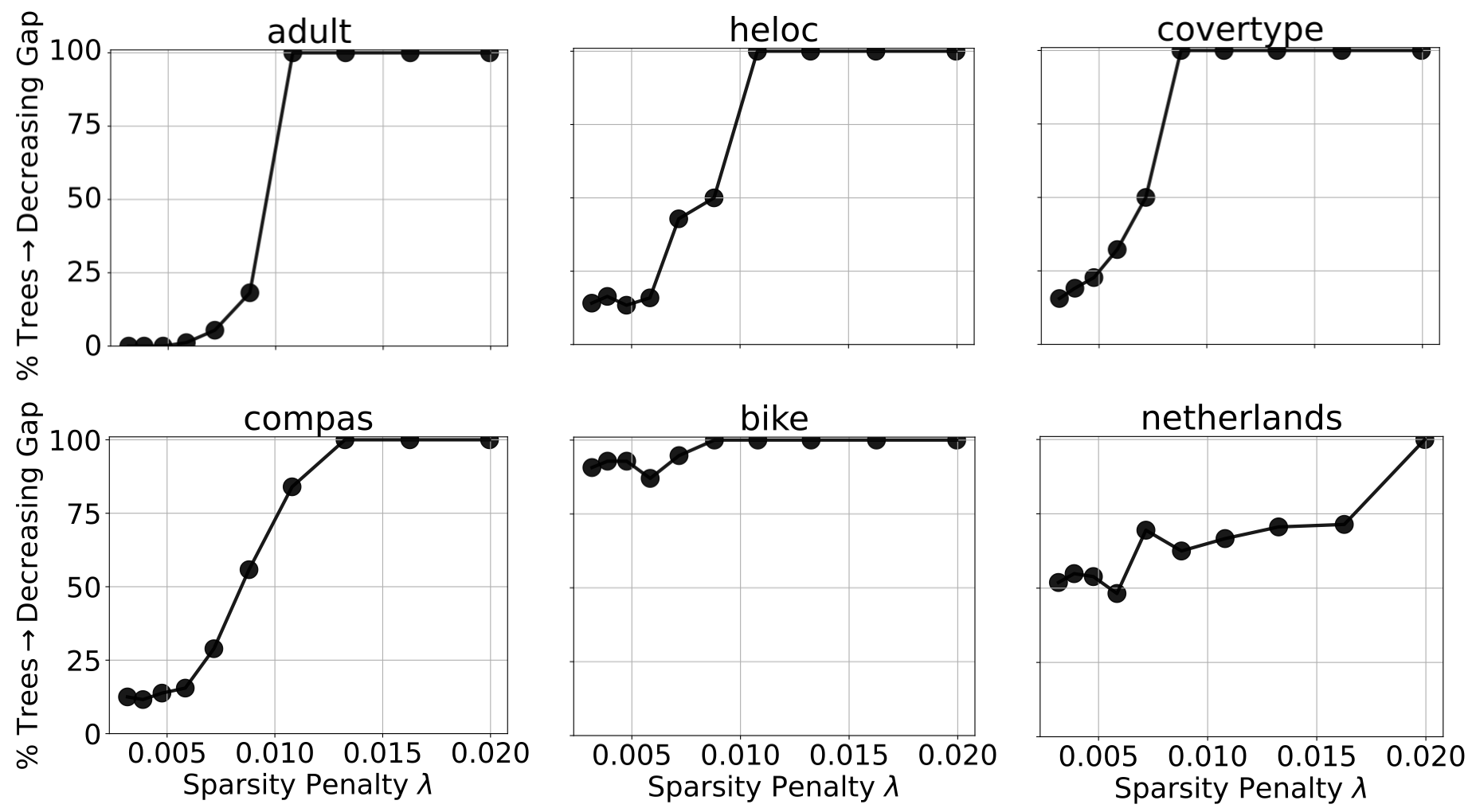}
    \caption{Percentage of trees in the Rashomon set that exhibit monotonically decreasing optimality gaps. For sparse trees (i.e., where $\lambda$ is larger), we are more likely to find a tree whose optimality gap is consistently decreasing at each level. This suggests that behaving greedily only near the leaves can produce a well-performing tree.}
    \label{fig:rset_monotonically_decreasing_optimality_gap}
\end{figure}

\subsection{Miscellaneous Properties of SPLIT}
\label{sec:appendix_evaluation}
\subsubsection{Which Lookahead Depth Should I Use?}
In this section, we explore the effect of the lookahead depth on the runtime and regularised test and train losses. We use the aggressively binarized versions of the datasets, as elaborated in Section \ref{sec:experimental_setup}. 
\begin{figure}[H]
    \centering
    \includegraphics[width=0.8\linewidth]{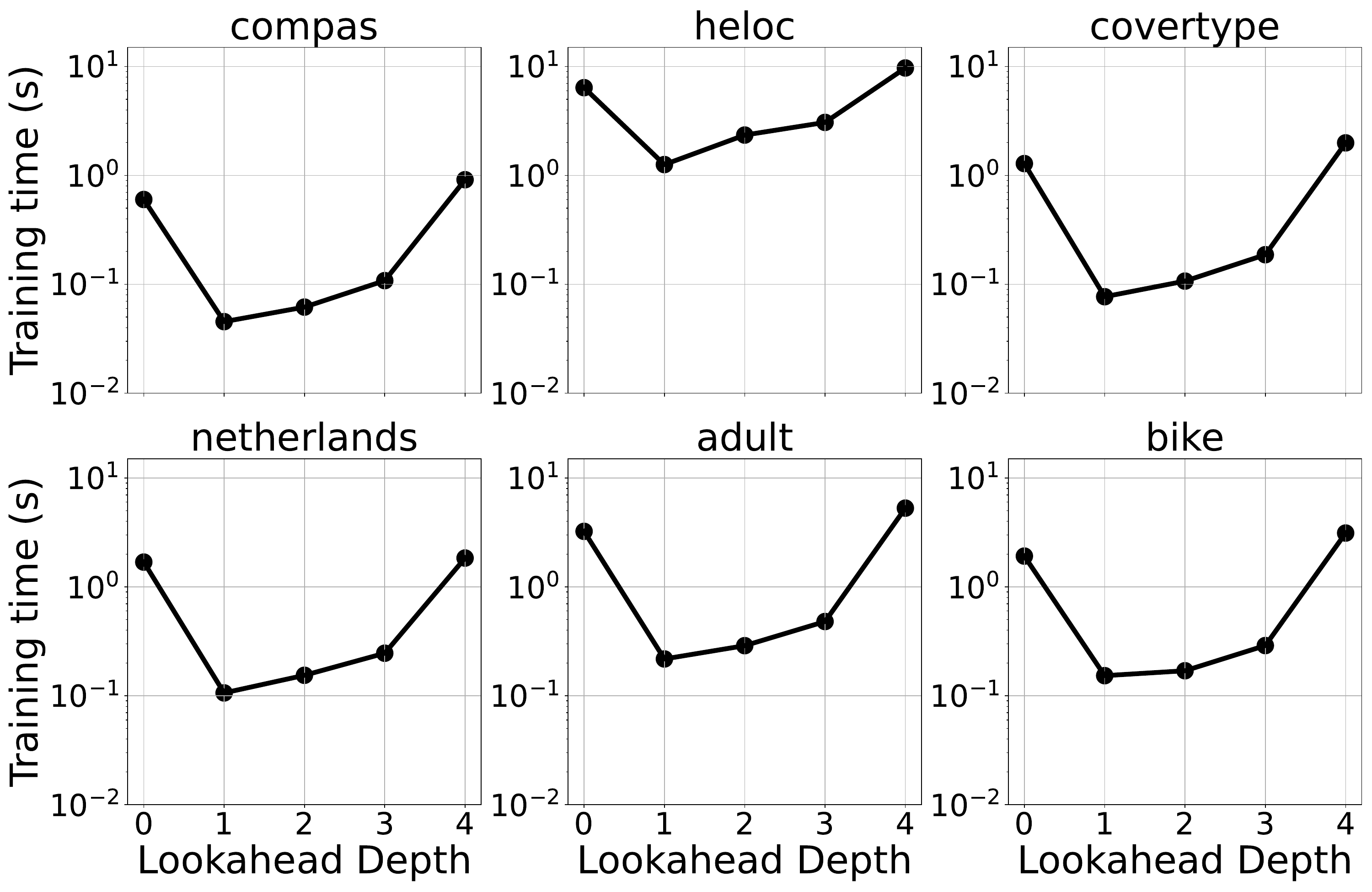}
    \caption{Runtime as a function of the lookahead depth. $\lambda = 0.001$}
    \label{fig:lookahead_depth_runtime}
\end{figure}
\begin{figure}[H]
    \centering
    \includegraphics[width=0.8\linewidth]{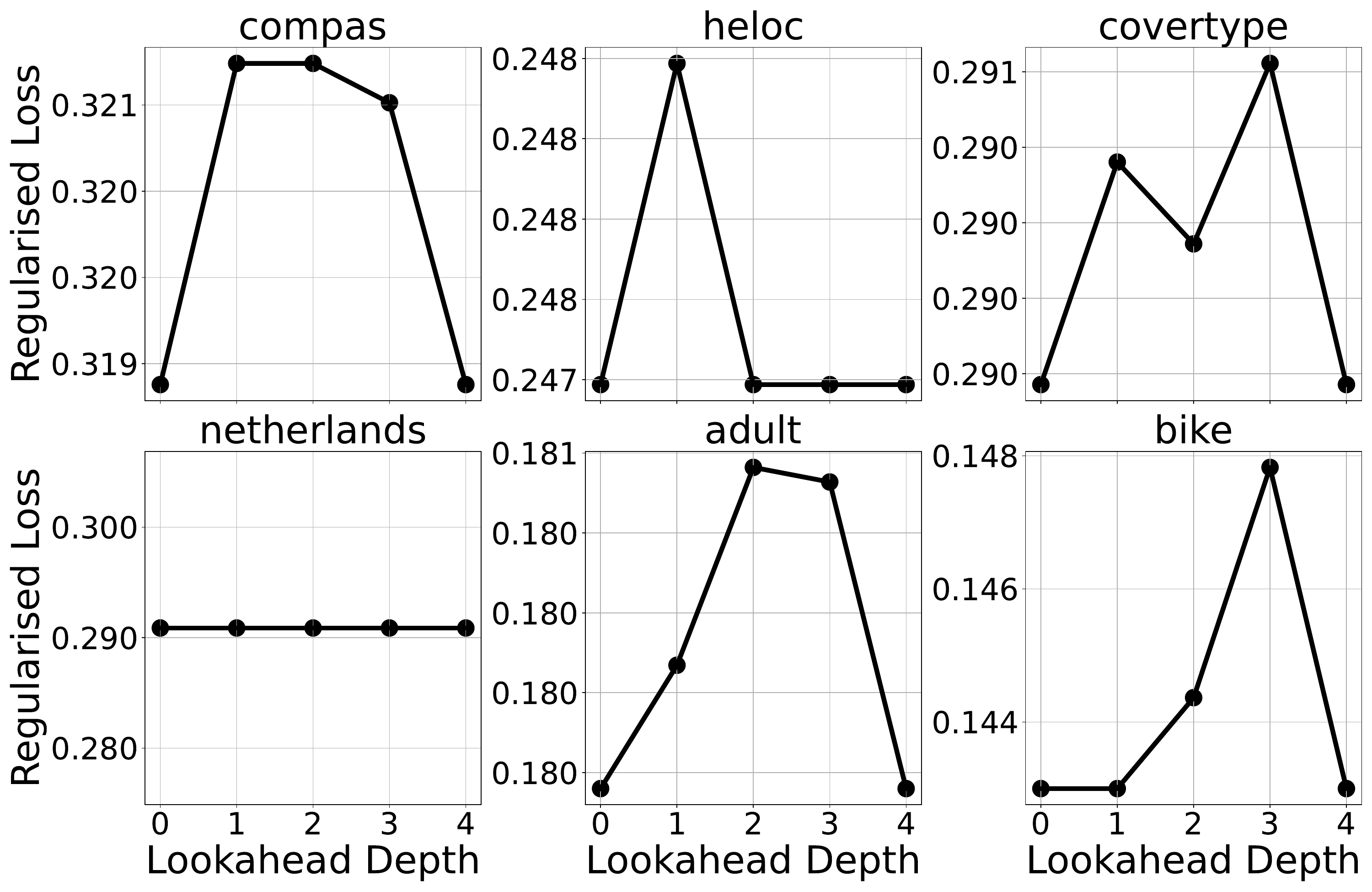}
    \caption{Regularised \textbf{training loss} as a function of the lookahead depth. $\lambda = 0.001$}
    \label{fig:lookahead_depth_train_loss}
\end{figure}
\begin{figure}[H]
    \centering
    \includegraphics[width=0.8\linewidth]{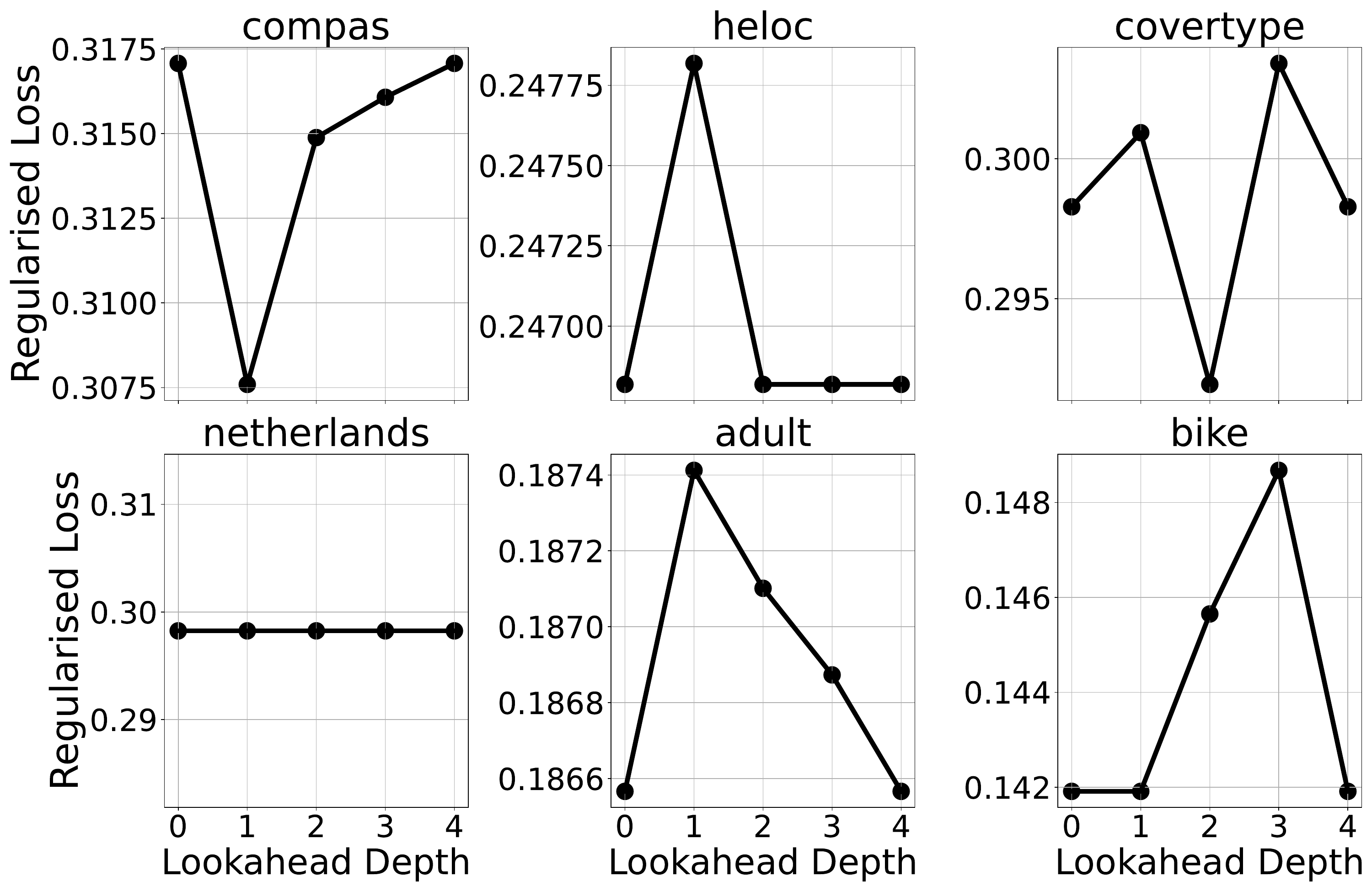}
    \caption{Regularised \textbf{test loss} as a function of the lookahead depth. $\lambda = 0.001$}
    \label{fig:lookahead_depth_test_loss}
\end{figure}
From the figures, we see that there indeed exists an optimal lookahead depth that minimizes the runtime of SPLIT. At this depth, however, there is only a small increase in regularised training loss. Surprisingly, the test loss can also be lower at the runtime minimizing depth. 

\subsubsection{Are SPLIT Trees in the Rashomon Set?}
This evaluation characterises the near optimal behaviour of trees produced by our algorithms. In particular, we're interested in understanding how often trees produced by our algorithms lie in the Rashomon set. To do this, we sweep over values of $\lambda$. For each $\lambda$, we first generate SPLIT and LicketySPLIT trees and compute the minimum value of $\epsilon$ needed such that they are in the corresponding Rashomon set of decision trees with depth budget $5$ -- this is denoted by the respective frontiers of both algorithms. 
\begin{figure}[H]
    \centering
    \includegraphics[width=0.85\linewidth]{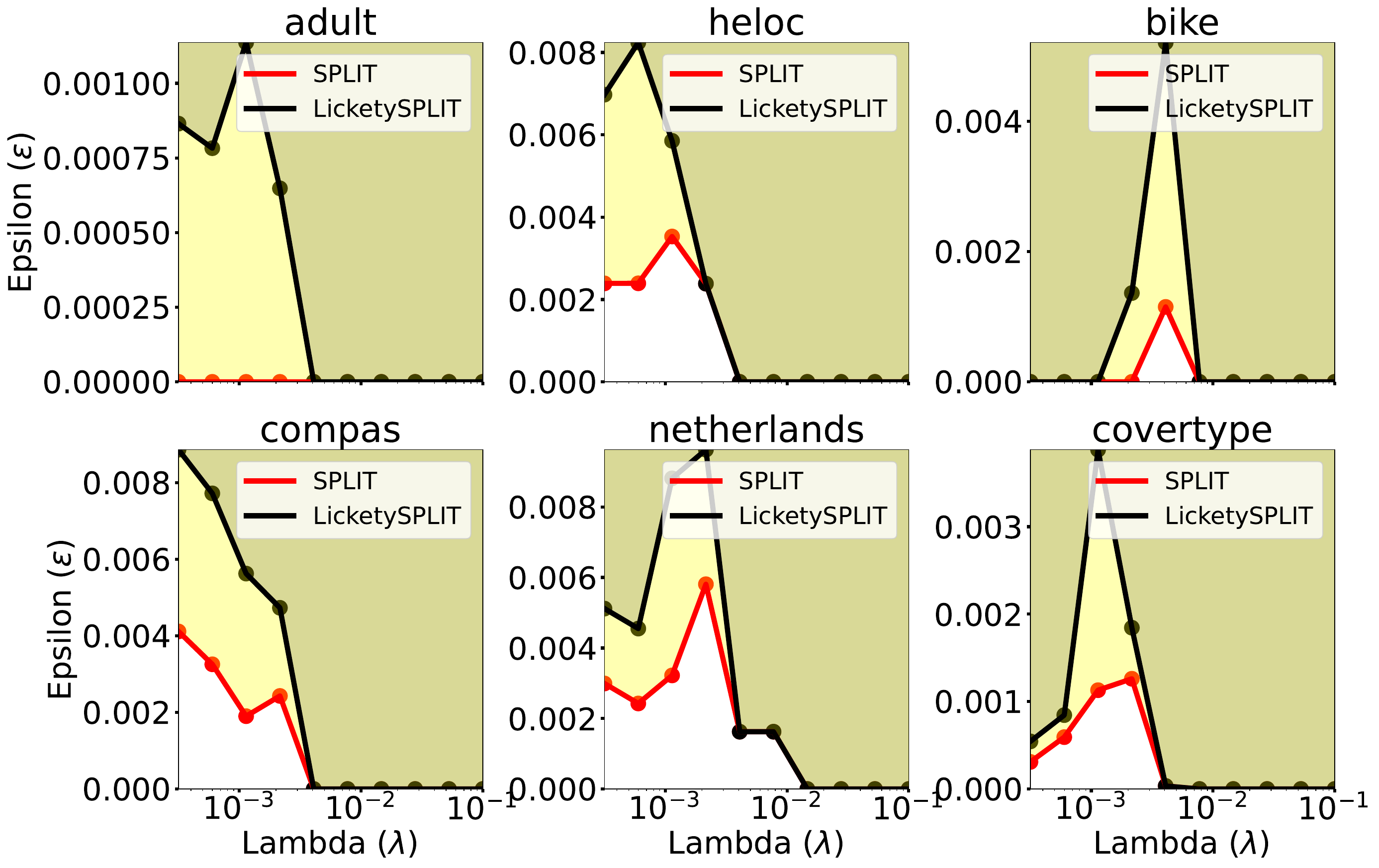}
    \caption{An illustration of near-optimality of our algorithms for depth budget $5$. The light yellow region represents the $(\lambda, \epsilon)$ configurations for which only SPLIT produces trees in the Rashomon set, while the darker region represents $(\lambda, \epsilon)$ values for which both SPLIT and LicketySPLIT produce trees in the Rashomon set. The figure shows that our trees are almost always in the Rashomon set even for small values of ($\epsilon$, $\lambda$). }
    \label{fig:lookahead_in_rset}
\end{figure}
Figure \ref{fig:lookahead_in_rset} shows that this minimum $\epsilon$ is small regardless of the value of $\lambda$.  While SPLIT has a smaller minimum $\epsilon$, implying a lower optimality gap, particularly noteworthy is the performance of LicketySPLIT. Despite admitting a polynomial runtime, it manages to lie in the Rashomon set even for $\epsilon$ as small as $10^{-3}$.
\subsubsection{SPLIT with Optimality Preserving Discretization}
\label{sec:optimality_gap_binarization}
In this section, we briefly consider how SPLIT performs under full binarization of the dataset. For a given dataset, we perform full binarization by collecting every possible threshold (i.e. split point) present in every feature. We then compare the resulting regularised test loss and runtimes to that of threshold guessing. 
\begin{itemize}
    \item For this experiment, we first randomly choose $2000$ examples from the Netherlands, Covertype, HELOC, and Bike datasets. Larger dataset sizes would produce around $10^5$ features for the fully binarized dataset, which would make optimization extremely expensive computationally. 
    \item We then produce two versions of the dataset -- a fully binarized version (which contains around $3000$-$5000$ features for each dataset), and a threshold-guessed version \cite{gosdt_guesses} with \texttt{num$\_$estimators} $=200$. The latter ensures that the number of features in the resulting datasets is between $40$-$60$. 
\end{itemize}
For a given dataset, let $D^*$ and $D_{tg}$ its the fully binarized and threshold guessed version. We then run SPLIT and LicketySPLIT on these datasets and compute the difference in regularised training loss between $D^*$ and $D_{tg}$. Figure \ref{fig:optimality_preserving_discretization} shows the resulting difference and the corresponding runtimes. 
\begin{figure}[H]
    \centering
    \includegraphics[width=0.7\linewidth]{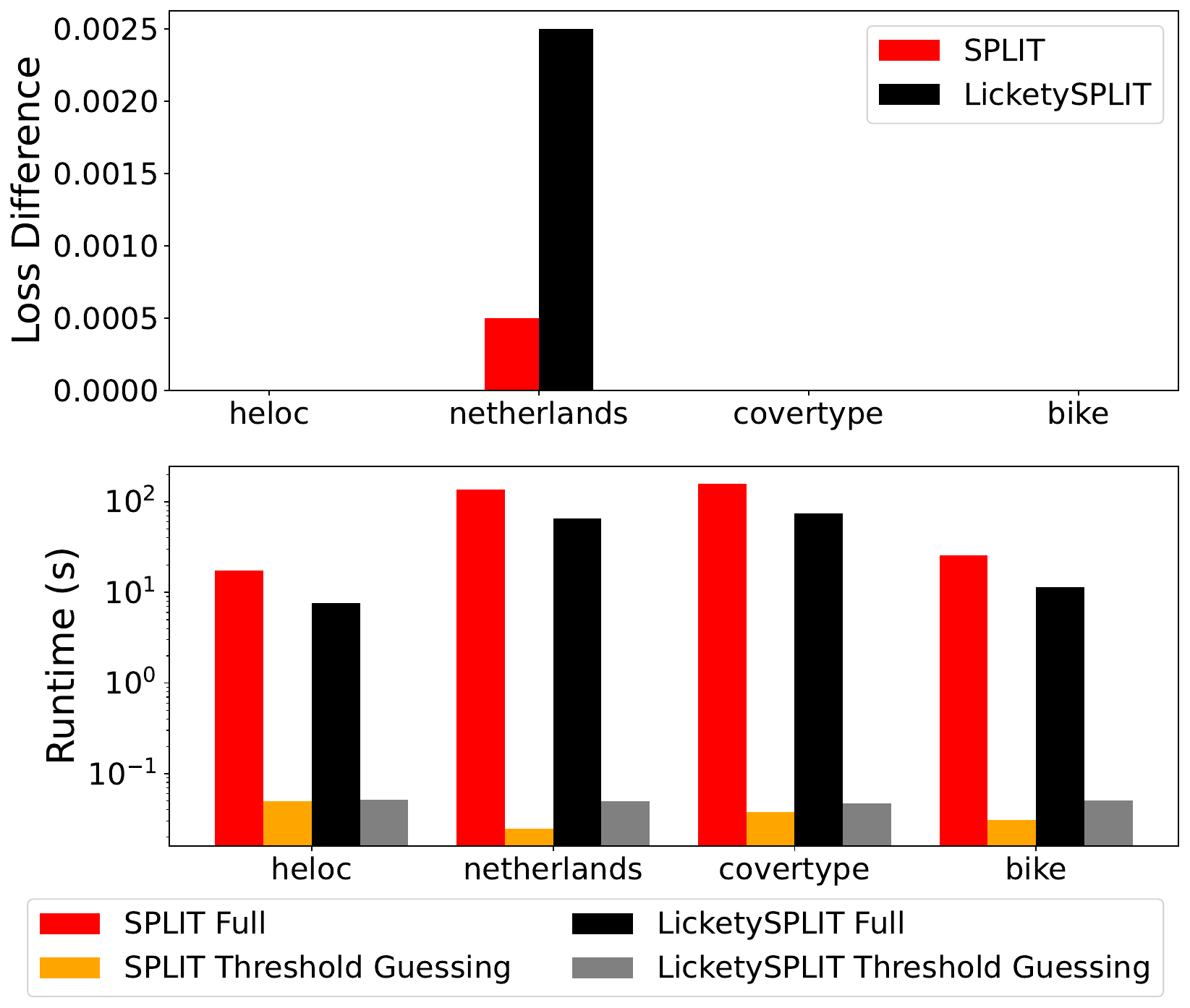}
    \caption{Difference in regularised training loss between SPLIT / LicketySPLIT trained on a fully binarized dataset vs the same dataset binarized using threshold guessing. We set $\lambda = 0.01$.}
    \label{fig:optimality_preserving_discretization}
\end{figure}
We see that there is almost no difference in loss between the fully binarized dataset and the threshold guessed dataset, suggesting that there is minimial sacrifice in performance when using SPLIT / LicketySPLIT with threshold guessing. Furthermore, using threshold guessing results in runtimes that are orders of magnitude faster. These observations have also been corroborated by \citet{gosdt_guesses}, though in the context of vanilla GOSDT.

\subsubsection{What is the performance gap between GOSDT post-processing for SPLIT / LicketySPLIT and purely greedy post-processing?}
We now examine the the additional improvement brought about by the GOSDT post-processing scheme for SPLIT and the recursive post-processing. 
We next illustrate the gap between SPLIT / LicketySPLIT trees and a tree that is trained purely using a lookahead strategy and behaving purely greedily subsequently. Concretely, we first solve Equation \ref{eqn:lookahead_eqn}, i.e:
\begin{align}
\label{eqn:lookahead_eqn}
    \mcL(D, d^\prime, \lambda) = 
    \begin{cases}
    \lambda + \min\Bigg\{\frac{|D^-|}{|D|}, \frac{|D^+|}{|D|}\Bigg\} &  \text{if $d^\prime=0$}\\
    \begin{aligned}
        \lambda + \min\Bigg\{\frac{|D^-|}{|D|}, \frac{|D^+|}{|D|},\min_{f \in \mcF}\Big\{L\Big(T_g\big(D(f),d^\prime, \lambda\big)\Big) + L\Big(T_g\big(D(\bar{f}), d^\prime, \lambda\big)\Big)\Big\}\Bigg\} \end{aligned} &  \text{if $d^\prime = d-d_l$} \\
    \begin{aligned}
    \lambda + \min\Bigg\{\frac{|D^-|}{|D|},\frac{|D^+|}{|D|},\min_{f \in \mcF}\Big\{\mcL\Big(D(f), d^\prime-1, \lambda\Big) + \mcL\Big(D(\bar{f}), d^\prime-1, \lambda\Big)\Big\}
        \Bigg\}\end{aligned} & \text{if $d^\prime > d-d_l$.}
    \end{cases}
\end{align}
Let $T_{\mcL, g}$ be the tree representing the solution to this equation - this is a lookahead prefix tree with greedy splits after depth $d_l$. Let $T_{SPLIT}$ be the tree that replaces the greedy subtree after depth $d_l$ with optimal GOSDT splits - this refers to lines $3$-$9$ in Algorithm \ref{alg::lookahead}. Let $T_{LSPLIT}$ be the tree that replaces the greedy subtree after depth $d_l$ with recursive LicketySPLIT subtrees (this refers to lines $3$-$7$ in Algorithm \ref{alg::recursive_lookahead}). We then vary the value of the sparsity penalty $\lambda \in [10^{-3},10^{-1}]$ and compute the post-processing gaps on the training dataset $D$: 
\begin{align}
    & L(T_{\mcL, g}, D, \lambda) - L(T_{SPLIT}, D, \lambda)\\ 
    &L(T_{\mcL, g}, D, \lambda) - L(T_{LSPLIT}, D, \lambda)
\end{align}

\begin{figure}[H]
    \centering
    \includegraphics[width=0.85\linewidth]{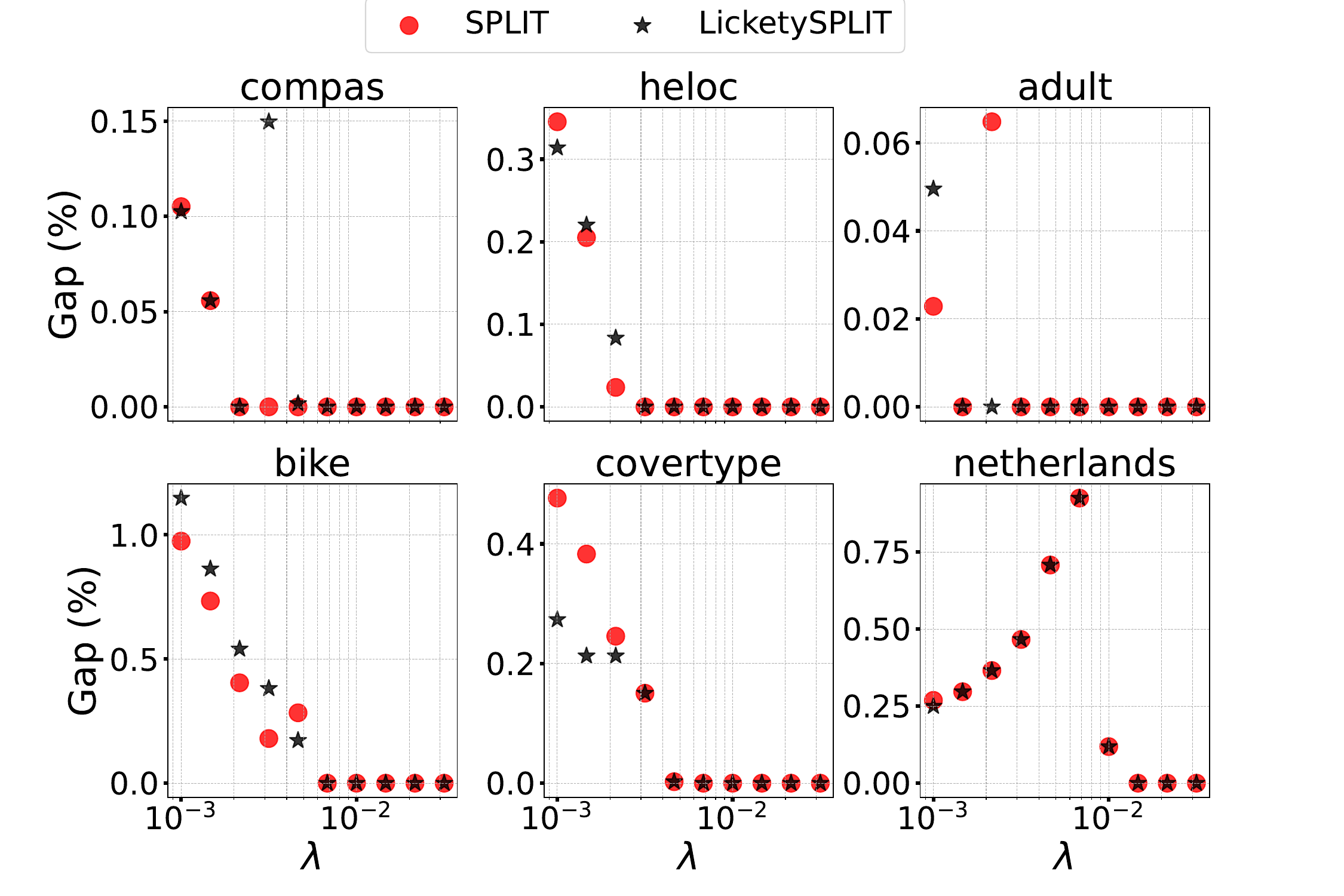}
    \caption{Gap (in $\%$ points) in accuracy between SPLIT / LicketySPLIT and a lookahead prefix tree followed by a purely greedy approach. Depth budget $= 5$. }\label{fig:gap_between_lookahead_and_greedy_after_lookahead_depth_lookahead_recursive}
\end{figure}
\subsection{SPLIT and LicketySPLIT Scaling Experiments}
\label{sec:split_scaling}
We now evaluate the scalability of SPLIT and its variants as the number of features increases. For each dataset evaluated, we use the threshold guessing mechanism from \cite{gosdt_guesses} to binarize the dataset. In particular:
\begin{itemize}
    \item We first train a gradient boosted classifier with a specified number of estimators $n_{est}$. Each estimator is a single decision tree stump with an associated threshold. 
    \item We then collect all the thresholds generated during the boosting process, order them by Gini variable importance, and remove the least important thresholds (i.e., any thresholds which result in any performance drop)
\end{itemize}
In this experiment, we choose $n_{est}$ in a logarithmically spaced interval between $20$ and $10^4$, to obtain binary datasets with $10$-$1000$ features. We set a conservative value of $\lambda = 0.001$ for SPLIT / LicketySPLIT, as from Figure \ref{fig:lookahead_in_rset}, this ensures that the optimality gap for our method is around $\sim 10^{-3}$.

Figure \ref{fig:feature_scaling_exp_split} shows the results of this experiment.
\begin{figure}[H]
    \centering
    \includegraphics[width=0.7\linewidth]{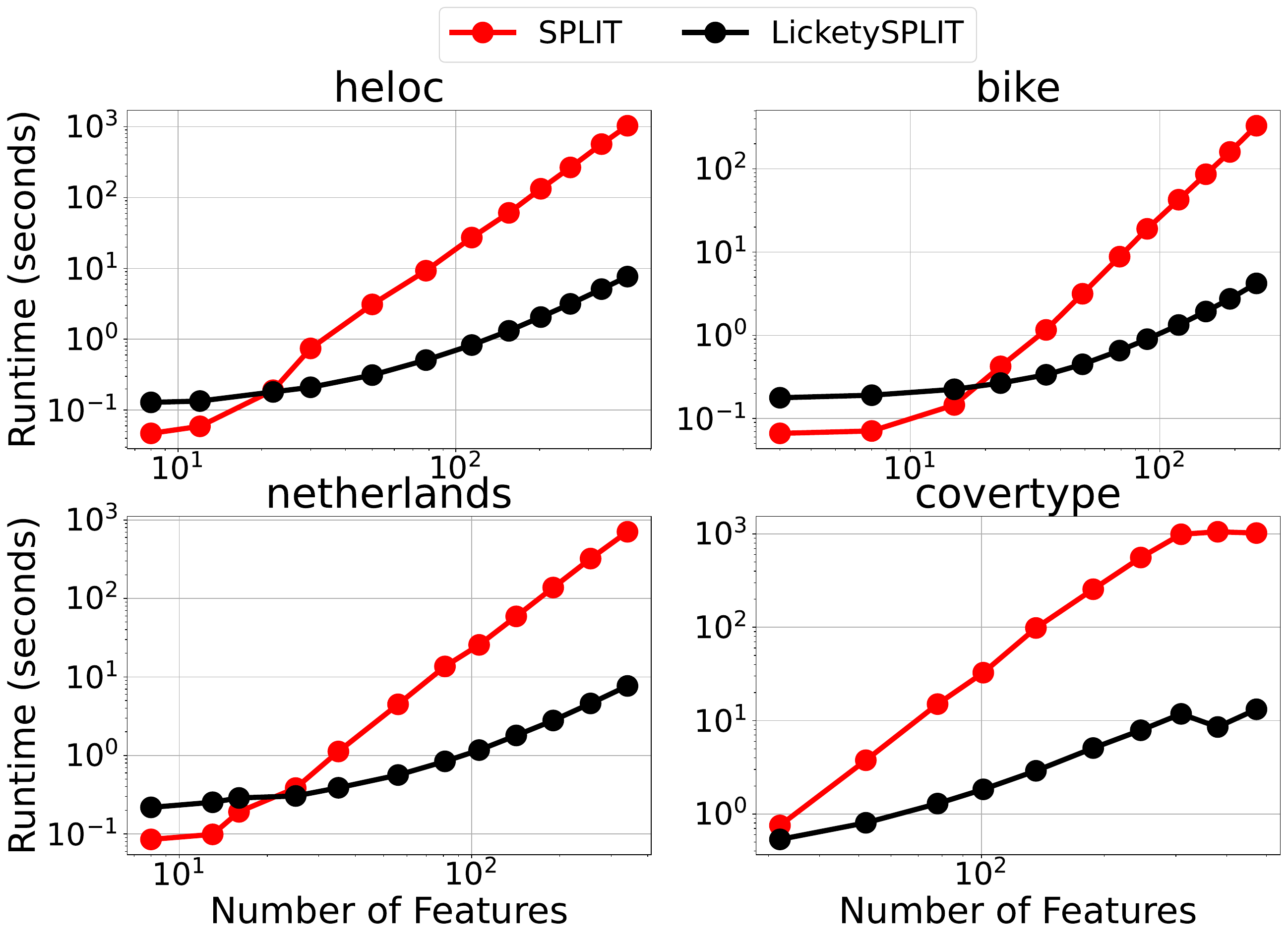}
    \caption{Runtime of SPLIT and LicketySPLIT as the number of features increases. $\lambda = 0.001$}
    \label{fig:feature_scaling_exp_split}
\end{figure}
\newpage

\subsection{Rashomon Importance Distribution Under RESPLIT vs TreeFARMS: Threshold Guessing}
In this section, we compare RESPLIT and TreeFARMS in terms of their ability to generate meaningful variable importances under the Rashomon Importance Distribution \cite{donnelly2023the}. This analysis is a more complete representation of that in Table \ref{tab:results}. The variable importance metric considered is Model Reliance (MR) - the precise details of how this is computed are in \cite{donnelly2023the}.  
\begin{figure}[H]
    \centering
    \includegraphics[width=0.78\linewidth]{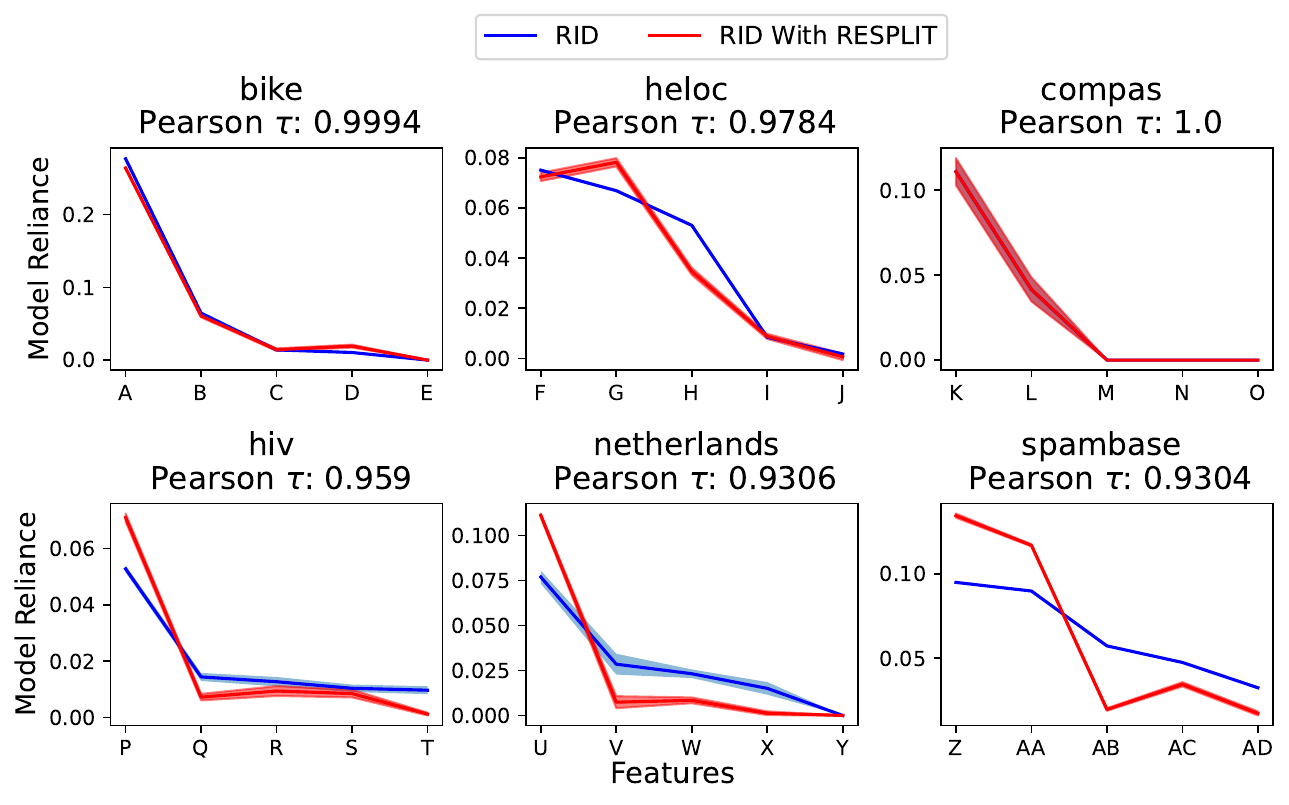}
    \includegraphics[width=0.89\linewidth]{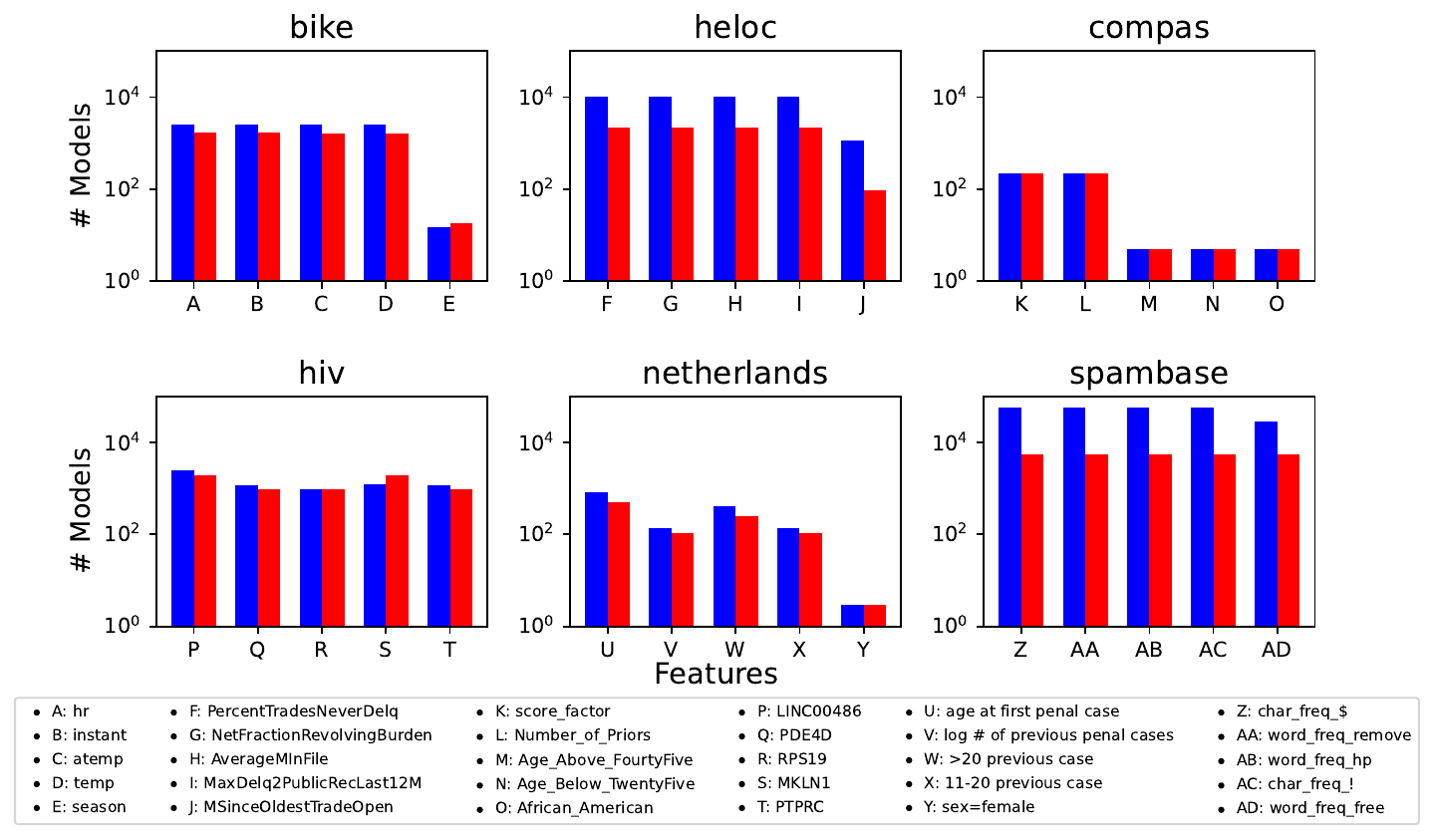}
    \caption{(top) Model Reliance for the top $5$ features when the Rashomon Importance Distribution is computed in its original form (with TreeFARMS), and when RESPLIT is used as the Rashomon set generating algorithm. The reported Pearson correlation is computed between the top $20$ features. We see that it is very close to $1$, i.e. features that are important under RID will also remain important when RESPLIT is used. (bottom) The number of models across the bootstrapped Rashomon sets which split on a given feature. We note from the bar plots that RESPLIT is also able to generate a large number of trees - often times as many as TreeFARMS. \\ \textbf{Parameters}: $\lambda = 0.02, \epsilon = 0.01, \#$ bootstrapped datasets $=10$, depth budget $= 5$, lookahead depth $= 3$.}
    \label{fig:rid_under_resplit_vs_treefarms}
\end{figure}
\subsection{Rashomon Importance Distribution Under RESPLIT: Quantile Binarization}
In this section, we show similar results as in the previous section, but when datasets are binarized using feature quantiles. We chose $3$ quantiles per feature (corresponding to each $3$rd of the distribution), resulting in datasets with $3\times$ the number of features. For most of these datasets, RID with TreeFARMS failed to run in reasonable time.
\begin{figure}[H]
    \centering
    \includegraphics[width=0.91\linewidth]{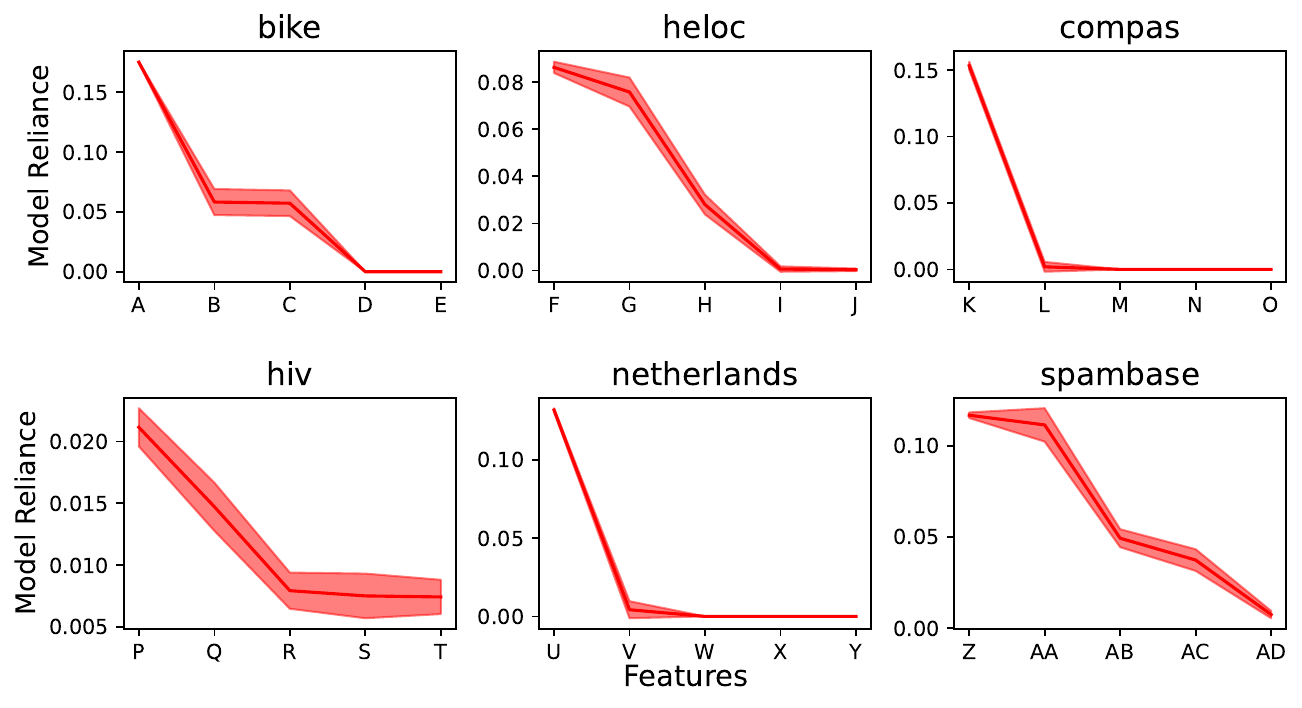}
    \includegraphics[width=1.035\linewidth]{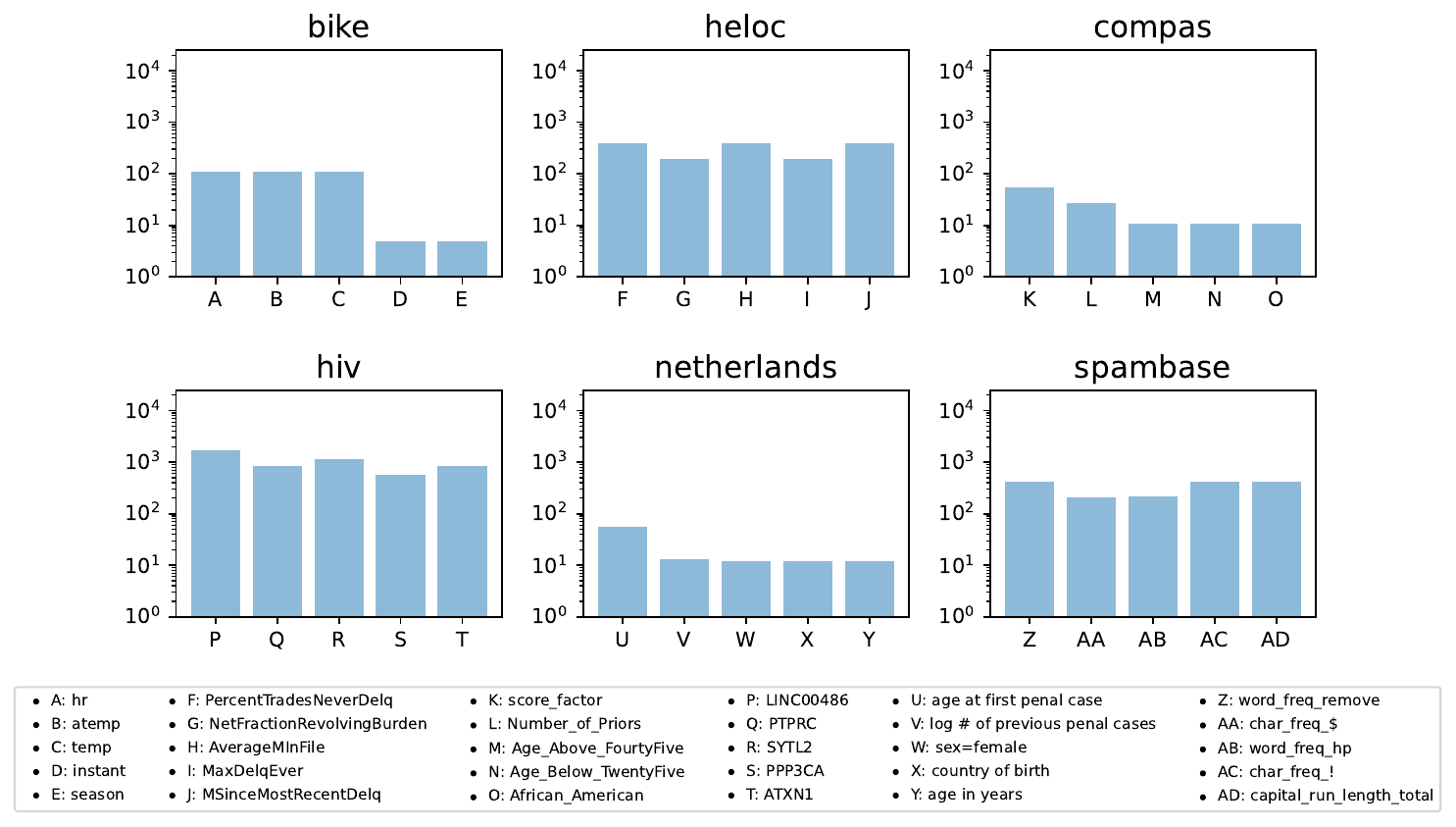}
    \caption{(top) Model Reliance under RESPLIT for the top $5$ features. (bottom) The number of models across the bootstrapped Rashomon sets which split on a given feature. We note that the features which are important for RESPLIT under threshold guessing are also similarly important under quantile binarization, suggesting that our approach can generalize to different binarization schemes.\\ \textbf{Parameters}: $\lambda = 0.02, \epsilon = 0.01, \#$ bootstrapped datasets $=10$, depth budget $= 5$, lookahead depth $= 3$. }
    \label{fig:rid_under_resplit_quantile}
\end{figure}
\subsection{Experimental Setup}
\label{sec:experimental_setup}
\subsubsection{Datasets}
In this paper, we performed experiments with $10$ datasets:
\begin{itemize}
    \item The \textbf{Home Equity Line of Credit (HELOC)} \cite{heloc} dataset used for the Explainable ML Challenge. This dataset aims to predict the risk of loan default given the credit history of an individual. It consists of 23 features related to financial history, including FICO (credit) score, loan amount, number of delinquent accounts, credit inquiries, and other credit performance indicators. The dataset contains approximately 10,000 instances. 
    \item Two recidivism datasets \textbf{(COMPAS and Netherlands)}. COMPAS \citep{LarsonMaKiAn16} aims to predict the likelihood of recidivism (reoffending) for individuals who have been arrested. The dataset consists of approximately $6000$ instances and includes $11$ features including demographic attributes, criminal history, risk of general recidivism, and chargesheet information. The Netherlands dataset \citep{tollenaar2013method} is a similar recidivism dataset containing demographic and prior offense features for individuals, used to predict reoffending risk.
    \item The \textbf{Covertype} dataset \citep{covertype}, which aims to predict the forest cover (one of $7$ types) for areas of the Roosevelt National Forest in northern Colorado, based on cartographic data. It contains $54$ attributes derived from US Geological Survey data. These include continuous variables like elevation, aspect, slope, and others related to soil type and climate. The dataset has over 580,000 instances, each corresponding to a $30$m $\times$ $30$m patch of the forest. 
    \item The \textbf{Adult} dataset \citep{adult_2}, which aims  to predict whether an individual's income exceeds $\$50000$ per year based on demographic and occupational information. It contains around $50000$ train and test examples, with $14$ features.
    \item The \textbf{Bike} dataset \cite{fanaee2013event}, which contains a two-year historical log of bikeshare counts from $2011$-$2012$ in Washington D.C., USA. It contains features relating to the weather at every hour -- with the aim being to predict the number of bike rentals in the city in that given time period.
    \item The \textbf{Hypothyroid} dataset \citep{thyroid_disease_102}, which contains medical records used to predict whether a patient has hypothyroidism based on thyroid function test results and other medical attributes. It includes categorical and continuous variables such as TSH (thyroid-stimulating hormone) levels, age, and presence of goiter, with thousands of instances.
    \item The \textbf{Spambase} dataset \citep{spambase_94}, which consists of email data used to classify messages as spam or not spam. The dataset contains 57 features extracted from email text, such as word frequencies, capital letter usage, and special character counts, with around 4600 instances.
    \item The \textbf{Bank} dataset \citep{bank_marketing_222}, which is used to predict whether a customer will subscribe to a bank term deposit based on features like age, job type, marital status, education level, and past marketing campaign success. It consists of approximately 4500 instances with 16 attributes.
    \item The \textbf{HIV} dataset \citep{hiv} contains RNA samples from $2$ patients. The labels correspond to whether the observed HIV viral load is high or not. 
\end{itemize}
One reason for our choice of datasets was that we wanted to stress-test our methods in scenarios where the dataset has $\mathcal{O}(10^3-10^5$) examples - our smallest dataset has $2623$ examples and the largest almost has almost $600000$ examples. There are a number of datasets from prior work (e.g. Monk1, Monk2, Monk3, Iris, Moons, Breast Cancer) which only have $\mathcal{O}(10^2)$ examples - for these, many optimal decision tree algorithms are fast enough (i.e. operating in the sub-second regime) that limits any practical scalability improvements. Our aim was to go from the $\mathcal{O}$(hours) regime to the sub-1 second regime, hence, we chose datasets whose size would best reflect the performance improvements we were hoping to showcase.
\subsubsection{Preprocessing}
\begin{itemize}
    \item We first exclude all examples with missing values
    \item We correct for class imbalances by appropriately resampling the majority class. This was the most prevalent in the HIV dataset, where we observed a $90:10$ class imbalance. We corrected this by randomly undersampling the majority class. 
    \item All datasets have a combination of categorical and continuous features, while SPLIT / LicketySPLIT / RESPLIT and many other decision tree algorithms require binarization of features. We therefore use the threshold guessing mechanism of binarization from \citet{gosdt_guesses}, which can handle both these feature types. In particular:
    \begin{itemize}
        \item We first train a gradient boosted classifier with a specified number of estimators $n_{est}$. Each estimator is a single decision tree stump with an associated threshold. 
        \item We then collect all the thresholds generated during the boosting process, order them by Gini variable importance, and remove the least important thresholds (i.e., any thresholds which result in any performance drop)
    \end{itemize}
    We store three binarized versions of each dataset for experiments with SPLIT and LicketySPLIT:
    \begin{itemize}
        \item For version $1$, we chose $n_{est}$ for each dataset such that the resulting binarized dataset has between $40$-$100$ features. This is the version used for experiments in Figures \ref{fig:comparisons}, \ref{fig:comparisons_2}, \ref{fig:split_comparisons_appendix_depth_4}, \ref{fig:split_comparisons_appendix_depth_6} when we compare SPLIT / LicketySPLIT with other datasets.
        \item For version $2$, we chose $n_{est}$ for each dataset such that the resulting dataset has around $20$-$25$ features. This is the version used when we use the TreeFARMS algorithm \cite{xin2022treefarms} to generate Rashomon sets to explore the properties of near optimal decision trees, as TreeFARMS can be very slow otherwise. Figures \ref{fig:greedy_heatmap_rset} and \ref{fig:rset_monotonically_decreasing_optimality_gap} use this version of the datasets.
        \item We additionally store another version of the datasets which is fully binarized, i.e., every possible split point is considered. Section \ref{sec:optimality_gap_binarization} uses this version of the dataset is to justify the use of threshold guessing in the context of our algorithm. 
    \end{itemize}
    \item Additionally, for aggressively binarized version of the dataset (i.e., version $2$), we subsample Covertype so that it has $\approx 20000$ examples. This is again to ensure that the TreeFARMS algorithm runs in a reasonable amount of time.
\end{itemize}
We also show scaling experiments for our algorithms, which are described in Section \ref{sec:split_scaling}. 
\begin{table}[H]
\centering
\begin{tabular}{|l|c|c|c|c|}
\hline
\textbf{Data Set} & \textbf{Samples} & \textbf{$\#$ Features} & \textbf{$\#$ Features After Binarization} & \begin{tabular}[c]{@{}c@{}}$\#$ \textbf{Features After} \\ \textbf{Aggressive Binarization}\end{tabular} \\
\hline
HELOC & 10459 & 24 & 62 & 23 \\
COMPAS & 6172 & 12 & 39 & 24 \\
Adult & 32561 & 15 & 65 & 23 \\
Netherlands & 20000 & 10 & 52 & 23 \\
Covertype & 581012 & 55 & 41 & 21 \\
Bike & 17379 & 17 & 99 & 23 \\
Spambase & 4600 & 57 & 78 & 23\\
Hypothyroid & 2643 & 30 & 72 & 23\\
Bank & 4521 & 16 & 67 & 23\\
\hline
\end{tabular}
\caption{Characteristics of the $9$ datasets tested in this paper for LicketySPLIT and SPLIT experiments. We generate two binarized versions of each dataset using the threshold guessing mechanism in \cite{gosdt_guesses} which are used for different sets of experiments. }
\label{tab:datasets_split}
\end{table}

\begin{table}[H]
\centering
\begin{tabular}{|l|c|c|c|}
\hline
\textbf{Data Set} & \textbf{Samples} & \textbf{$\#$ Features} & \textbf{$\#$ Features After Binarization} \\
\hline
HELOC       & 10459  & 24  & 47  \\
COMPAS      & 6172   & 12  & 39  \\
Netherlands & 20000  & 10  & 52  \\
Bike        & 17379  & 17  & 99  \\
Spambase    & 4600   & 57  & 78  \\
HIV         & 4521   & 100 & 57  \\
\hline
\end{tabular}
\caption{Characteristics of the $6$ datasets tested in this paper for RESPLIT experiments. As in Table \ref{tab:datasets_split}, we use the threshold guessing mechanism for binarization.}
\label{tab:datasets_resplit}
\end{table}

\subsubsection{Details of Comparative Experiments for SPLIT and LicketySPLIT}
\begin{itemize}
    \item \textbf{Greedy}: This is the standard scikit-learn DecisionTreeClassifier class that implements CART. We vary the sparsity of this algorithm by changing the \texttt{min$\_$samples$\_$leaves} argument. This is the minimum number of examples required to be in a leaf in order for CART to make further a split at that point. 
    \item \textbf{GOSDT} \cite{gosdt}: We vary the sparsity parameter $\lambda$, choosing equispaced values from $0.001$ to $0.02$. 
    \item \textbf{SPLIT / LicketySPLIT}. We search over the same $\lambda$ values as GOSDT. For SPLIT, additionally, we set the lookahead depth to be $1$. 
    \item \textbf{Thompson Sampled Decision Trees (TSDT) \cite{thompson_aistats}}: Following the practices described in the Appendix Section $B$ of their paper, we fix the following parameters: 
    \begin{itemize}
        \item $\gamma = 0.75$
        \item Number of iterations $= 10000$
    \end{itemize}
    Additionally, we also fix the following parameters, based on the Jupyter notebooks in the Github repository of TSDT. 

    \begin{itemize}
        \item \texttt{thresh$\_$tree} $= -1e-6$
        \item \texttt{thresh$\_$leaf} $= 1e-6$
        \item \texttt{thresh$\_$mu} $= 0.8$
        \item \texttt{thresh$\_$sigma} $= 0.1$
    \end{itemize}
    To obtain different levels of sparsity, we vary the $\lambda$ parameter. We experiment with $3$ values of $\lambda: \{0.0001, 0.001, 0.01\}$. For each value of $\lambda$, we also experiment with different time limits for the algorithm: $\{1,10,100,1000\}$. Lastly, we use the FAST-TSDT version of their code, as according to the paper, it strikes a good balance between speed and performance (which is consistent with our paper's motivation).
    \item \textbf{Murtree} \cite{murtree}: For this method, we vary the \texttt{max$\_$num$\_$nodes}, which is the hard sparsity constraint imposed by Murtree on the number of leaves. 
    \begin{itemize}
        \item For depth $5$ trees, we choose \texttt{max$\_$num$\_$nodes} in the set $\{4,5,6,7,8,9,10,11\}$. 
        \item For depth $4$ trees, we choose \texttt{max$\_$num$\_$nodes} in the set $\{3, 4,5,6,7\}$. 
    \end{itemize}
    \item \textbf{MAPTree} \cite{maptree}: The paper has two hyperparameters, $\alpha$ and $\beta$, which in theory control for sparsity in theory by adjusting the prior. However, the authors show that MAPTree does not exhibit significant sensitivity to $\alpha$ and $\beta$ across any metric. Therefore, the only parameter we choose to vary for this experiment is \texttt{num$\_$expansions}. We chose $10$ values of this parameter in a logarithmically spaced interval from $[10^0, 10^{3.5}]$. 
    \item \textbf{Top-k (DL8.5)} \cite{topk}: The paper also does not specify how to vary the sparsity parameter - hence, we vary $k$ from $1$-$10$. 
\end{itemize}
Note that there is no depth budget hyperparameter for MAPTree and TSDT, but we still show these algorithms across all experiments for comparative purposes.
\paragraph{Our method vs another bespoke-greedy approach}
We briefly discuss another decision tree optimization algorithm from \cite{slow_greedy} that demonstrates good performance on a tabular dataset. This method first proposes a novel greediness criterion called the $(\alpha,\beta)$-Tsallis entropy, defined as: 
\begin{equation}
    g(\alpha,\beta) = \frac{C}{\alpha-1}\Bigg(\Big(1-\sum_{i=1}^cp_i^\alpha\Big)^\beta\Bigg)
\end{equation}
where $P = \{p_i\}$ is a discrete probability distribution. Then a decision tree is trained in CART-like fashion, but with this greediness criteria instead. Note that 

For the COMPAS dataset, which is one of the smaller datasets in our experiments, we conducted a brief evaluation by averaging results over $3$ trials for $3$ different values of the hyperparameters $\alpha$ and $\beta$, arranged in a grid-based configuration as defined in \cite{slow_greedy}. These hyperparameters influence the functional form of the above greedy heuristic. Below, we summarize the key settings and observations:
\begin{itemize}
    \item \textbf{Values of $\alpha$}: [0.5, 1, 1.5]
    \item \textbf{Values of $\beta$}: [1, 2, 3]
\end{itemize}
\textbf{Observations}:
\begin{enumerate}
    \item The method in \cite{slow_greedy} achieves approximately \textbf{31.6\% test error} with around \textbf{10 leaves}, requiring an average of \textbf{10 minutes} to train for a single hyperparameter setting. Another thing to note is that it isn't clear a-priori which hyperparameter will lead to the best performance (in terms of the desired objective in Equation \ref{eqn:obj}), so many different combinations of hyper-parameters might need to be tested in order to find a well-performing tree.
    \item \textbf{SPLIT} achieves approximately \textbf{31.9\% test error} with fewer than \textbf{10 leaves} in approximately \textbf{1 second}.
    \item \textbf{LicketySPLIT} achieves approximately \textbf{31.9\% test error} with fewer than \textbf{10 leaves} in under \textbf{1 second}.
\end{enumerate}
In summary, our proposed methods are over \textbf{600x faster} than \cite{slow_greedy}, with a negligible difference in test performance.

\paragraph{A Note on Comparative Experiments with Blossom \cite{demirovic2023blossom}}
We briefly compare SPLIT with Blossom, an anytime decision tree algorithm incorporates greedy heuristics to guide search order, albeit in a bottom up manner. To our understanding, Blossom has no hyperparameters we can tune (except depth budget, min size, and min depth), which limits its flexibility in adapting to various datasets. 

\begin{table}[H]
\centering
\scriptsize
\resizebox{\textwidth}{!}{%
\begin{tabular}{lcccccc}
\toprule
\textbf{Dataset} & \multicolumn{2}{c}{\textbf{Runtime (s)}} & \multicolumn{2}{c}{\textbf{Test Loss}} & \multicolumn{2}{c}{\textbf{\# Leaves}} \\
\cmidrule(r){2-3} \cmidrule(r){4-5} \cmidrule(r){6-7}
& Blossom & LicketySPLIT & Blossom & LicketySPLIT & Blossom & LicketySPLIT \\
\midrule
compas        & 0.442 [0.381, 0.476] & \textbf{0.334} [0.332, 0.336] & 0.314 [0.303, 0.323] & 0.317 [0.305, 0.329] & 32.0 [32.0, 32.0] & \textbf{8.0} [8.0, 8.0] \\
adult         & 16.223 [15.556, 16.744] & \textbf{1.459} [1.453, 1.465] & 0.177 [0.175, 0.179] & \textbf{0.177} [0.173, 0.181] & 32.0 [32.0, 32.0] & \textbf{7.3} [6.4, 8.3] \\
netherlands   & 6.161 [6.128, 6.194] & \textbf{0.627} [0.622, 0.632] & 0.287 [0.282, 0.291] & 0.292 [0.288, 0.296] & 32.0 [32.0, 32.0] & \textbf{7.7} [6.7, 8.6] \\
heloc         & 27.179 [26.810, 27.548] & \textbf{0.510} [0.502, 0.518] & 0.286 [0.281, 0.291] & \textbf{0.294} [0.286, 0.303] & 32.0 [32.0, 32.0] & \textbf{7.0} [6.2, 7.8] \\
spambase      & 18.334 [18.200, 18.478] & \textbf{0.487} [0.482, 0.492] & 0.090 [0.085, 0.094] & \textbf{0.087} [0.085, 0.088] & 32.0 [32.0, 32.0] & \textbf{13.7} [13.2, 14.1] \\
bike          & 158.744 [157.138, 159.964] & \textbf{1.679} [1.633, 1.725] & 0.112 [0.108, 0.115] & \textbf{0.121} [0.117, 0.125] & 32.0 [32.0, 32.0] & \textbf{12.3} [11.9, 12.8] \\
bank          & 11.452 [11.053, 11.673] & \textbf{0.353} [0.346, 0.360] & 0.105 [0.097, 0.112] & \textbf{0.103} [0.098, 0.108] & 32.0 [32.0, 32.0] & \textbf{9.0} [8.2, 9.8] \\
hypothyroid   & 2.799 [2.699, 2.909] & \textbf{0.206} [0.198, 0.214] & 0.004 [0.002, 0.006] & \textbf{0.001} [0.000, 0.002] & 17.8 [15.4, 20.2] & \textbf{6.0} [6.0, 6.0] \\
covertype     & 13.617 [13.244, 13.861] & \textbf{11.864} [11.577, 12.151] & 0.237 [0.236, 0.238] & 0.242 [0.241, 0.243] & 32.0 [32.0, 32.0] & \textbf{5.0} [5.0, 5.0] \\
\bottomrule
\end{tabular}
}
\caption{Comparison of Blossom and LicketySPLIT across datasets when Blossom is allowed to finish. We show the LicketySPLIT configuration (after grid-search across $\lambda$) that yielded the best test loss on that dataset (averaged across $5$ trials with depth budget $5$). Only the mean is bolded when LicketySPLIT performs better. Values are reported as mean [lower, upper] (indicating 95\% confidence intervals).}
\label{tab:blossom_vs_lickety}
\end{table}
From Table \ref{tab:blossom_vs_lickety}, we see that, despite being allowed to run to completion (taking over $10\times$ longer in many cases), Blossom often underperforms LicketySPLIT in test loss. Furthermore, it is much less sparse than LicketySPLIT, having over $4\times$ as many leaves for similar test performances. \\

We ran another experiment to examine Blossom's anytime performance. In order to facilitate a fair comparison, we made Blossom run for approximately the same amount of time as LicketySPLIT (i.e. generally $\sim 1$ second) on a given dataset. Table \ref{tab:blossom_vs_lickety_anytime} shows the best performing tree found by LicketySPLIT (found by varying $\lambda$ and computing the resulting test loss) for each dataset compared with a tree found by Blossom (depth $5$). We see that, given comparable runtimes, LicketySPLIT often achieves lower test loss with much fewer leaves compared to Blossom (which mostly branches out to $32$ leaves).


\begin{table}[H]
\centering
\scriptsize
\resizebox{\textwidth}{!}{%
\begin{tabular}{lcccccc}
\toprule
\textbf{Dataset} & \multicolumn{2}{c}{\textbf{Runtime (s)}} & \multicolumn{2}{c}{\textbf{Test Loss}} & \multicolumn{2}{c}{\textbf{\# Leaves}} \\
\cmidrule(r){2-3} \cmidrule(r){4-5} \cmidrule(r){6-7}
& Blossom & LicketySPLIT & Blossom & LicketySPLIT & Blossom & LicketySPLIT \\
\midrule
compas        & 0.436 [0.404, 0.454] & \textbf{0.334} [0.332, 0.336] & 0.314 [0.303, 0.323] & 0.317 [0.305, 0.329] & 32.000 [32.000, 32.000] & \textbf{8.000} [8.000, 8.000] \\
heloc         & 0.996 [0.992, 1.000] & \textbf{0.510} [0.502, 0.518] & 0.285 [0.281, 0.289] & 0.294 [0.286, 0.303] & 32.000 [32.000, 32.000] & \textbf{7.000} [6.184, 7.816] \\
bike          & 1.896 [1.882, 1.908] & \textbf{1.679} [1.633, 1.725] & 0.128 [0.125, 0.133] & \textbf{0.121} [0.117, 0.125] & 30.000 [30.000, 30.000] & \textbf{12.333} [11.862, 12.805] \\
covertype     & 14.297 [13.846, 14.604] & \textbf{11.864} [11.577, 12.151] & 0.237 [0.236, 0.238] & 0.242 [0.241, 0.243] & 32.000 [32.000, 32.000] & \textbf{5.000} [5.000, 5.000] \\
adult         & 1.722 [1.717, 1.727] & \textbf{1.459} [1.453, 1.465] & 0.176 [0.174, 0.179] & \textbf{0.176} [0.173, 0.181] & 32.000 [32.000, 32.000] & \textbf{7.333} [6.390, 8.276] \\
netherlands   & 1.238 [1.230, 1.246] & \textbf{0.627} [0.622, 0.632] & 0.284 [0.280, 0.287] & 0.292 [0.288, 0.296] & 32.000 [32.000, 32.000] & \textbf{7.667} [6.724, 8.610] \\
bank          & 0.771 [0.765, 0.777] & \textbf{0.353} [0.346, 0.360] & 0.107 [0.099, 0.112] & \textbf{0.103} [0.098, 0.108] & 32.000 [32.000, 32.000] & \textbf{9.000} [8.184, 9.816] \\
hypothyroid   & 0.649 [0.635, 0.663] & \textbf{0.206} [0.198, 0.214] & 0.004 [0.002, 0.006] & \textbf{0.001} [0.000, 0.002] & 18.800 [16.400, 21.200] & \textbf{6.000} [6.000, 6.000] \\
spambase      & 0.832 [0.811, 0.856] & \textbf{0.487} [0.482, 0.492] & 0.091 [0.085, 0.097] & \textbf{0.087} [0.085, 0.088] & 32.000 [32.000, 32.000] & \textbf{13.667} [13.196, 14.138] \\
\bottomrule
\end{tabular}
}
\caption{Comparison of Blossom and LicketySPLIT in an anytime setting (i.e. Blossom execution is stopped around the same time as LicketySPLIT) . We show the LicketySPLIT configuration (after grid-search across $\lambda$) that yielded the best test loss on that dataset (averaged across $5$ trials with depth budget $5$). Only the mean is bolded where LicketySPLIT outperforms Blossom. Values are reported as mean [lower, upper] (indicating $95\%$ confidence intervals). Note that the Blossom algorithm is not able to explicitly account for sparsity, hence it always returns fully grown trees up to a given depth. }
\label{tab:blossom_vs_lickety_anytime}
\end{table}

\subsubsection{Description of Machines Used}
All experiments were performed on an institutional computing cluster. This was a single Intel(R) Xeon(R) Gold 6226 machine with a $2.70$GHz CPU. It has $300$GB RAM and $24$ cores. 

\label{sec:setup}
\newpage
\subsection{Appendix Proofs}
\label{sec:proofs}
\begin{theorem}
\label{thm:relwork}
    Consider $T$, a tree output by LicketySPLIT, and $T^{\prime}$, a tree output by a method which is constrained to only make an information-gain-maximizing split at each node (or not to split at all). Then, considering the training set objective from Equation \ref{eqn:obj} for training set $D$ and given depth constraint $d$, we have:  $L(T, D, \lambda) \leq L(T', D, \lambda)$.
\end{theorem}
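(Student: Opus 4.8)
The plan is to prove Theorem \ref{thm:relwork} by induction on the depth budget $d$, comparing LicketySPLIT against the very greedy subroutine $T_g(\cdot,\cdot,\lambda)$ that it invokes internally (this is the ``fully greedy entropy-based method with the same constraints''). Throughout I would work with the globally $\tfrac{1}{N}$-normalized objective $L(\cdot, D, \lambda)$ from Equation \ref{eqn:obj}, so that the $\lambda$-renormalization step in Algorithm \ref{alg::recursive_lookahead} becomes transparent: it is exactly what guarantees the additive decomposition $L(T, D, \lambda) = L(T_L, D(f), \lambda) + L(T_R, D(\bar f), \lambda)$ whenever a tree $T$ splits $D$ on $f$ into subtrees $T_L, T_R$. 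With global normalization the recursion carries no hidden scaling, and each recursive subproblem can be analyzed as if it were a root.

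First I would record the structural identity produced by the call to SPLIT with lookahead depth $1$ and no postprocessing. Writing $\mathrm{leaf}(D) = \lambda + \tfrac{1}{N}\min\{|D^-|,|D^+|\}$ for the cost of turning $D$ into a leaf, solving Equation \ref{eqn:lookahead_eqn_og} at the root yields the value
\[
V(D,d) = \min\Big\{\mathrm{leaf}(D),\ \min_{f\in\mcF}\big[L(T_g(D(f),d-1,\lambda)) + L(T_g(D(\bar f),d-1,\lambda))\big]\Big\},
\]
and the prefix returned is a leaf when the first term attains the minimum, otherwise the split $f^\star$ achieving the inner minimum. Two consequences are all I need: (a) instantiating $f=f_g$, the information-gain-maximizing split that $T_g$ itself uses at $D$, gives $V(D,d)\le L(T_g(D(f_g),d-1,\lambda)) + L(T_g(D(\bar f_g),d-1,\lambda))$; and (b) $V(D,d)\le \mathrm{leaf}(D)$. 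Combined with the recursive definition of the greedy tree — $L(T_g(D,d,\lambda))$ equals $\mathrm{leaf}(D)$ when greedy stops, and equals $L(T_g(D(f_g),d-1,\lambda)) + L(T_g(D(\bar f_g),d-1,\lambda))$ when it splits on $f_g$ — facts (a) and (b) yield $V(D,d)\le L(T_g(D,d,\lambda))$ in both cases.

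It then remains to show LicketySPLIT's actual objective is at most $V(D,d)$, which is where the induction enters. If the prefix is a leaf, LicketySPLIT returns that leaf and its objective equals $V(D,d)$. Otherwise it replaces the two children by recursive calls on $D(f^\star)$ and $D(\bar f^\star)$ at depth $d-1$; by the inductive hypothesis each recursive objective is at most the corresponding greedy completion, so additivity gives a total of at most $V(D,d)$. The base case $d=0$ is immediate since both trees are the majority-vote leaf. Chaining the two inequalities produces $L(T,D,\lambda) \le V(D,d) \le L(T',D,\lambda)$, the claim.

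The main obstacle I anticipate is not the induction but pinning down which greedy tree $T'$ is being dominated. The argument dominates the specific tree generated by the subroutine $T_g$ used to score SPLIT's lookahead leaves; it does \emph{not} automatically dominate an arbitrary tree whose internal splits are information-gain-maximizing but whose stopping rule differs, since top-down sparse greedy need not coincide with the objective-optimal pruning of the greedy skeleton. I would therefore either (i) read the theorem as stated against the matching greedy subroutine (the natural meaning of ``the same constraints as our approach''), or (ii) strengthen the completion inside SPLIT to be the best pruning of the fully grown greedy tree, after which fact (a) upgrades to $L(T_g(D(f_g),d-1,\lambda))\le L(T'_L, D(f_g),\lambda)$ for every greedy subtree $T'_L$, closing the gap for arbitrary $T'$. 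The residual work is bookkeeping: verifying additivity under the $\lambda$-renormalization and handling the leaf/split cases symmetrically.
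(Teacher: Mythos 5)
Your induction is sound and is essentially the paper's own argument (the paper proves the theorem by the same depth induction, stated far more informally). You correctly establish $L(T,D,\lambda)\le V(D,d)\le L(T_g(D,d,\lambda),D,\lambda)$, and your handling of the $\lambda$-renormalization via the globally $\tfrac{1}{N}$-normalized objective is exactly the right bookkeeping. The one place you stop short is the final comparison against an \emph{arbitrary} $T'$ in the greedy-constrained class, where you hedge between reinterpreting the theorem and strengthening the algorithm. Neither move is needed: the paper's greedy subroutine (Algorithm \ref{alg:greedy}) is not a top-down stopping rule. It recursively computes the children's greedy objectives \emph{first}, and only then decides leaf-versus-split by comparing the leaf cost against $lb_{\textrm{left}}+lb_{\textrm{right}}$; that is precisely the bottom-up dynamic program for the objective-optimal pruning of the greedy skeleton. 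Hence the missing lemma --- $L(T_g(D,d,\lambda),D,\lambda)\le L(T',D,\lambda)$ for every $T'$ that at each node either makes the information-gain-maximizing split or stops --- follows by the same induction: if $T'$ is a leaf, then $L(T_g,D,\lambda)\le \mathrm{leaf}(D)=L(T',D,\lambda)$ because $T_g$'s value is a minimum that includes the leaf option; if $T'$ splits, it must split on the same feature $f_g$ that $T_g$'s skeleton uses, so $L(T',D,\lambda)=L(T'_L,D(f_g),\lambda)+L(T'_R,D(\bar f_g),\lambda)\ge L(T_g(D(f_g),d-1,\lambda))+L(T_g(D(\bar f_g),d-1,\lambda))\ge L(T_g(D,d,\lambda),D,\lambda)$, using the inductive hypothesis on the two children. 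Appending this lemma to your chain $L(T)\le V(D,d)\le L(T_g)$ proves the theorem exactly as stated, with no change to the algorithm and no weakening of the claim.

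Two smaller remarks. First, your worry is a reasonable reading of the prose in Section \ref{sec:prelim} (``will make a split at a node only when the gain in overall accuracy is greater than $\lambda$'' does sound top-down), but the pseudocode, which is what the theorem is about, is bottom-up. Second, both your argument and the completion above silently assume a fixed tie-breaking rule for information gain shared by $T'$ and the internal greedy calls; with inconsistent tie-breaking the greedy skeleton is not unique and the lemma's phrase ``the same feature $f_g$'' can fail. This caveat applies equally to the paper's proof, which, for what it is worth, glosses over the entire $T_g$-versus-$T'$ distinction that you correctly isolated: it treats ``the performance of a greedy tree after that split'' and ``the performance of $T'$ after that split'' as interchangeable without justification, so your write-up is the more careful of the two.
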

\begin{proof}
    The proof will proceed by induction.

\textbf{Base Case}:

When there is insufficient remaining depth to split, or no split improves the objective, then $T^\prime$ and $T$ both return a leaf with equivalent performance.

\textbf{Inductive Step:}

$T$ considers the split that $T^\prime$ would make (a greedy split), and evaluates the resulting performance of a greedy tree after that split. It also considers all other splits, and evaluates the performance of a greedy tree after that split. It either picks the split that $T^\prime$ would make, or it picks one that will correspond to a tree better than $T^\prime$, assuming that the objective after the first split is at least as good as a greedy tree past that first split (which, by the inductive hypothesis, we know is true).

Thus, by induction LicketySPLIT will do at least as well as $T^\prime$.

We can, of course, extend this to SPLIT fairly trivially, since SPLIT is more rigorous than LicketySPLIT. The splits up to the lookahead depth are optimal assuming the continuation past the lookahead depth is at least as good as a greedy method (so SPLIT will either start with greedy splits up to the lookahead, matching $T^\prime$, or it will find some better prefix with respect to the training objective). The post-processing further improves the performance of SPLIT relative to $T^\prime$.
\end{proof}

\subsubsection{Proof of Theorem \ref{thm:runtime-lookahead}}

\runtimelookahead*
\begin{proof}

We divide the computation process into two stages:
\begin{itemize}
    \item \textbf{Stage $1$} involves computing the lookahead tree prefix. There are $k$ choices to split on at each level, yielding $2k$ nodes at the next level and hence $(2k)^{d_l}$ nodes (sub-problems) at level $d_l$. For each of the $(2k)^{d_l}$ sub-problems at depth $d_l$, we will compute a greedy subtree of depth $d-d_l$.  
    Let $S_i$ be the $i^{th}$ sub-problem at depth $d_l$ (with corresponding size $|S_i|$). 
    The runtime of a greedy decision tree algorithm with depth $d_g$ for a sub-problem of size $n$ and $k$ features is $\mathcal{O}(nkd_g)$ (where $d_g = d - d_l$ in our algorithm). The runtime complexity for this phase is therefore: 
    \begin{equation}
        \mathcal{O}\Big(\sum_{i=1}^{(2k)^{d_l}} |S_i|(k-d_l)(d-d_l)\Big) = \mathcal{O} \left((k-d_l)(d-d_l) \sum_{i=1}^{(2k)^{dl}} |S_i|\right)
    \end{equation}
    where we have $(k-d_l)$ features remaining to be split on at the end of lookahead. Now,
    \begin{equation}
        \sum_{i=1}^{(2k)^{d_l}} |S_i| = \mathcal{O}\Big(nk^{d_l}\Big),
    \end{equation}
   because at each level, we split on $\mathcal{O}(k)$ features and route $n$ examples down each path. Thus, the runtime for this stage simplifies to:
   \begin{align}
       \mathcal{O}\Big(\sum_{i=1}^{(2k)^{d_l}} |S_i|(k-d_l)(d-d_l)\Big) &= \mathcal{O}\big(nk^{d_l}(k-d_l)(d-d_l)\big) \\
       &= \mathcal{O}\big(n(d-d_l)k^{d_l+1}\big)
   \end{align}
    where the second equality stems from the fact that $k-d_l = O(k)$, because $d \ll k$ and $d_l < d$. \textit{However}, there is redundancy here, because this expression assumes that all sub-problems at level $d_l$ are unique - this is not the case. Consider a subproblem identified by the sequence of splits $f_1 = 0 \rightarrow f_2 = 1 \rightarrow  f_3 = 0$. The exact order of the splits does not matter in identifying the subproblem. This implies that multiple sequences of splits correspond to the same subproblem, leading to an overestimation of the runtime. At level $d_l$, there are therefore $d_l!$ redundant subproblems (corresponding to the different ways of arranging the sequence of splits). We only need to solve, i.e. compute a greedy tree, for one of them and store the solution for the other identical subproblems. If we cache subproblems in this manner, the final runtime for this stage becomes:
    \begin{equation}
        \mathcal{O}\Big(\frac{n(d-d_l)k^{d_l+1}}{d_l!}\Big)
    \end{equation}
    \item $\textbf{Stage $2$}$ involves replacing the leaves of the learned prefix tree with an optimal tree of depth $d - d_l$ so that the resulting tree has depth $\leq d$. Let $u$ be a leaf node in this prefix tree and $n_u$ be its corresponding sub-problem size. As before, we will search over all trees of size $d-d_l$, which requires evaluation of $(2k)^{d-d_l}$ nodes in the search tree. This time, however, the evaluation at the last node will be linear in the sub-problem size (as we are not considering any splits beyond depth $d$). By the same argument as Stage $1$, the runtime of this phase is therefore $\mathcal{O}\big(k^{d-d_l}n_u\big)$. Summing this across all subproblems $u$, we get $\sum_{u} \mathcal{O}\big(k^{d-d_l}n_u\big)$. As the total sum of sub-problem sizes across all leaves is equal to the original dataset size, this sum is equal to $\mathcal{O}\big(k^{d-d_l}n\big)$. By the same subproblem redundancy argument as in Stage $1$, the final runtime complexity of this stage upon caching redundant subproblems becomes:
    \begin{equation}
        \mathcal{O}\Big(\frac{nk^{d-d_l}}{(d-d_l)!}\Big)
    \end{equation}
\end{itemize}
Combining Stages $1$ and $2$, we get that the total runtime of SPLIT is: 
\begin{align}
    \begin{cases}
        \mathcal{O}\big(n(d-d_l)k^{d_l+1}+ nk^{d-d_l}\big) & \textrm{Without Caching}\\ 
        \textcolor{white}{hi} & \textcolor{white}{hi} \\
         \mathcal{O}\Big(\frac{n(d-d_l)k^{d_l+1}}{d_l!} + \frac{nk^{d - d_l}}{(d-d_l)!}\Big) & \textrm{With Caching.}
    \end{cases}
\end{align}

\end{proof}

\subsubsection{Proof of Corollary \ref{corollary:depth-runtime}}
\optimaldepth*
\begin{proof}
We evaluate the optimal lookahead depth in both scenarios, caching and no-caching.
\textcolor{white}{hi}
\section*{Case 1: Lookahead Without Caching}
In this case, the runtime expression is $\mathcal{O}\Big(n(d-d_l)k^{d_l+1} + nk^{d - d_l}\Big)$. We divide the proof into $6$ parts: 
\paragraph{Part 1: Finding the stationary point of the runtime}\leavevmode\leavevmode\\\\
Consider the runtime expression from Theorem \ref{thm:runtime-lookahead}. We now minimize this  with respect to $d_l$: 
\begin{align}
    \frac{\partial}{\partial d_l} &\Big(n(d-d_l)k^{d_l+1} + nk^{d - d_l}\Big) = 0 \\
    \iff \frac{\partial}{\partial d_l} &\Big((d-d_l)k^{d_l+1} + k^{d - d_l}\Big) = 0 \\
    \iff \frac{\partial}{\partial d_l} &\Big(dk^{d_l+1} - d_l k^{d_l+1} + k^{d - d_l}\Big) = 0 \\
    \iff&d (\log k) k^{d_l + 1} - (k^{d_l + 1} + d_l (\log k) k^{d_l + 1}) - (\log k) k^{d-d_l} = 0 \\ 
    \iff&\big((d-d_l)\log k - 1\big) k^{d_l+1} - (\log k) k^{d-d_l} = 0 \\ 
    \label{eqn:optimal_dl} &\implies \Big( (d-d_l) - \frac{1}{\log k}\Big)k^{2d_l + 1} = k^d.  
\end{align}
We can now simplify this equation to analytically express the lookahead depth $d_l$ as a function of $k$. To do so, we define a new variable $u$ such that:
\begin{align}
\label{eqn:d_l_u}
    d_l = \frac{u - 2+2d\log k}{2\log k}.
\end{align}
Under this definition of $d_l$, we can now rewrite Equation \ref{eqn:optimal_dl} in terms of $u$:
\begin{align}
    &\left(\left(d-\frac{u - 2+2d\log k}{2\log k}\right) - \frac{1}{\log k}\right)e^{\log k{\left(2\left(\frac{u - 2+2d\log k}{2\log k}\right) + 1\right)}} = k^d \\
    &\Big(d - \frac{u + 2d \log k}{2 \log k}\Big)e^{u-2+2d\log k + \log k} = k^d\\ 
    & \Rightarrow \frac{-u}{2\log k}e^{-2}k^{2d+1} e^u = k^d\\
    \label{eqn:lambert_u} & ue^u = -2e^2k^{-(d+1)}\log k. 
\end{align}
As the solution to this equation is known to be analytically intractable, we express $u$ in terms of the Lambert $W$ function, which is a well known function that cannot be expressed in terms of elementary functions. Denoted by $W(z)$, the Lambert $W$ function satisfies the following equation:
\begin{equation}
    \label{eqn:lambert}
    W(z)e^{W(z)} = z.
\end{equation}
From Equation \ref{eqn:lambert_u}, we can express $u$ in terms of $W(.)$, giving us:
\begin{equation}
    u = W(-2e^2k^{-(d+1)}\log k).
\end{equation}
Substituting this back into the expression for $d_l$ in Equation \ref{eqn:d_l_u}, we get:
\begin{equation}
    \label{eqn:d_l_in_lambert_form}
    d_l = \frac{W(-2e^2k^{-(d+1)}\log k) - 2+2d\log k}{2\log k}.
\end{equation}
\paragraph{Part 2: Bounding the Lambert $W$ function}\leavevmode\leavevmode\\\\
Let $z = -2e^2k^{-(d+1)}\log k$. For sufficiently large $k$, $z \in [-\frac{1}{e}, 0]$. In this domain, there are two possible values of $W(z)$, $W_0(z)$ and $W_{-1}(z)$, such that $W_0(z) \geq W_{-1}(z)$. Figure \ref{fig:lambert_w_function} shows these two branches of the $W$ function.
\begin{figure}[H]
    \centering
    \includegraphics[width=0.55\linewidth]{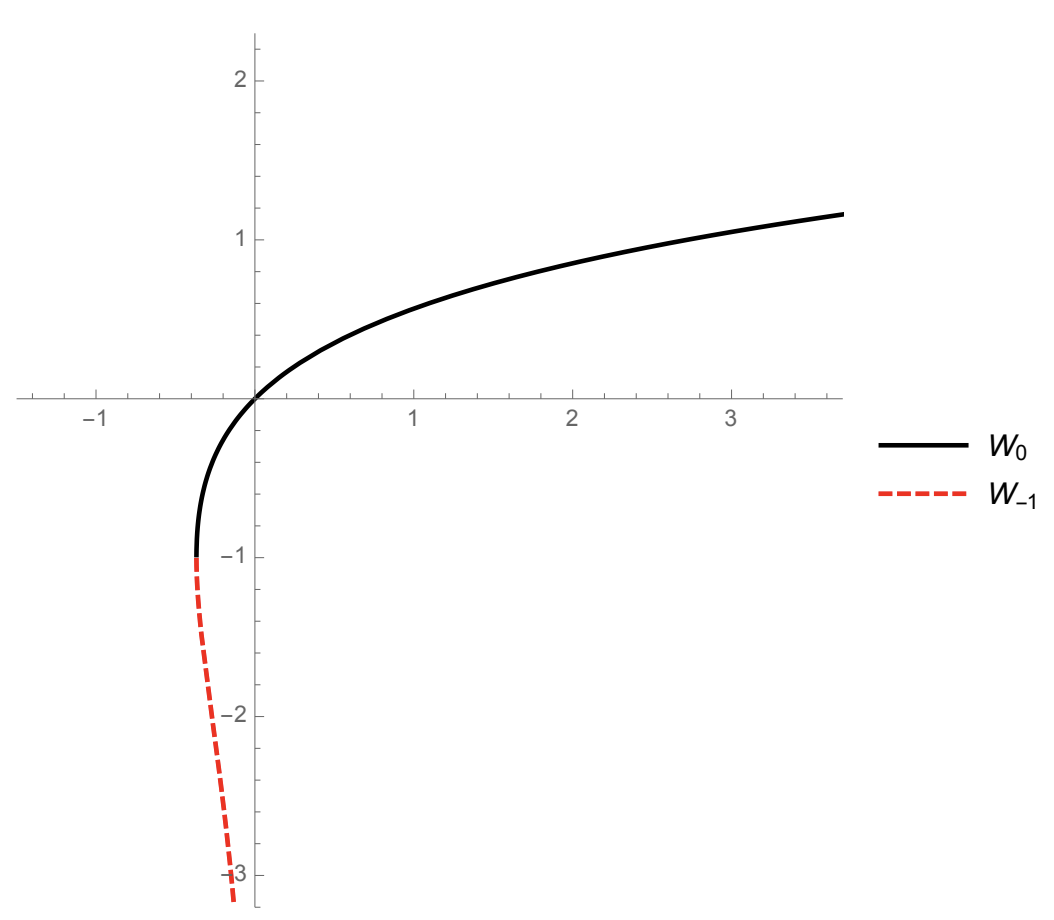}
    \caption{The Lambert W function, which has two branches in the real plane, $W_0(z)$ and $W_{-1}(z)$. Figure from \cite{lambert}.}
    \label{fig:lambert_w_function}
\end{figure}
For now, consider the function $W_{-1}(z)$. We will show later that choosing this branch of the $W$ function results in the value of $d_l$ that minimizes the runtime.\\  \cite{lambert_bound} show the following lower bound for $W_{-1}(z)$: 
\begin{align}
    W_{-1}(z) \geq \log\left(-z\right)-\sqrt{2\left(-1-\log\left(-z\right)\right)}.
\end{align}
\cite{lambert} show the following upper bound for $W_{-1}(z)$: 
\begin{align}
    W_{-1}(z) \leq \log\left(-z\right)-\log\left( -\log\left(-z\right)\right).
\end{align}
Denote the lower bound as $W_{-1}^{lb}(z)$ and the upper bound as $W_{-1}^{ub}(z)$. We can now write upper and lower bounds for the optimal $d_l$ (call this $d_l^*$) in Equation \ref{eqn:d_l_in_lambert_form}.
\begin{align}
\label{eqn:bounds_dl}
    \frac{W_{-1}^{lb}(z) - 2+2d\log k}{2\log k} \leq d^*_l \leq \frac{W_{-1}^{ub}(z) - 2+2d\log k}{2\log k}
\end{align}
where $z = -2e^2k^{-(d+1)}\log k$ from above.
\paragraph{Part 3: Lower Bound for $d^*_l$}\leavevmode\leavevmode\\\\
We now evaluate the lower bound for $d_l^*$, substituting $z=-2e^2k^{-(d+1)}\log k$ into the left side of Equation \ref{eqn:bounds_dl}:
\begin{align}
d^*_l &\geq \frac{W_{-1}^{lb}(z) - 2+2d\log k}{2\log k} \\
    &= \frac{W_{-1}^{lb}\left(-2e^2k^{-\left(d+1\right)}\log k\right) - 2+2d\log k}{2\log k} \\
    &= \frac{\log\left(2e^2k^{-\left(d+1\right)}\log k\right)-\sqrt{2\left(-1-\log\left(2e^2k^{-\left(d+1\right)}\log k\right)\right)}-2+2d\log k}{2\log k}\\
    &= \frac{\log 2 -(d+1)\log k + \log \log k + 2d\log k - \sqrt{-6-2\log 2 + 2(d+1)\log k - 2\log \log k}}{2\log k}.
\end{align}
Consider the term: 
\begin{equation}
\sqrt{-6-2\log 2+2(d+1)\log k - 2\log \log k}.
\end{equation}
As $k$ becomes large, we can ignore the constants. Asymptotically, $\log k \gg \log \log k$, and hence this term approaches $\sqrt{2(d+1)\log k}$. Thus, we can write:
\begin{align}
    d^*_l &\geq \frac{\log 2 -(d+1)\log k + \log \log k + 2d\log k - \sqrt{2(d+1) \log k}}{2\log k}\\
    &= d-\frac{d+1}{2} + \frac{\log 2}{2\log k} + \frac{\log\log k}{\log k} - \frac{\sqrt{2(d+1)\log k}}{\log k}\\
    &= \frac{d-1}{2} - \mathcal{O}\Big(\frac{1}{\sqrt{\log k}}\Big)
\end{align}
as $d$ is a constant and $k \gg d$.
\paragraph{Part 4: Upper Bound for $d^*_l$}\leavevmode\leavevmode\\\\
We can similarly evaluate the upper bound for $d_l^*$: 
\begin{align}
    d^*_l &\leq \frac{W_{-1}^{ub}(-2e^2k^{-\left(d+1\right)}\log k) - 2+2d\log k}{2\log k}\\
    &= \frac{\log\left(2e^2k^{-\left(d+1\right)}\log k\right) - \log\Big(-\log\left(-2e^2k^{-\left(d+1\right)}\log k\right)\Big) - 2 + 2d\log k}{2\log k}\\
    \label{eqn:upper_bound_dl_final}
    &= \frac{\log 2 - (d+1)\log k + \log \log k - \log\Big(-\big(\log 2 + 2 - \left(d+1\right)\log k + \log \log k\big)\Big) + 2d \log k}{2\log k}.
\end{align}
Consider the term: 
\begin{equation}
    \log\Big(-\big(\log 2 + 2 - \left(d+1\right)\log k + \log \log k\big)\Big).
\end{equation}
As $k$ becomes large, we can ignore the constants. 
We can also consider the asymptotic lower bound of the subsequent expression:
\begin{equation}
    \log\Big(\left(d+1\right)\log k - \log \log k\Big) \geq 1 - \frac{1}{\left(d+1\right)\log k - \log \log k}
\end{equation}
If we plug in this lower bound in Equation \ref{eqn:upper_bound_dl_final}, the resulting expression is still a valid upper bound. 
\begin{align}
    d_l^* &\leq \frac{\log 2 -(d+1)\log k + \log \log k -1 + \frac{1}{\left(d+1\right)\log k - \log \log k} + 2d\log k}{2\log k}\\
    &=d - \frac{d+1}{2} + \frac{\log 2 - 1}{2\log k} + \frac{\log \log k}{2 \log k} + \frac{1}{2(d+1)\log^2 k - 2\log k \log \log k}\\
    &= \frac{d-1}{2} + \mathcal{O}\Big(\frac{\log \log k}{\log k}\Big).
\end{align}
\paragraph{Part 5: Putting it all together}\leavevmode\leavevmode\\\\
Finally, we get the following lower and upper bounds on the optimal lookahead depth $d_l^*$: 
\begin{equation}
    \frac{d-1}{2} - \mathcal{O}\Big(\frac{1}{\sqrt{\log k}}\Big) \leq d_l^* \leq \frac{d-1}{2} + \mathcal{O}\Big(\frac{\log \log k}{\log k}\Big).
\end{equation}
For large $k$, these bounds will converge, and hence, in this limit, $d_l^* = \frac{d-1}{2}$.
\paragraph{Part 6: Verifying that $d_l^* = \frac{d-1}{2}$ is the minimum}\leavevmode\leavevmode\\\\
We show that the computed value of $d_l^*$ is indeed the minimum by evaluating the second derivative of the runtime, i.e. $\frac{\partial^2}{\partial d_l^2}\Big(n(d-d_l)k^{d_l+1} + nk^{d - d_l}\Big)|_{d_l = d_l^*}$. 
\begin{align}
    &\frac{\partial^2}{\partial d_l^2}\Big(n(d-d_l)k^{d_l+1} + nk^{d - d_l}\Big) = \frac{\partial}{\partial d_l}\Big(n\big((d-d_l)\log k - 1\big) k^{d_l+1} - n(\log k) k^{d-d_l}\Big)
\end{align}
where we use the the derivative expression from Part $1$ of this proof. Simplifying further:
\begin{align}
    &\frac{\partial}{\partial d_l}\Big(n\big((d-d_l)\log k - 1\big) k^{d_l+1} - n(\log k) k^{d-d_l}\Big) \\
    &= \frac{\partial}{\partial d_l}\Big(n d k^{d_l+1}\log k-nd_lk^{d_l+1}\log k -nk^{d_l+1}-nk^{d-d_l}\log k\Big)\\
    &= ndk^{d_l+1}\log^2 k - nk^{d_l+1}\log k - nd_lk^{d_l+1} \log^2 k -nk^{d_1+1} \log k + nk^{d-d_l} \log^2 k.
\end{align}
We now substitute $d_l = \frac{d-1}{2}$ and simplify the result:
\begin{align}
    &\frac{\partial^2}{\partial d_l^2}\Big(n(d-d_l)k^{d_l+1} + nk^{d - d_l}\Big)\eval_{d_l = d_l^*} \\
    &= ndk^{\frac{d+1}{2}}\log^2 k - nk^{\frac{d+1}{2}}\log k - n\Big(\frac{d-1}{2}\Big)k^{\frac{d+1}{2}} \log^2 k -nk^{\frac{d+1}{2}} \log k + nk^{\frac{d+1}{2}} \log^2 k\\
    &= n\Big(\frac{d+1}{2}\Big) k^{\frac{d+1}{2}}\log^2 k + nk^{\frac{d+1}{2}} \log^2 k - 2nk^{\frac{d+1}{2}}\log k \\
    &= n\Big(\frac{d+3}{2}\Big) k^{\frac{d+1}{2}}\log^2 k - 2nk^{\frac{d+1}{2}}\log k .
\end{align}
This is clearly $> 0$ as the $\log^2 k$ terms dominate $\log k$. Thus, the value $d_l^* = \frac{d-1}{2}$ corresponds to the minimum of the runtime. 
\section*{Case 2: Lookahead with Caching}
In this case, the runtime expression is $\mathcal{O}\Big(\frac{n(d-d_l)k^{d_l+1}}{d_l!} + \frac{nk^{d - d_l}}{(d-d_l)!}\Big)$. We divide the proof into $3$ parts:
\paragraph{Part 1: Finding the stationary point of the runtime}\leavevmode\leavevmode\\\\
We can replace the factorial in the runtime expression with the Gamma function, i.e.:
\begin{equation}
    d_l! = \Gamma(d_l+1) = \int_0^\infty t^{d_l} e^{-t} \, dt
\end{equation} 
as this allows us to apply the derivative operator. Further employing the definition of the Digamma function $\psi(x) = \frac{\Gamma^\prime(x)}{\Gamma(x)}$, we now minimize the runtime with respect to $d_l$:
\begin{align}
    &\frac{\partial}{\partial d_l} \Big(\frac{n(d-d_l)k^{d_l+1}}{\Gamma(d_l+1)} + \frac{nk^{d-d_l}}{\Gamma(d-d_l+1)}\Big) = 0\\
    \Rightarrow &n \frac{\partial}{\partial d_l} \Big(\frac{(d-d_l)k^{d_l+1}}{\Gamma(d_l+1)}\Big) + n \frac{\partial}{\partial d_l} \Big(\frac{k^{d-d_l}}{\Gamma(d-d_l+1)}\Big) = 0\\
    \Rightarrow &n \Bigg[\frac{\partial}{\partial d_l} \Big((d-d_l)k^{d_l+1}\Big) \cdot \frac{1}{\Gamma(d_l+1)} - \frac{(d-d_l)k^{d_l+1}}{\Gamma(d_l+1)^2} \cdot \frac{\partial}{\partial d_l} \Gamma(d_l+1)\Bigg] \ \ + \\
    &n \Bigg[\frac{\partial}{\partial d_l} \Big(k^{d-d_l}\Big) \cdot \frac{1}{\Gamma(d-d_l+1)} - \frac{k^{d-d_l}}{\Gamma(d-d_l+1)^2} \cdot \frac{\partial}{\partial d_l} \Gamma(d-d_l+1)\Bigg] = 0\\
    \Rightarrow &n \Bigg[-k^{d_l+1} + (d-d_l)k^{d_l+1}\log k\Bigg] \cdot \frac{1}{\Gamma(d_l+1)} - n \frac{(d-d_l)k^{d_l+1}}{\Gamma(d_l+1)^2} \cdot \Gamma(d_l+1)\psi(d_l+1) \ \ + \\
    &n \Bigg[-k^{d-d_l}\log k\Bigg] \cdot \frac{1}{\Gamma(d-d_l+1)} - n \frac{k^{d-d_l}}{\Gamma(d-d_l+1)^2} \cdot \Gamma(d-d_l+1)\psi(d-d_l+1) = 0\\
    \Rightarrow &\frac{(-k^{d_l+1} + (d-d_l)k^{d_l+1}\log k)\Gamma(d_l+1) - (d-d_l)k^{d_l+1}\Gamma(d_l+1)\psi(d_l+1)}{\Gamma(d_l+1)^2} \ \ + \\
    &\frac{k^{d-d_l}\big(\Gamma(d-d_l+1)\psi(d-d_l+1) - (\log k)\Gamma(d-d_l+1)\big)}{\Gamma(d-d_l+1)^2} = 0
\end{align}
Simplifying this expression, we get:
\begin{align}
 \Rightarrow &\frac{(-k^{d_l+1}+(d-d_l)k^{d_l+1}\log k) - (d-d_l)k^{d_l+1}\psi(d_l+1)}{\Gamma(d_l+1)} + \frac{k^{d-d_l}\big(\psi(d-d_l+1)-\log k\big)}{\Gamma(d-d_l+1)} = 0\\
&\Rightarrow\frac{k^{d_l+1}\Big(-1+\left(d-d_l\right)\left(\log k - \psi\left(d_l+1\right)\right)\Big)}{\Gamma(d_l+1)} =  \frac{k^{d-d_l}\Big(\log k-\psi(d-d_l+1)\Big)}{\Gamma(d-d_l+1)}\\
&\Rightarrow \frac{k^{2d_l-d+1}\Big(-1+\left(d-d_l\right)\left(\log k - \psi\left(d_l+1\right)\right)\Big)}{\Gamma(d_l+1)} =  \frac{\log k-\psi(d-d_l+1)}{\Gamma(d-d_l+1)}\\
&\Rightarrow k^{2d_l-d+1} =  \frac{\Big(\log k-\psi(d-d_l+1)\Big)\Gamma(d_l+1)}{\Gamma(d-d_l+1)\Big(-1+\left(d-d_l\right)\left(\log k - \psi\left(d_l+1\right)\right)\Big)}.
\end{align}
\paragraph{Part 2: Bounding the Optimal Lookahead Depth}\leavevmode\leavevmode\\\\
Unlike the previous case, it is not possible to derive a closed functional form for the optimal lookahead depth for any given value of $k$ (although we can simulate it numerically). Instead, we need to analyze how this expression behaves as in the limit as $k \rightarrow \infty$. Because $k \gg d, d_l$, $\log k \gg \psi(d-d_l+1)$. Similarly, $\log k \gg \psi(d_l+1)$. Furthermore, we can ignore all expressions which are not functions of $k$ as they are insignificant when $k$ is large. Thus, in this limit: 
\begin{align}
    &k^{2d_l-d+1} \rightarrow \frac{\Gamma(d_l+1) \log k}{\Gamma(d-d_l+1)(d-d_l)\log k} \\
    &\Rightarrow k^{2d_l-d+1} = \frac{\Gamma(d_l+1)}{\Gamma(d-d_l+1)(d-d_l)} \\
    &\Rightarrow (2d_l-d+1)\log k = \log\Gamma(d_l+1) - \log\Gamma(d-d_l+1) - \log(d-d_l)\\
    &\Rightarrow 2d_l-d+1 = \frac{\log\Gamma(d_l+1) - \log\Gamma(d-d_l+1) - \log(d-d_l)}{\log k}.
\end{align}

Observe the term $\log\Gamma(d_l+1) - \log\Gamma(d-d_l+1) - \log(d-d_l)$. 
We can write the factorial form of this expression to understand it better:
\begin{equation}
    \log\Gamma(d_l+1) - \log\Gamma(d-d_l+1) - \log(d-d_l) = \log\Big(\frac{d_l!}{(d-d_l)!(d-d_l)}\Big).
\end{equation}
Notice that for any $d_l$ between $0$ and $\lfloor\frac{d}{2}\rfloor$ (inclusive), the RHS is always less than $0$. Similarly, for $\lfloor\frac{d}{2}\rfloor < d_l \leq d - 1 $, the term is always greater than $0$. Given that these are constant as $k$ increases: 
\begin{align}
    \log\Gamma(d_l+1) - \log\Gamma(d-d_l+1) - \log(d-d_l) =
    \begin{cases}
        -\mathcal{O}(1) & \textrm{$0 \leq d_l \leq \lfloor\frac{d}{2}\rfloor$}\\
        \mathcal{O}(1) & \textrm{$\lfloor\frac{d}{2}\rfloor < d_l \leq d - 1$}.
    \end{cases}
\end{align}
This implies: 
\begin{equation}
     -\mathcal{O}\Big(\frac{1}{\log k}\Big) \leq \frac{\log\Gamma(d_l+1) - \log\Gamma(d-d_l+1) - \log(d-d_l)}{\log k} \leq \mathcal{O}\Big(\frac{1}{\log k}\Big)
\end{equation}
for all $0 \leq d_l \leq d-1$ (which are the constraints in our setup). Hence, we conclude that, for large $k$:
\begin{align}
    &\mathcal{O}\Big(\frac{1}{\log k}\Big) \leq 2d_l - d + 1 \leq -\mathcal{O}\Big(\frac{1}{\log k}\Big)\\
    &\Rightarrow \frac{d-1}{2} - \mathcal{O}\Big(\frac{1}{\log k}\Big) \leq d_l \leq \frac{d-1}{2} + \mathcal{O}\Big(\frac{1}{\log k}\Big)
\end{align}
which approaches $d_l = \frac{d-1}{2}$ as $k \rightarrow \infty$. Henceforth, we denote this asymptotically optimal value as $d_l^*$.
\paragraph{Part 3: Verifying that $d_l^* = \frac{d-1}{2}$ is the minimum for large $k$}\leavevmode\leavevmode\\\\
We show that the computed value of $d_l^*$ is indeed the minimum for large $k$ by evaluating the second derivative of the runtime, i.e. $\frac{\partial^2}{\partial d_l^2}\Big(\frac{n(d-d_l)k^{d_l+1}}{d_l!} + \frac{nk^{d - d_l}}{(d-d_l)!}\Big)\eval_{d_l = d_l^*}$. 
\begin{align}
    &\frac{\partial^2}{\partial d_l^2}\Big(\frac{n(d-d_l)k^{d_l+1}}{d_l!} + \frac{nk^{d - d_l}}{(d-d_l)!}\Big) \\
    &= n\frac{\partial}{\partial d_l}\Big(\frac{(-k^{d_l+1}+(d-d_l)k^{d_l+1}\log k)\Gamma(d_l+1) - (d-d_l)k^{d_l+1}\Gamma(d_l+1)\psi(d_l+1)}{\Gamma(d_l+1)^2} \ \ +\\& \ \ \ \frac{k^{d-d_l}\big(\Gamma(d-d_l+1)\psi(d-d_l+1)-(\log k)\Gamma(d-d_l+1)\big)}{\Gamma(d-d_l+1)^2}\Big).
\end{align}
We can remove the dataset size $n$ for simplicity as it doesn't affect the sign of the answer. In the limit as $k \rightarrow \infty$, we can simplify this expression and only evaluate terms that grow with $k$: 
\begin{align}
    &\frac{\partial}{\partial d_l} \Bigg(\frac{\log k (d-d_l) k^{d_l+1} \Gamma(d_l+1)}{\Gamma(d_l+1)^2} - \frac{k^{d-d_l} \log k \Gamma(d-d_l+1)}{\Gamma(d-d_l+1)^2}\Bigg)\\
    &= \Bigg[\frac{\partial}{\partial d_l} \Big(\log k (d-d_l) k^{d_l+1}\Big) \cdot \frac{\Gamma(d_l+1)}{\Gamma(d_l+1)^2} - \frac{\log k (d-d_l) k^{d_l+1}}{\Gamma(d_l+1)^2} \cdot \frac{\partial}{\partial d_l} \Gamma(d_l+1)\Bigg] \ \ - \\
    &\quad \Bigg[\frac{\partial}{\partial d_l} \Big(k^{d-d_l} \log k\Big) \cdot \frac{\Gamma(d-d_l+1)}{\Gamma(d-d_l+1)^2} - \frac{k^{d-d_l} \log k}{\Gamma(d-d_l+1)^2} \cdot \frac{\partial}{\partial d_l} \Gamma(d-d_l+1)\Bigg]\\
    &= \Bigg[\Big(-\log k k^{d_l+1} + (d-d_l)k^{d_l+1} (\log k)^2 \Big) \cdot \frac{1}{\Gamma(d_l+1)} - \frac{\log k (d-d_l) k^{d_l+1}}{\Gamma(d_l+1)^2} \cdot \Gamma(d_l+1)\psi(d_l+1)\Bigg] \ \ - \\
    &\quad \Bigg[\Big(-k^{d-d_l} (\log k)^2\Big) \cdot \frac{1}{\Gamma(d-d_l+1)} - \frac{k^{d-d_l} \log k}{\Gamma(d-d_l+1)^2} \cdot \Gamma(d-d_l+1)\psi(d-d_l+1)\Bigg]\\
    &= \Bigg[\frac{\Big(-\log k k^{d_l+1} + (d-d_l) k^{d_l+1} (\log k)^2\Big)}{\Gamma(d_l+1)} - \frac{\log k (d-d_l) k^{d_l+1} \psi(d_l+1)}{\Gamma(d_l+1)}\Bigg] \ \ - \\
    &\quad \Bigg[\frac{-k^{d-d_l} (\log k)^2}{\Gamma(d-d_l+1)} - \frac{k^{d-d_l} \log k \psi(d-d_l+1)}{\Gamma(d-d_l+1)}\Bigg]\\
    &= \Bigg[\frac{\Big(-k^{d_l+1} (\log k)  + (d-d_l) k^{d_l+1} (\log k)^2 - \log k (d-d_l) k^{d_l+1} \psi(d_l+1)\Big)}{\Gamma(d_l+1)}\Bigg] \ \ - \\
    &\quad \Bigg[\frac{-k^{d-d_l} (\log k)^2 + k^{d-d_l} \log k \psi(d-d_l+1)}{\Gamma(d-d_l+1)}\Bigg].
\end{align}
Note that the $(\log k)^2$ terms are dominant in this expression as k $\rightarrow \infty$. Hence, at $d_l^* = \frac{d-1}{2}$, the terms that will affect the sign of the expression are: 
\begin{align}
    k^{\frac{d+1}{2}}\Big(\frac{d+1}{2}\Big)(\log k)^2 - k^{\frac{d+1}{2}}(\log k)^2 + \mathcal{O}(\textrm{$\log k$}).
\end{align}
This is clearly positive for any $d > 1$, hence the value $d_l^* = \frac{d-1}{2}$ corresponds to the minimum of the runtime. Note that in practice, if $d_l^*$ is not an integer, we can choose whichever of $\lceil d_l^*\rceil$ or $\lfloor d_l^* \rfloor$ gives us a lower runtime. 

\begin{figure}[t]
    \centering
    \includegraphics[width=0.45\linewidth]{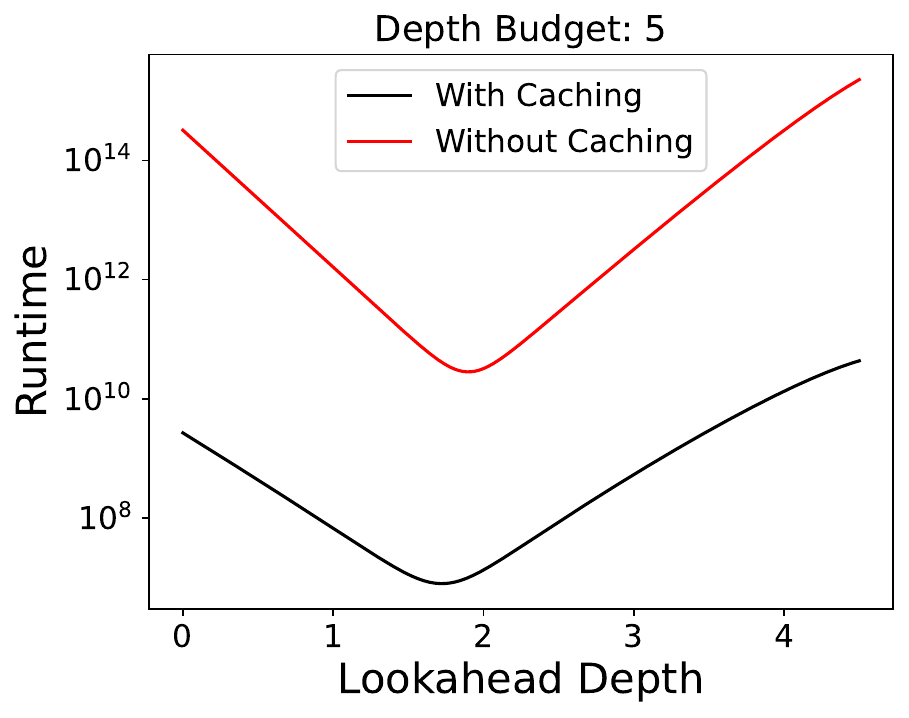}
    \includegraphics[width=0.45\linewidth]{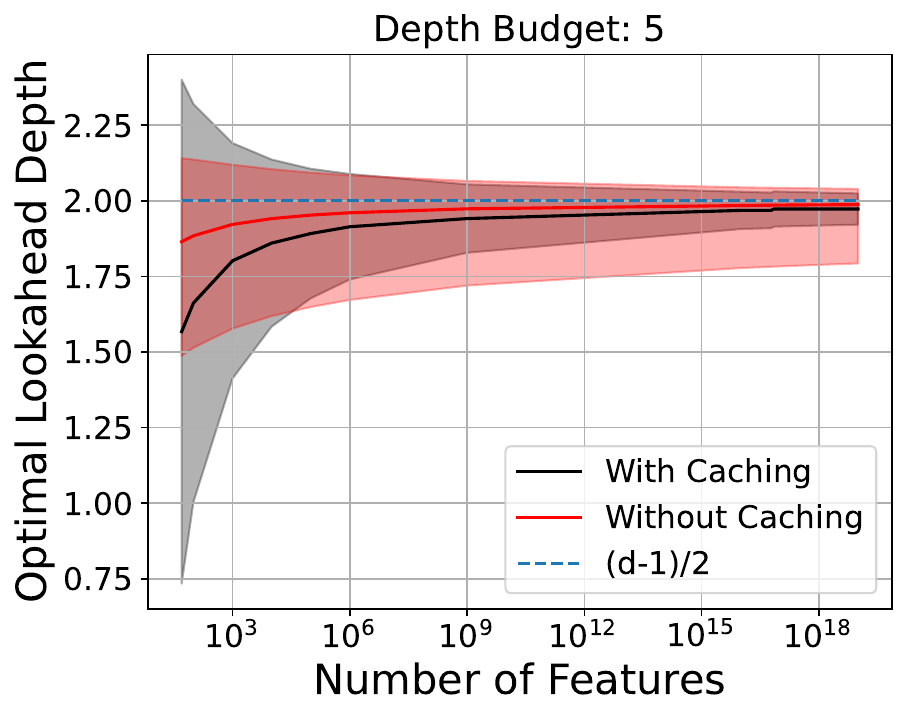}
    \caption{(left) The asymptotic runtime expression as a function of the lookahead depth for $k=20$, $d = 5$, and $n=1000$. This also lines up nicely with what we observe in practice, e.g. in Figure \ref{fig:lookahead_depth_runtime}. (right) Exact value for the theoretically optimal lookahead depth as a function of the number of features (with their associated lower and upper bounds). T}
    \label{fig:runtime_vs_lookahead_depth_theory} 
\end{figure}
From Figure \ref{fig:runtime_vs_lookahead_depth_theory}, we see that for a depth budget of $5$, the minimum lookahead depth $d_l^*$ is slightly less than $2$ for both the caching and non-caching case, which is what is predicted by Corollary \ref{corollary:depth-runtime}. This also lines up nicely with what we observe in practice (e.g. in Figure \ref{fig:lookahead_depth_runtime}). Note that our algorithms, which build on the GOSDT codebase, cache subproblems by default.
\end{proof}
\subsubsection{Proof of Corollary \ref{corollary:savings}}
\runtimesavings*
\begin{proof}
    Any branch and bound algorithm for constructing a fully optimal tree will, in the worst case, involve searching through $(2k)^d$ sub-problems at depth $d$ (where we can ignore all sub-problems at shallower depths, because their cost is exponentially lower). By the same arguments as in Theorem \ref{thm:runtime-lookahead}, the runtime of brute force search without any caching is $\mathcal{O}(nk^d)$. Thus, the ratio of runtimes of brute force and Algorithm \ref{alg::lookahead} is $\mathcal{O}\Big(\frac{k^d}{\frac{(d-d_l)k^{d_l+1}}{d_l!} + \frac{k^{d - d_l}}{(d-d_l)!}}\Big)$. From Theorem \ref{thm:runtime-lookahead}, we set $d_l = \frac{d-1}{2}$, as it minimizes the denominator of the above expression and hence gives the maximal runtime savings. Thus, the ratio of runtimes is:
    \begin{align}
        &\mathcal{O}\Big(\frac{k^d}{\frac{(d-d_l)k^{d_l+1}}{d_l!} + \frac{k^{d - d_l}}{(d-d_l)!}}\Big) \\
        &= \mathcal{O}\Big(\frac{k^d\big(\frac{d+1}{2}\big)!}{\frac{d+1}{2}k^{\frac{d+1}{2}} + k^{\frac{d+1}{2}}}\Big) \\
        &= \mathcal{O}\Big(\frac{k^d\big(\frac{d+1}{2}\big)!}{\frac{d+3}{2}k^{\frac{d+1}{2}}}\Big)\\
        &= \mathcal{O}\Big(k^{\frac{d-1}{2}}\Big(\frac{d}{2}\Big)!\Big).
    \end{align}
\end{proof}

\subsubsection{Proof of Theorem \ref{thm:runtime-greedy}}
\runtimegreedylookahead*
\begin{proof}
\textbf{Sketch}: 
Running lookahead for a single step involves $k$ different potential splits, and a full run of a standard greedy algorithm for each sub-problem. Since a greedy algorithm's runtime for a sub-problem of size $n_s$ is $\mathcal{O}(n_skd)$, and each split creates two sub-problems whose sub-problem sizes sum to $n$, we know that each split leads to $\mathcal{O}(nkd)$ runtime, and we have $k$ such splits to evaluate, leading to $\mathcal{O}(nk^2d)$ runtime for the first iteration. \\
In the recursive step, we call lookahead on two sub-problems whose sizes sum to $n$, and each of which has a similar runtime analysis. \\
We run at most $d$ layers of recursion. \\
From this, we have a total runtime bound of $\mathcal{O}(nk^2d^2)$, since we have $d$ levels which each take $\mathcal{O}(nk^2d)$ time. 
\\
\textbf{Proof via recurrence relation: }

For dataset $D$ and remaining depth $d$, and defining $i^*$ as the split selected by LicketySPLIT at the current iteration, 
we have the runtime recurrence relation: 
    $$T(D, d) = \begin{cases}
        T(D(i^*), d-1) + T(D(\bar{i^*}), d-1) + \sum_{i=1}^k \Big(\mathcal{O}(|D|) + \mathcal{O}(|D(i)|kd) + \mathcal{O}(|D(\bar{i})|kd)\Big)  \ \ , & d > 1\\
        |D| & d=1,
    \end{cases}$$
because at each level and each feature, LicketySPLIT needs to compute the split ($\mathcal{O}(|D|)$ time), then run greedy on the left and right subproblems, taking $\mathcal{O}(|D(i)|kd)$ and $\mathcal{O}(|D(\bar{i})|kd)$ time, respectively. Then it needs to recurse on the optimal of those splits. 

Noting that $|D(\bar{i})| + |D({i})| = |D|$, this simplifies to: 
    $$T(D, d) = \begin{cases}
        T(D(i^*), d-1) + T(D(\bar{i^*}), d-1) + \sum_{i=1}^k \Big(\mathcal{O}(|D|) + \mathcal{O}(|D|kd)\Big) \ \ , & d > 1\\
        \mathcal{O}(|D|) & d=1.
    \end{cases}$$

Given this recurrence, we can show $T(D, d) \in \mathcal{O}(nk^2d^2)$ inductively. 

First, define $c_A, n_A$ be values such that the runtime of each $\mathcal{O}(|D|)$ steps in the recurrence above is below $c_A * |D|$ for $k > k_A, |D| > n_A$ (we know such values exist because of the definition of $\mathcal{O}$). Then define $c_B, n_B, d_B, k_B$ be values such that the runtime of each $\mathcal{O}(|D|kd)$ step in the recurrence above is below $c_B * |D|kd$ for $k > k_B, |D| > n_B, d > 1$ (we know such values exist because of the definition of $\mathcal{O}$).
Now set: 
\begin{align*}
    c = \max(c_A, c_B, 1)\\
    n_0 = \max(n_A, n_B, 1)\\
    k_0 = \max(k_B, 1)\\
    d_0 = 1
\end{align*}
so that, for any $k \geq k_0, |D|=n \geq n_0, d \geq d_0$ , we can bound all the $\mathcal{O}(|D|)$ steps as taking less than $c|D|$ time, and all the $\mathcal{O}(|D|kd)$ steps as taking less than $c|D|kd$ time. 

$$T(D, d) \leq \begin{cases}
        T(D(i^*), d-1) + T(D(\bar{i^*}), d-1) + \sum_{i=1}^k \Big(c|D| + c|D|kd \Big) \ \ , & d > 1\\
        c|D| & d=1.
    \end{cases}$$

We now want to show that  for any $k \geq k_0, |D|=n \geq n_0, d \geq d_0$ , we can bound the runtime of the recurrence $T$ as $\leq cnk^2d^2$, where $n = |D|$. 

We show this by induction: 

\textit{Base Case ($d=1$)}: 

Trivially, $T(D, 1) \leq c|D| \leq c |D|k^2d^2$ for any $k \geq k_0, |D|=n \geq n_0$. Note that $k^2d^2 \geq 1$ because each of $k$ and $d$ are at least 1. 

\textit{Inductive Step ($d \geq 2$)}: 

Now, inductively: 
\begin{align}
    &T(D, d) \leq T(D(i^*), d-1) + T(D(\bar{i^*}), d-1) + \sum_{i=1}^k\Big(c|D| + c|D|kd\Big) \\ 
    &T(D, d) = T(D(i^*), d-1) + T(D(\bar{i^*}), d-1) + c|D|k + c|D|k^2d \\ 
&\leq ck^2(d-1)^2(|D(i*)| + c|D(\bar{i*})|) + c|D|k + c|D|k^2d \\ 
&\leq ck^2(d-1)^2(|D|) + c|D|k + c|D|k^2d \\ 
& < c|D|k^2((d-1)^2 + 1 + d )\\ 
& = c|D|k^2(d^2-d+2)\\ 
& \leq c|D|k^2d^2 \textrm{, noting that $d \geq 2$.}
\end{align}
Thus as $|D| = n$, we have the runtime in $\mathcal{O}(nk^2d^2)$.

\end{proof}
\subsubsection{Additional Claims, with Proofs}
We here prove some additional results about how our trees compare to optimal ones. 

\begin{theorem}[Optimality certificate based on lookahead depth]
Algorithm \ref{alg::lookahead} will return a tree with objective no worse than a globally optimal tree with maximum depth $d_\textrm{lookahead}$. 
\end{theorem}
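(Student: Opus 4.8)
The plan is to compare the tree $T$ that Algorithm \ref{alg::lookahead} returns against a fixed minimizer $T^*$ of the depth-$d_l$ problem, where $d_l = d_\textrm{lookahead}$, so that $L(T^*, D, \lambda) = \mcL^*(D, d_l, \lambda)$. I would argue in two stages mirroring the two phases of the algorithm: first that the value computed by the lookahead phase (the solution of Equation \ref{eqn:lookahead_eqn_og}) is already at most $\mcL^*(D, d_l, \lambda)$, and then that the optimal-subtree postprocessing can only decrease the objective further. Since the algorithm may build a tree of depth up to $d > d_l$, intuitively it has strictly more freedom than a depth-$d_l$ solver, so the inequality should be comfortable.

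For the first stage I would record a preliminary observation: on any subproblem the sparse greedy tree $T_g$ never has worse objective than the single leaf, because the greedy rule makes a split only when the accuracy gain exceeds $\lambda$, i.e. only when the objective strictly decreases; hence $L(T_g(D_t, \cdot, \lambda), D_t, \lambda)$ is at most the leaf cost $\lambda + \min\{|D_t^-|, |D_t^+|\}/N$. I would then prove, by structural induction on $T^*$, that the value $\mcL(D, \cdot, \lambda)$ returned by the recursion of Equation \ref{eqn:lookahead_eqn_og} satisfies $\mcL(D, \cdot, \lambda) \le L(T^*, D, \lambda)$. In the base case $T^*$ places a leaf on the current subproblem: both branches of the recursion (the $d' > d - d_l$ case and the lookahead base case) include the two leaf terms $\lambda + |D^\pm|/N$ inside their minimand, so the computed value is bounded by the leaf cost that $T^*$ pays there. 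In the inductive step $T^*$ splits the current subproblem on some $f$ into subtrees of depth at most $d_l - 1$; the minimization over $f \in \mcF$ in Equation \ref{eqn:lookahead_eqn_og} may select that same $f$, and by additivity of the objective across the two children together with the inductive hypothesis applied to each, the value is at most the sum of the two subtree losses, which equals $L(T^*, D, \lambda)$. Because $T^*$ has depth at most $d_l$, its structure is reachable within the lookahead budget, so the induction gives $\mcL(D, \cdot, \lambda) \le \mcL^*(D, d_l, \lambda)$.

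For the second stage I would compare $T$ against the greedy-completed tree whose objective the lookahead phase reports. Postprocessing replaces each prefix leaf $u$ by $\mathrm{GOSDT}(D(u), d - d_u, \lambda_u)$, the exact depth-$(d-d_u)$ minimizer on that subproblem under the renormalized penalty $\lambda_u = \lambda|D|/|D(u)|$. Whatever the lookahead phase charged at $u$ — the bare leaf cost for a shallow leaf, or the depth-$(d-d_l)$ greedy completion for a leaf at the lookahead depth — is attained by a feasible candidate in that GOSDT search, so the returned subtree's objective is no larger. Since the objective decomposes additively across the prefix leaves, substituting each optimal subtree for the corresponding completion cannot increase the total, giving $L(T, D, \lambda) \le \mcL(D, \cdot, \lambda) \le \mcL^*(D, d_l, \lambda)$, which is the claim.

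The step I expect to require the most care is the inductive argument of the first stage: specifically, keeping the $1/N$ normalization and the renormalized penalties consistent between the global objective and the per-subproblem recursion, and confirming that the ``terminate as a leaf'' option is genuinely available at every internal depth so that a leaf of $T^*$ at depth strictly less than $d_l$ is matched rather than forced to expand. Once that bookkeeping is pinned down, the greedy-never-worse-than-a-leaf observation and the exactness of the postprocessing GOSDT calls make both inequalities routine.
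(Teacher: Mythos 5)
Your proposal is correct and takes essentially the same route as the paper's proof: the paper observes that for any tree of depth at most $d_\textrm{lookahead}$ there is an analogous tree in SPLIT's search space that is no worse (since leaf/greedy completions only ever improve on a bare leaf), that the algorithm globally optimizes over that search space, and that postprocessing can only improve the objective. Your structural induction on $T^*$ through Equation \ref{eqn:lookahead_eqn_og}, together with the feasibility-of-the-charged-completion argument for the GOSDT postprocessing step, is simply a more explicit formalization of those same two observations.
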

\begin{proof}
    Note that Algorithm \ref{alg::lookahead} considers all possible tree structures up to depth $d_\textrm{lookahead}$, with greedy completions of those structures. Those greedy completions are no worse than leaves with respect to our objective - they only expand beyond a leaf if the regularized objective is better than leaving the tree node as a leaf. So for any tree $t$ of depth at or below $d_\textrm{lookahead}$, there exists an analogous tree in the search space of Algorithm \ref{alg::lookahead}, with objective no worse than that of $t$. 

    Now, note that Algorithm \ref{alg::lookahead} globally optimizes over its search space. So the tree returned by Algorithm \ref{alg::lookahead} has objective no worse than any other element in the algorithm's search space.

    We now have that the tree returned by Algorithm \ref{alg::lookahead} has objective no worse than a globally optimal tree with maximum depth $d_\textrm{lookahead}$. For any globally optimal tree $t^*$ of that depth, we know there exists an analogous tree $t'$ in the search space of Algorithn \ref{alg::lookahead}, with objective no worse than that of $t^*$. And we know that the tree returned by the algorithm is no worse than tree $t'$, and thereby no worse than $t^*$. 

    (Note that postprocessing does not change the above, since it only ever improves the objective of the reported solution).
\end{proof}

\begin{theorem}[Conditions for heuristic optimality]\label{thm:opt}
If any true globally optimal tree uses greedy splits after depth $d_l$, then SPLIT will return a globally optimal tree.
\end{theorem}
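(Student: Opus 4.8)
The plan is to prove this essentially as a corollary of the search-space structure of Algorithm~\ref{alg::lookahead}, by showing that the hypothesis forces a globally optimal tree to lie inside the family that SPLIT optimizes over exactly. I would build directly on the optimality-certificate argument established for Algorithm~\ref{alg::lookahead}: SPLIT enumerates every prefix of depth at most $d_l$ (where at each prefix node it may either split on any feature or declare a leaf) and completes each surviving node at depth $d_l$ with the greedy subtree $T_g$, and it globally minimizes the objective of Equation~\ref{eqn:obj} over this family. Exact global optimization over the family is guaranteed because the modified bounds in Algorithm~\ref{alg::bounds} set $\mathrm{lb}=\mathrm{ub}$ equal to the greedy loss at the lookahead depth, so the underlying GOSDT search treats those subproblems as solved and is exact with respect to the remaining prefix choices.

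First I would fix a globally optimal tree $T^*$ satisfying the hypothesis and decompose it into its depth-$d_l$ prefix $P^*$ together with the subtrees rooted at the nodes of $P^*$ lying at depth $d_l$. Because $T^*$ uses greedy splits below depth $d_l$, each such subtree coincides with the greedy tree $T_g$ that SPLIT attaches when completing $P^*$; and any node where $T^*$ stops short of depth $d_l$ is captured by $P^*$ choosing a leaf there. Hence $T^*$ is precisely the tree SPLIT produces from the prefix $P^*$, so $T^*$ lies in SPLIT's search space. Since SPLIT globally minimizes the objective over that space, the returned tree $\hat T$ satisfies $L(\hat T, D, \lambda)\le L(T^*, D, \lambda)=\mcL^*(D,d,\lambda)$; as $\mcL^*(D,d,\lambda)$ is the global minimum, equality holds and $\hat T$ is globally optimal. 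If postprocessing is enabled, it only replaces greedy completions with optimal subtrees, which can never raise the objective, so the reported tree stays globally optimal.

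The main obstacle is making the phrase \emph{``uses greedy splits after depth $d_l$''} precise enough to genuinely match the completion $T_g$ that SPLIT employs. It is not sufficient that each split below $d_l$ maximize information gain: I must also ensure that $T^*$'s decision of \emph{whether} to split (rather than declare a leaf) agrees with the $\lambda$-thresholded stopping rule of $T_g$, and that ties in information gain are broken consistently with SPLIT. The clean route is to read the hypothesis as stating that the subtree of $T^*$ rooted at each depth-$d_l$ node equals $T_g$ on the corresponding subproblem, so that the identification of $T^*$ with the greedy completion of $P^*$ is exact and the remainder is immediate. Should one only assume information-gain-maximizing splits, I would instead argue the weaker fact that the greedy completion of $P^*$ has objective no worse than $T^*$'s own completion; this still suffices, since SPLIT optimizes over \emph{all} prefixes and therefore returns a tree at least as good as the greedy completion of $P^*$, which is no worse than $T^*$.
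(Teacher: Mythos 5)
Your proposal is correct and follows essentially the same route as the paper's own (very terse) proof: SPLIT exactly optimizes the objective over the family of trees whose depth-$d_l$ prefixes are completed greedily, the hypothesis places some globally optimal tree inside that family, hence the returned tree matches the global optimum, and postprocessing can only improve it. Your additional care about what ``uses greedy splits after depth $d_l$'' must mean (the $\lambda$-stopping rule and tie-breaking), together with the fallback observation that the greedy completion $T_g$ is objective-optimal among all information-gain-maximizing completions, actually patches an ambiguity the paper's proof glosses over, but it does not change the underlying argument.
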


\begin{proof}
    We prove Theorem \ref{thm:opt} as follows:
    
    Our algorithm globally optimizes over the set of all trees that use greedy splits after depth $d_{l}$. Thus, if at least one such tree in that set is also in the set of globally optimal trees, we know we will find that tree or another equivalently good tree according to our objective. (Note that postprocessing does not change the above, since it only ever improves the objective of the reported solution). 
\end{proof}

\subsubsection{Proof of Theorem \ref{thm::arbitrarily_better}}
\arbitrarilybetter*
\begin{proof}
    Our proof follows a similar construction as \cite{topk}. They define the function Tribes as follows: 
    \begin{definition}[Tribes: from \citealt{topk}]
        For any input length $k$, let $w$ be the largest integer such that $(1 - 2^{-w})^{\ell/w} \leq \frac{1}{2}$. For $\bx \in \{0, 1\}^\ell$, let $\bx^{(1)}$ be the first $w$ coordinates, $\bx^{(2)}$ the second $w$, and so on. \textrm{\rm Tribes}$_\ell$ is defined as:
        \begin{equation}
            \textrm{Tribes}_\ell(\bx) = \left(\bx_1^{(1)}\land ....\land \bx_w^{(1)}\right) \lor .... \lor \left(\bx_1^{(t)}\land ....\land \bx_w^{(t)}\right)
        \end{equation}
        \end{definition}
    where $t = \Big\lfloor \frac{\ell}{w}\Big\rfloor$. \citet{blanc2019top} prove the following properties of Tribes: 
    \begin{itemize}
        \item Tribes$_\ell$ is monotone. 
        \item \textrm{Tribes}$_\ell$ is nearly balanced:\[
        \mathbb{E}_{\bx \sim \{0,1\}^\ell} [\textrm{Tribes}_\ell(\bx)] = \frac{1}{2} \pm o(1)\]
        where the $o(1)$ term goes to 0 as $\ell$ goes to $\infty$.
        \item All variables in \textrm{Tribes}$_\ell$ have small correlation: For each $i \in [\ell]$,
        \[
        \text{Cov}_{\bx \sim \{0,1\}^\ell} [\bx_i, \textrm{Tribes}_\ell(\bx)] = O\left( \frac{\log \ell}{\ell} \right).
        \]
    \end{itemize}
    Further define the majority function as follows: 
    \begin{definition}[Majority]
        The majority function indicated by $\text{Maj} : \{0, 1\}^\ell \to \{0, 1\}$, returns
        \[
        \text{Maj}(x) := \mathbf{1}[\text{at least half of } x\text{'s coordinates are } 1].
        \]
    \end{definition}
    Let the number of features $k = d_l + u - 1$ for lookahead depth $d_l$ and constant $u$. Define the following data distribution over $\{0,1\}^k \times \{0,1\}$:
\begin{itemize}
    \item Sample $\bx \sim \textrm{Uniform}\big(\{0,1\}^k\big)$.
    \item Let $\bx(d_l)$ be the first $d_l$ elements in $\bx$ and $x(\bar{d_l})$ be the remaining elements. Compute: 
    \begin{align}
    \label{eqn:formulation_tribes_majority}
        y = f(\bx) = 
        \begin{cases} 
            \textrm{Tribes}_{d_l}(\bx(d_l)) & \text{with probability } 1-\epsilon, \\
            \textrm{Majority}(\bx_{\bar{d_l}}) & \text{with probability } \epsilon.
        \end{cases}
    \end{align}
\end{itemize}
\paragraph{How does lookahead fare on this data distribution?}\leavevmode\leavevmode\\\\
Consider our lookahead heuristic. If we exhaustively search over all possible features up to depth $d_l$, we are guaranteed to perfectly classify $\textrm{Tribes}_{d_l}(\bx(d_l))$, as it is computed from $d_l$ features. In this scenario, the lookahead prefix tree will be a full binary tree, with $2^{d_l}$ leaves corresponding to every outcome of Tribes. When we extend this tree up to depth $d$ (with or without postprocessing), Algorithm \ref{alg::lookahead} is still guaranteed to achieve at least $1-\epsilon$ accuracy.
\paragraph{How does greedy fare on this data distribution?}\leavevmode\leavevmode\\\\
We now apply Lemma $4.4$ from \cite{topk} in this context, adjusting the notation to suit our case. Let $T$ be the tree of depth $d$ returned by greedy. Consider any root-to-leaf path of $T$ that does not query any of the first $d_l$ features of $\bx$ (i.e. the Tribes block). Only features from the Majority block are therefore queried by $T$ along this path. We can therefore write the probability of error along this path: 
\begin{align*}
    \Pr_{(\bx,y)\sim \mathcal{D}} & [T(\bx) = y \mid \bx \text{ follows this path}]\\
    &= (1 - \epsilon) \Pr_{(\bx,y)\sim \mathcal{D}} [T(\bx) = \text{Tribes}_{d_l}(\bx(d_l)) \mid \bx \text{ follows this path}] \\
    &\quad + \epsilon \cdot \Pr_{(\bx,y)\sim \mathcal{D}} [T(\bx) = \text{Majority}(\bx(\bar{d_l})) \mid \bx \text{ follows this path}] \\
    &\leq (1 - \epsilon) \cdot \left( \frac{1}{2} + o(1) \right) + \epsilon \cdot 1 \\
    &\leq \frac{1 + \epsilon}{2} + o(1)
\end{align*}
where the last line follows, because \textit{Tribes} is nearly balanced. As the distribution over x
is uniform, each leaf is equally likely. \cite{topk} then show that, if only $p$-fraction of root-to-leaf paths of $T$ query at least one of the first $d_l$ coordinates, then: 
\begin{align}
    \Pr_{(\bx,y)\sim \mathcal{D}}[T(\bx) = y &\leq (1-p)\Big(\frac{1 + \epsilon}{2} + o(1)\Big) + p\cdot 1]\\
    &\leq \frac{1}{2} + \frac{\epsilon}{2} + \frac{p}{2} + o(1).
\end{align}
We now want to show that, just like in the case of \cite{topk}, $p$ is small asymptotically. If this is the case, we can claim that a greedy tree is arbitrarily bad. The only difference between \cite{topk} and us is that their greedy tree has depth $d_l$ (adjusting for notation), but we want to construct a tree of depth $d$. \\
\\
We now use Lemma $7.4$ from \cite{blanc2019top}, which proves the following (again, adjusting for our notation): 
A random root-to-leaf path of a greedy tree $T$ satisfies the following with probability at least $1-\mathcal{O}(u^{-2})$: \textit{If the length of this path is less than $\mathcal{O}(\frac{u}{\log u})$, at any point along that path, all coordinates within the majority block that have not already been queried have correlation at least $\frac{1}{100\sqrt{u}}$.}
Now, for a greedy tree of depth $d$: 
\begin{itemize}
    \item We need to set $u \geq \Omega(d\log d)$ so that all root-to-leaf paths have length at most $\mathcal{O}\Big(\frac{u}{\log u}\Big)$, so the above lemma applies. 
    \item Remember that the size of our Tribes block is still fixed as the lookahead depth $d_l$, according to Equation \ref{eqn:formulation_tribes_majority}. From the definition of tribes, all variables in this block will have correlation $\mathcal{O}\Big(\frac{\log d_l}{d_l}\Big)$. Because we want the correlations in the majority block to be greater than those in Tribes, we need to set $\frac{1}{100\sqrt{u}} \geq \Omega\Big(\frac{\log d_l}{d_l}\Big)$, implying that $u \leq \mathcal{O}\Big(\frac{d_l^2}{\log^2d_l}\Big)$. 
\end{itemize}
Thus, it follows that $p = \mathcal{O}(u^{-2})$ if the conditions above are satisfied. If we set $d_l =\frac{d-1}{2} =\mathcal{O}(d)$, we can say that, for any $\Omega(d\log d) \leq u \leq \mathcal{O}\Big(\frac{d^2}{\log^2d}\Big)$, a greedy tree of depth $d$ will yield accuracy $\leq \frac{1}{2} + \epsilon$, as it almost never selects any variable from the Tribes block.
\end{proof}

\begin{restatable}[All Trees in RESPLIT Can be Arbitrarily Better Than Greedy]{theorem}{arbitrarilybettertreefarms}
    \label{thm::arbitrarily_better_treefarms}
    For every $\epsilon, \epsilon^\prime > 0$, depth budget $d$, and lookahead depth $d_l$, Rashomon set size $R$, there exists a data distribution $\mathcal{D}$ and sample size $n$ for which, with high probability over a random sample $S \sim \mathcal{D}^n$, all $R$ trees output by Algorithm \ref{alg::resplit} with minimum runtime lookahead depth $d_l = \frac{d-1}{2}$ achieve accuracy at least $1-\epsilon - \epsilon^\prime + \mathcal{O}(\epsilon\epsilon^\prime)$ but a pure greedy approach achieves accuracy at most $\frac{1}{2} + \epsilon$.
\end{restatable}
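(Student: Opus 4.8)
The plan is to reuse, essentially verbatim, the Tribes-over-Majority construction from the proof of Theorem~\ref{thm::arbitrarily_better}: sample $\bx \sim \mathrm{Uniform}(\{0,1\}^k)$ with $k = d_l + u - 1$ and $d_l = \frac{d-1}{2}$, and set the label to $\mathrm{Tribes}_{d_l}(\bx(d_l))$ with probability $1-\epsilon$ and to $\mathrm{Majority}(\bx(\bar{d_l}))$ with probability $\epsilon$, exactly as in Equation~\ref{eqn:formulation_tribes_majority}, choosing $u$ in the admissible window $\Omega(d\log d) \le u \le O(d_l^2/\log^2 d_l)$. With this construction the greedy half of the claim requires no new work: the argument already given for Theorem~\ref{thm::arbitrarily_better} shows that a pure greedy tree of depth $d$ queries a Tribes coordinate on only an $O(u^{-2})$ fraction of its root-to-leaf paths and hence attains accuracy at most $\frac{1}{2}+\epsilon$. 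The entire burden therefore falls on the RESPLIT (upper) side, which I would handle in three steps.

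First, I would establish \emph{prefix correctness}. Since RESPLIT builds its prefixes with SPLIT (Algorithm~\ref{alg::lookahead}) run to lookahead depth $d_l$, and $\mathrm{Tribes}_{d_l}$ is exactly computable by a full binary tree of depth $d_l$, the SPLIT-optimal prefix realises Tribes perfectly and incurs essentially the minimal possible loss. I would then argue that any prefix failing to query all $d_l$ Tribes coordinates is bounded away from this optimum, because dropping a Tribes bit forces a constant-order increase in loss (Tribes is nearly balanced and its coordinates carry vanishing pairwise information), so by choosing the distributional parameters to make this gap exceed RESPLIT's internal prefix-selection slack, every prefix that RESPLIT retains correctly computes Tribes. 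Consequently each of the $2^{d_l}$ leaves of every retained prefix is labelled by a fixed Tribes value $T$.

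Second, I would analyse the per-leaf subtrees returned by the TreeFARMS call. Conditioned on reaching a leaf with Tribes value $T$, the label equals $T$ with probability $(1-\epsilon) + \epsilon\Pr[\mathrm{Majority}=T] \approx 1-\epsilon/2$, so the optimal subtree is the constant predictor $T$; every subtree in the $\epsilon'$-Rashomon set produced by TreeFARMS has objective within $\epsilon'$ of this optimum, and since only the misclassification term is free to move, such a subtree disagrees with $T$ on at most an $\epsilon'$ fraction of the points routed to that leaf. Third, I would aggregate: any tree output by RESPLIT is a retained (Tribes-computing) prefix completed by one Rashomon subtree per leaf, so combining the $(1-\epsilon)$ probability mass on which the label is the Tribes value with the per-leaf $(1-\epsilon')$ guarantee gives overall accuracy at least $(1-\epsilon)(1-\epsilon') = 1 - \epsilon - \epsilon' + \epsilon\epsilon'$, i.e. the claimed $1-\epsilon-\epsilon'+O(\epsilon\epsilon')$, uniformly over all $R$ output trees.

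The main obstacle I anticipate lives in the first and third steps. In the first step I must make the prefix-selection tolerance and the distributional $\epsilon$ compatible, showing that the loss gap between a Tribes-computing prefix and any deficient one strictly exceeds RESPLIT's approximation slack so that no ``bad'' prefix enters the set; this needs a quantitative lower bound on the harm of omitting a Tribes coordinate. In the third step I must verify that the per-leaf $\epsilon'$ loss budgets, after the subproblem re-normalisation of $\lambda$ used inside RESPLIT, really aggregate to a global accuracy loss of order $\epsilon'$ rather than something larger — this is where the Rashomon-bound bookkeeping across subproblems has to be done honestly. Finally, as in Theorem~\ref{thm::arbitrarily_better}, all distributional statements (Tribes near-balanced, low coordinate correlations, the Majority tie probability) must be transferred to the sample $S \sim \mcD^n$ by a standard concentration argument, which fixes the required sample size $n$ and supplies the ``with high probability'' qualifier.
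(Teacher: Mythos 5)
Your proposal takes a genuinely different route from the paper, and it has a genuine gap: it never engages with the parameter $R$. The paper does not reuse the Tribes construction for this theorem; it builds a new distribution whose feature set itself depends on $R$. Besides $2d$ uniform features, it adds $R$ redundant features $X_j$, each a noisy copy (flip probability $\epsilon'$) of a uniformly chosen coordinate of a parity sub-block $X_1,\ldots,X_{d_l}$, and the label is determined (with probability $1-\epsilon$) by combining that parity with a majority vote over $X_{d_l+1},\ldots,X_d$, and (with probability $\epsilon$) by a majority over $X_{d+1},\ldots,X_{2d}$. This is exactly what makes ``all $R$ trees output by RESPLIT'' meaningful: the Rashomon set provably contains $R$ trees --- the empirical risk minimizer plus $R-1$ trees each substituting a distinct noisy copy $X_j$ for $X_{idx(j)}$ inside the parity --- and each such tree misreads the parity only on the $\epsilon'$-fraction of points where $X_j \neq X_{idx(j)}$, giving accuracy $(1-\epsilon)(1-\epsilon') = 1-\epsilon-\epsilon'+\mathcal{O}(\epsilon\epsilon')$. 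In your construction the distribution is independent of $R$, so you have no control over how many trees RESPLIT outputs or what they look like; since the theorem quantifies over an arbitrary prescribed Rashomon set size $R$, this is a missing core idea, not a bookkeeping detail.

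There is a second, related gap: your realization of $\epsilon'$ as a per-leaf TreeFARMS slack does not match Algorithm \ref{alg::resplit}. The leaf-level TreeFARMS calls there are thresholded by $L_g$, the objective of the greedy completion of that subproblem (lines 8--9 of the algorithm), not by a free Rashomon parameter $\epsilon'$. In your Tribes construction the greedy completion at each prefix leaf is already (essentially) the optimal constant predictor --- conditioned on a leaf, the label equals the Tribes value with probability $1-\epsilon$, and no split improves the regularized objective by more than $\lambda_u$ --- so $L_g$ essentially coincides with the optimal leaf objective, the per-leaf sets collapse to (nearly) a single tree, and $\epsilon'$ never enters the analysis. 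You would end up proving the accuracy bound only for however many trees happen to be output (all with accuracy at least $1-\epsilon$, so the bound is met vacuously), but with no way to exhibit $R$ of them and no role for $\epsilon'$, which is precisely the content this theorem adds beyond Theorem \ref{thm::arbitrarily_better}. In the paper's argument $\epsilon'$ is a property of the data (the noise level of the redundant copies), which is why it can appear in the accuracy bound for every member of the Rashomon set; the greedy half then follows because the parity coordinates and their noisy copies have zero correlation with the label, so a purely greedy tree splits only on $X_{d+1},\ldots,X_{2d}$ and is capped at accuracy $\tfrac{1}{2}+\epsilon$.
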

\begin{proof}
We divide the proof as follows:
\paragraph{Part 1: Defining the feature space}\leavevmode\\\\
    Let the number of features $k = R + 2d$ for depth budget $d$ and a constant $R$ that is the size of the Rashomon set we want to generate. We now create a dataset of size $n$ with $k$ features in the following manner:
    \begin{itemize}
        \item Loop over $n$ iterations: 
        \begin{itemize}
            \item Sample $X_1\ldots X_{2d}$ uniformly from $\{0,1\}^{2d}$.
            \item For each $2d < j \leq 2d+R$:
            \begin{itemize}
                \item Choose a random index $idx(j) \sim \textrm{Uniform}\{1,d_l\}$
                \item Define feature $X_j$ in the following manner: 
                \begin{align}
                \label{eqn:x_j}
                    X_j = \begin{cases}
                        X_{idx(j)} & \textrm{With probability $1-\epsilon^\prime$}\\
                        \bar{X}_{idx(j)} & \textrm{otherwise.}
                    \end{cases}
                \end{align}
            \end{itemize}
        \end{itemize}
    \end{itemize}
    Define the reference block of features to be $X_1,\ldots, X_d$. We break this block into $2$ sub-blocks.
    \begin{itemize}
        \item Sub-block $1$ corresponds to the $d_l$ features for which we will compute a parity bit. At a high level, a tree needs to know the parity of the expression in order to `unlock' a high accuracy. This also serves to `trick' greedy into not choosing these features, because they will have $0$ correlation with the label. Let $X_{1}\ldots X_{d_l}$ be the features in this sub-block.
        \item Sub-block $2$ corresponds to the set of $d-d_l$ features over which we will take a majority vote. We will only reach this block when the parity bit is $1$. Let $X_{d_l+1}\ldots X_{d}$ be the features in this sub-block.
    \end{itemize}
\paragraph{Part 2: Defining the labels}\leavevmode\\\\
For each example in this dataset, define the label $y$ as:
\begin{align}
\label{eqn:label_rset_parity}
y = \begin{cases}
    (X_1 \oplus \ldots\oplus X_{d_l}) \land \textrm{Majority}(X_{d_l+1} \ldots X_{d} ) & \textrm{with probability $1 - \epsilon$}\\
    \textrm{Majority}(X_{d+1} \ldots X_{2d})  & \textrm{with probability $\epsilon$.}
\end{cases} 
\end{align}
Intuitively, the label is the majority vote of the second block only when parity of the first block is even - otherwise the label is the minority vote.
\paragraph{Part 3: Bounding the Error of the Rashomon Set}\leavevmode\\\\
We can immediately see that the best tree will achieve an error $\geq 1-\epsilon$. The Rashomon set in this case will contain $R-1$ trees (besides the empirical risk minimizer). In particular, each tree $T$ in the Rashomon set will split on one unique feature $X_j \ \forall 2d < j \leq 2d+R$, making a prediction on an instance $\bX = (X_1,...X_k)$ of the following form: 
\begin{align}
T(\bX) = (X_1 \oplus \ldots X_{j} \ldots \oplus X_{d_l}) \land \textrm{Majority}(X_{d_l+1} \ldots X_{d} )
\end{align}
where tree $T$ employs feature $X_j$ in its path (defined in Equation \ref{eqn:x_j}. Whenever $X_j \neq X_{idx(j)}$ the parity of the first block will be different from that corresponding to Equation \ref{eqn:label_rset_parity}. However, this only happens with probability $\epsilon^\prime$. For the $1-\epsilon^\prime$ proportion of cases, the error will be that of the best tree (i.e. at least $1-\epsilon$), giving tree T an expected accuracy of least $(1-\epsilon)(1-\epsilon^\prime) = 1-\epsilon - \epsilon^\prime + \mathcal{O}(\epsilon\epsilon^\prime)$.
\paragraph{Bounding the Performance of a Greedy Tree}\leavevmode\\\\
A greedy tree will seek to split on the feature that has the highest correlation with the label $y$. From the definition of $y$ in Equation \ref{eqn:label_rset_parity}, it follows that $X_{d+1},\ldots, X_{2d}$ are the only variables that will have non $0$ correlation with the label outcome. Thus, the tree will fully split only on these features up to depth $d$. However, this means that the tree does not learn the underlying parity function $(X_1 \oplus \ldots\oplus X_{d_l})$. Thus, $1-\epsilon$ proportion of the time, the tree will achieve $\frac{1}{2}$ accuracy. Thus, the total accuracy is less than $\frac{1}{2}(1-\epsilon) + \epsilon = \frac{1}{2} + \epsilon$. 
\end{proof}

\begin{restatable}[LicketySPLIT Can be Arbitrarily Better than Greedy]{theorem}{arbitrarilybetterrecursive}
    \label{thm::arbitrarily_better_recursive}
    For every $\epsilon > 0$ and depth budget $d$, there exists a data distribution $\mathcal{D}$ and sample size $n$ for which, with high probability over a random sample $S \sim \mathcal{D}^n$, Algorithm \ref{alg::recursive_lookahead} achieves accuracy at least $1-\epsilon$ but a pure greedy approach achieves accuracy at most $\frac{1}{2} + \epsilon$.
\end{restatable}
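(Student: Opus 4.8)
The plan is to exhibit a distribution whose label is governed by a \emph{hard gate} that fools a purely greedy learner but that LicketySPLIT's recursive one-step lookahead can peel apart. The essential difference from the proof of Theorem~\ref{thm::arbitrarily_better} is that LicketySPLIT never performs a simultaneous depth-$d_l$ search, so I cannot reuse a $d_l$-bit \textrm{Tribes}/parity gate: after a single split, a greedy completion would still face a $(d_l-1)$-bit parity and reveal no advantage for that first split, leaving the one-step lookahead blind to the correct feature. Instead I would shrink the gate to a \emph{two-bit parity} $X_1\oplus X_2$ — the smallest parity a single lookahead step can expose, since conditioning on either bit turns the residual into one fully-correlated feature. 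Concretely, over $\{0,1\}^k$ with $k=2d+3$, sample all coordinates i.i.d.\ uniform, reserve $X_1,X_2$ as the parity bits and the odd-sized block $M=\{X_3,\dots,X_{2d+3}\}$ as a \emph{decoy Majority block}, and set
\[
y=\begin{cases} X_1\oplus X_2 & \text{w.p. } 1-\epsilon,\\ \textrm{Majority}(M) & \text{w.p. } \epsilon.\end{cases}
\]
The decoy block is what makes the greedy failure \emph{robust} in finite samples: its features carry a population correlation bounded below by a constant $c(\epsilon,d)>0$, while the parity bits have true correlation exactly $0$.

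For the greedy analysis I would argue by induction over the greedy path. Conditioned on any assignment of decoy features, $X_1\oplus X_2$ is balanced with respect to each of $X_1,X_2$ individually and the Majority branch is independent of them, so $X_1$ and $X_2$ have zero conditional correlation with $y$ at every node reachable by conditioning on decoys. Since there are $2d+1>d$ decoys, a depth-$d$ path never exhausts them, and greedy (empirically, w.h.p.) always prefers an unused decoy to a parity bit. Hence greedy's output $g$ is a function of $M$ alone; as $g\perp(X_1\oplus X_2)$, its accuracy is at most $(1-\epsilon)\cdot\tfrac12+\epsilon\cdot 1=\tfrac12+\tfrac{\epsilon}{2}\le \tfrac12+\epsilon$.

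For the LicketySPLIT analysis I would evaluate the one-step lookahead objective at the root. Splitting on $X_1$ produces children in which $y$ equals $X_2$ (resp.\ $\bar X_2$) on a $1-\epsilon$ fraction, so the greedy completion attains accuracy $\ge 1-\epsilon$; splitting on any decoy or irrelevant feature leaves the parity unresolved and caps the greedy completion near $\tfrac12+\tfrac{\epsilon}{2}$. The resulting lookahead gap is $\Omega(1)$, so LicketySPLIT selects $X_1$ (w.h.p.), and the recursive call on each child finds $X_2$ by the identical argument, giving overall accuracy $\ge 1-\epsilon$. (The sparsity penalty only helps here and can be absorbed for small $\lambda$, since the accuracy gaps are constants independent of $n$.)

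Everything above is a population statement, so I would close with a uniform-convergence step: Hoeffding bounds over the finitely many features applied to the $O(2^d)$ subproblems that greedy and LicketySPLIT actually visit (each of size $\ge n/2^d$), followed by a union bound. \textbf{The main obstacle is precisely the finite-sample robustness of greedy's failure}: I must guarantee that at \emph{every} node greedy reaches, the empirical correlations of $X_1,X_2$ stay below those of the still-unused decoys, so greedy never accidentally splits on a parity bit — which would let it recover the gate and break the $\tfrac12+\epsilon$ ceiling. Forcing the decoy correlation $c(\epsilon,d)$ to dominate the worst node's sampling error $O(\sqrt{2^d/n})$ yields a sample-size requirement of roughly $n=\Omega\!\big(2^d\,c(\epsilon,d)^{-2}\log(k/\delta)\big)$, after which the union bound delivers the high-probability guarantee; the symmetric and easier concentration check confirms that LicketySPLIT's $\Omega(1)$ lookahead gaps survive sampling.
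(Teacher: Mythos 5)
Your proposal is correct, and it follows the same blueprint as the paper's proof: mix, with probability $1-\epsilon$, a ``gate'' whose individual bits have zero correlation with the label, and with probability $\epsilon$ a Majority over a decoy block; greedy then only ever splits decoys and is capped near $\tfrac12+\epsilon$, while a single lookahead step exposes the gate, after which greedy completions finish the job. Your opening observation --- that the Tribes/parity gate of Theorem~\ref{thm::arbitrarily_better} cannot be reused because LicketySPLIT only looks ahead one level, so the gate must be unlockable by one split --- is exactly the insight behind the paper's construction as well. The instantiations differ: the paper gates with $x_1\oplus\textrm{Majority}(x_2,\ldots,x_d)$ and a size-$d$ decoy block, so that splitting $x_1$ makes $x_2,\ldots,x_d$ correlated and the greedy completion learns the whole majority gate; you gate with the minimal two-bit parity $X_1\oplus X_2$ and a size-$(2d+1)$ decoy block, so that splitting $X_1$ exposes a single fully-correlated feature. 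Your two deviations buy rigor that the paper's terse, purely population-level argument lacks. First, with $2d+1$ decoys a depth-$d$ path can neither exhaust the decoys nor \emph{determine} their Majority (at most $d$ bits are fixed, but $d+1$ agreeing bits are needed), so unused decoys retain strictly positive conditional correlation at every reachable node; in the paper's construction, a path fixing $\lceil (d+1)/2\rceil$ decoys to the same value zeroes out every remaining information gain, leaving greedy's behavior to tie-breaking --- a corner case the paper silently ignores. You should, however, state the non-determination point explicitly, since in your write-up the block size is justified only by non-exhaustion, while your induction actually needs the lower bound $c(\epsilon,d)>0$ on decoy correlation at \emph{every} visited node. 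Second, your Hoeffding-plus-union-bound step over the $O(2^d)$ visited subproblems, with an explicit sample-size requirement, is what the theorem's ``with high probability over $S\sim\mathcal{D}^n$'' clause actually demands; the paper omits any finite-sample argument. Both proofs are sound in spirit, but yours is the more complete argument for the statement as written.
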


\begin{proof}

Let 
$x \sim \textrm{Uniform}\big(\{0,1\}^{2d}\big)$ and
$$y = \begin{cases}
    x_1 \oplus \textrm{Majority}(x_2, \ldots x_d) & \textrm{with probability $1 - \epsilon$}\\
    \textrm{Majority}(x_{d+1}, \ldots x_{2d}) & \textrm{with probability $\epsilon$}\\
\end{cases}$$

A purely greedy, information-gain-based splitting approach will only split on features in the $x_{d+1}, \ldots x_{2d}$ block, since all have greater than zero information gain (unlike the other variables). Such a tree can improve to at most $\frac{1}{2} + \epsilon$ accuracy. 
 
However, Algorithm 3 (LicketySPLIT), when deciding on the first split, will pick $x_1$ as the first split, after observing that being greedy from $x_1$ onwards will achieve accuracy at least $1-\epsilon$: because once $x_1$ is known, variables $x_2, \ldots x_d$ have high information gain, and a greedy tree will pick those features for splits over $x_{d+1}, \ldots x_{2d}$. Splitting on all of the first $d$ features, then, affords performance at least $1 - \epsilon$. 

\end{proof}
\newpage
\subsection{Greedy Algorithm}\label{sec:greedy_alg}
\begin{algorithm}[H]
\caption{Greedy($D, d, \lambda$) $\to$ ($t_{\textrm{greedy}}, lb$)}
\label{alg:greedy}
\begin{algorithmic}[1]
\REQUIRE $D, d, \lambda$ \COMMENT{\textcolor{commentgreen}{{Data subset, depth constraint, leaf regularization}}}
\ENSURE $t_{\textrm{greedy}}, lb$ \COMMENT{\textcolor{commentgreen}{tree grown with a greedy, CART-style method; and the objective of that tree}}
\STATE $t_{\textrm{greedy}} \gets $ \textrm{(Leaf predicting the majority label in $D$)} 
\STATE $lb \gets \lambda + (\textrm{proportion of $D$ that does not have the majority label})$
\IF {$d > 1$}
\STATE let $f$ be the information gain maximizing split with respect to $D$ 
\STATE $t_{\textrm{left}}, lb_{\textrm{left}} \gets \textrm{Greedy}(D(f), d - 1, \lambda)$
\STATE $t_{\textrm{right}}, lb_{\textrm{right}} \gets \textrm{Greedy}(D(\bar{f}), d - 1, \lambda)$
\IF {$lb_{\textrm{left}} + lb_{\textrm{right}} < lb$}
\STATE $lb \gets lb_{\textrm{left}} + lb_{\textrm{right}}$ 
\STATE $t_\textrm{greedy} \gets$ \textrm{tree corresponding to: if $f$ is True then $t_{\textrm{left}}$, else $t_{\textrm{right}}$}
\ENDIF
\ENDIF
\STATE \textbf{return} $t_{\textrm{greedy}}, lb$
\end{algorithmic}
\end{algorithm}

\subsection{RESPLIT Algorithm}\label{sec:resplit_alg}
\begin{algorithm}[H]
\caption{RESPLIT($\ell$, D, $\lambda$, $d_l$, $d$)}
\label{alg::resplit}
\begin{algorithmic}[1]
\REQUIRE $\ell$, $D$, $\lambda$, $d_l$, $d$ \COMMENT{\textcolor{commentgreen}{loss function, samples, regularizer, lookahead depth, depth budget}} 
\STATE ModifiedTreeFARMS = TreeFARMS reconfigured to use \textbf{get$\_$bounds} (Algorithm \ref{alg::bounds}) whenever it encounters a new subproblem
\item $tf = $ ModifiedTreeFARMS$(\ell, D, \lambda,d_l)$ \COMMENT{\textcolor{commentgreen}{Call ModifiedTreeFARMS with depth budget $d_l$}}
\FOR[\textcolor{commentgreen}{Iterate through all depth $d_l$ prefixes found by ModifiedTreeFARMS}]{$t_{lookahead} \in tf$}
\FOR {leaf $u \in t_{lookahead} $}
    \STATE $d_{u} = $ depth of leaf 
    \STATE $D(u) = $ subproblem associated with $u$
    \STATE $\lambda_{u} = \lambda \frac{|D|}{|D(u)|}$ \COMMENT{\textcolor{commentgreen}{Renormalize $\lambda$ for the subproblem in question}}
    \STATE $T_g, L_g = $ Greedy($D(u)$,$d-d_u$, $\lambda_u$) \COMMENT{\textcolor{commentgreen}{Objective of greedy tree trained on subproblem}}
    \STATE $t_u = $ TreeFARMS$(D(u), d - d_{u},\lambda_{u}, L_g)$ \COMMENT{\textcolor{commentgreen}{Find all subtrees with loss less than $L_g$}}
    \IF {$t_u$ is not a leaf}
    \STATE Replace leaf $u$ with TreeFARMS object $t_u$
    \ENDIF
\ENDFOR
\STATE $t_{lookahead} = $ Enumerate$\_$TreeFARMS$\_$subtrees \COMMENT{\textcolor{commentgreen}{For each node in this prefix tree, store the number of subtrees we can generate rooted at that node. This speeds up indexing}}
\ENDFOR
\RETURN $tf$ \COMMENT{\textcolor{commentgreen}{Return in-place edited ModifiedTreeFARMS object}}
\end{algorithmic}
\end{algorithm}

\subsection{Indexing Trees in RESPLIT }\label{sec:resplit_indexing}
In this section, we present an algorithm that can quickly index trees output by RESPLIT. This would be especially useful if one wishes to obtain a random sample of trees from the Rashomon set. Because Algorithm \ref{alg::resplit} outputs a bespoke data structure involving TreeFARMS objects attached to a set of prefix trees, we needed to devise a method to efficiently query this structure to locate trees at a desired index. 
\begin{itemize}
    \item For each prefix found by the initial ModifiedTreeFARMS call, we additionally store the number of subtrees that can be formed with that prefix. Algorithm \ref{alg::enumerate_treefarms} shows how this is done. 
    \item We also store the cumulative count of the total number of trees that can be formed by the prefixes seen so far as we iterate through the list of prefixes. Algorithm \ref{alg::resplit-rset-count} called in line $2$ of Algorithm \ref{alg::resplit-index} does this. 
    \item Once the cumulative count is known, we start looping over the entire Rashomon set. For the $i^{th}$ index, we first obtain the corresponding prefix tree and then find the relative index of the $i^{th}$ tree within this prefix tree structure. For example, if we query the $500^{th}$ tree and our prefix contains trees indexed $400-600$ in the Rashomon set, the relative index of the query tree within this prefix is $100$.
    \item Using Algorithm \ref{alg::get-leaf-subtree-at-idx}, we proceed to recursively locate the relevant subtrees beyond the prefix. In particular, at a given node in the prefix, we have access to the number of sub-trees that can be formed with its left and right children. We use this information to create two separate indexes for the left and right child (seen in lines $9-10$)
    \item We hash all the indexes for future retrieval (line $16$ in Algorithm \ref{alg::resplit-index}).
\end{itemize}
\begin{algorithm}[H]
\caption{Enumerate$\_$TreeFARMS$\_$subtrees}
\label{alg::enumerate_treefarms}
\begin{algorithmic}[1]
\REQUIRE $t_{lookahead}$ \COMMENT{\textcolor{commentgreen}{Lookahed prefix with TreeFARMS objects attached to leaves}}
\IF{$t_{lookahead}$ is None}
    \STATE \textbf{Return} 1
\ELSIF{$t_{lookahead}$ is a TreeFARMS object}
    \STATE \textbf{Return} $len$($t_{lookahead}$), $t_{lookahead}$
\ENDIF
\STATE left$\_$expansions, left$\_$subtree $=$ enumerate$\_$treefarms\_subtrees($t_{lookahead}$.left$\_$child)
\STATE $t_{lookahead}$.left$\_$child.node $=$ left$\_$subtree
\STATE $t_{lookahead}$.left$\_$child.subtree$\_$count $=$ left\_expansions
\STATE right$\_$expansions, right$\_$subtree $=$ enumerate$\_$treefarms$\_$subtrees($t_{lookahead}$.right$\_$child)
\STATE $t_{lookahead}$.right$\_$child.node $=$ right$\_$subtree
\STATE $t_{lookahead}$.right$\_$child.subtree$\_$count $=$ right\_expansions
\STATE \textbf{Return} left$\_$expansions $\times$ right$\_$expansions, $t_{lookahead}$ \COMMENT{\textcolor{commentgreen}{Total number of subtrees = cross product of left and right subtree count}}
\end{algorithmic}
\end{algorithm}

\begin{algorithm}[H]
\caption{RESPLIT$\_$Rset$\_$Count(RESPLIT$\_$obj)}
\label{alg::resplit-rset-count}
\begin{algorithmic}[1]
\REQUIRE RESPLIT$\_$obj \COMMENT{\textcolor{commentgreen}{The RESPLIT object output by Algorithm \ref{alg::resplit}}} 
\STATE $t_{count} =0$ \COMMENT{\textcolor{commentgreen}{Total $\#$ trees}}
\STATE $p_{counts} = []$ \COMMENT{\textcolor{commentgreen}{Cumulative count of $\#$ trees beginning with a given prefix}}
\FOR{$t_{lookahead} \in $ RESPLIT$\_$obj}
    \STATE $p_{count} = 1$
    \FOR {leaf $u \in t_{lookahead}$} 
        \STATE $tf_u = $ TreeFARMS object fitted on subproblem $D(u)$
        \STATE $s_{count} = len(tf_u)$ \COMMENT{\textcolor{commentgreen} {Number of subtrees found for subproblem $D(u)$}}
        \STATE $p_{count} = p_{count} \times s_{count}$
    \ENDFOR 
    \STATE $t_{count} = t_{count} + p_{count}$
    \STATE $p_{counts}.add(t_{count})$
\ENDFOR
\RETURN $p_{counts}, t_{count}$
\end{algorithmic}
\end{algorithm}

\begin{algorithm}[H]
\caption{get\_leaf\_subtree\_at\_idx($t_{lookahead}$, tree$\_$idx)}
\label{alg::get-leaf-subtree-at-idx}
\begin{algorithmic}[1]
    \REQUIRE $t_{lookahead}$, tree$\_$idx \COMMENT{\textcolor{commentgreen}{A lookahead prefix tree with TreeFARMS objects attached to leaves, index to search within this tree}}
    \IF{$t_{lookahead}$ is a Leaf}
        \RETURN $t_{lookahead}$ \COMMENT{\textcolor{commentgreen}{Directly return the leaf object}}
    \ELSIF{$t_{lookahead}$ is a list}
        \RETURN $t_{lookahead}[tree\_idx]$ \COMMENT{\textcolor{commentgreen}{If it's a list, return the subtree at the given index}}
    \ENDIF
    \STATE $tree \gets$ Node($t_{lookahead}$.feature)) \COMMENT{\textcolor{commentgreen}{Initialize an empty node}}
    \STATE left$\_$count $= t_{lookahead}$.left$\_$child.subtree$\_$count \COMMENT{\textcolor{commentgreen}{The number of subtrees that can be found rooted at this node}}
    \STATE right$\_$count $= t_{lookahead}$.right\_child.subtree$\_$count
    \STATE right$\_$idx $=$ tree$\_$idx \% right$\_$count
    \STATE left$\_$idx $=$ tree$\_$idx // right$\_$count
    \STATE $tree$.left$\_$child $=$ get\_leaf\_subtree\_at\_idx($t_{lookahead}$.left\_child.node, left$\_$idx)
    \STATE $tree$.right$\_$child $=$ get\_leaf\_subtree\_at\_idx($t_{lookahead}$.right\_child.node, right$\_$idx)
    \RETURN $tree$
\end{algorithmic}
\end{algorithm}
\begin{algorithm}[H]
\caption{RESPLIT$\_$indexing}
\label{alg::resplit-index}
\begin{algorithmic}[1]
\REQUIRE RESPLIT\_obj \COMMENT{\textcolor{commentgreen}{The RESPLIT object output by Algorithm \ref{alg::resplit}}}
\STATE hash = $\emptyset$ \COMMENT{\textcolor{commentgreen}{Dictionary to map global tree indices to tree objects}}
\STATE $t_{count}, p_{counts}$ = RESPLIT\_Rset\_Count(RESPLIT\_obj) \COMMENT{\textcolor{commentgreen}{Total number of trees and prefix-wise cumulative counts}}
\STATE start = 0
\FOR{$i = 0$ \TO $len(p_{counts}) - 1$}
    \IF{$i > 0$}
        \STATE start = p\_counts[$i-1$]+1 \COMMENT{\textcolor{commentgreen}{Start index for prefix $i$}}
    \ENDIF
    \STATE end = p\_counts[$i$] \COMMENT{\textcolor{commentgreen}{End index for prefix $i$}}
    \STATE $t_{lookahead} = $ RESPLIT\_obj.prefix\_list[$i$] \COMMENT{\textcolor{commentgreen}{The $i$-th prefix tree}}
    
    \FOR{local\_idx $= 0$ \TO end $-$ start $- 1$}
        \STATE global\_idx = start + local\_idx \COMMENT{\textcolor{commentgreen}{Absolute index of tree in Rashomon set}}
        \STATE tree = get\_leaf\_subtree\_at\_idx($t_{lookahead}, local\_idx$) \COMMENT{\textcolor{commentgreen}{Retrieve the corresponding subtree}}
        \STATE hash[global\_idx] = tree
    \ENDFOR
\ENDFOR
\RETURN hash
\end{algorithmic}
\end{algorithm}

\subsection{Modifications to Existing GOSDT / TreeFARMS Code}\label{sec:gosdt_details}
In this section, we detail the main modifications we made to the existing GOSDT and TreeFARMS codebase in order to set up SPLIT, LicketySPLIT, and RESPLIT. The algorithm components in red are the modifications - note that GOSDT and TreeFARMS both call these functions. TreeFARMS does some additional post-processing of the search trie to find the set of near-optimal trees - the details can be seen in \cite{xin2022treefarms}. 
\begin{algorithm}[H]
\caption{find$\_$lookahead$\_$tree($\ell$, $D$, $\lambda$, \textcolor{red}{$d_l$, $d$})}
\begin{algorithmic}[1]
\REQUIRE $\ell$, $D$, $\lambda$, \textcolor{red}{$d_l$, $d$} \COMMENT{\textcolor{commentgreen}{loss function, dataset, regularizer, lookahead depth, global depth budget}}
\STATE $Q \gets \emptyset$ \COMMENT{\textcolor{commentgreen}{ priority queue}}
\STATE $G \gets \emptyset$ \COMMENT{\textcolor{commentgreen}{ dependency graph}}
\STATE $s_0 \gets \{1, \ldots, 1\}$ \COMMENT{\textcolor{commentgreen}{ bit-vector of 1's of length $n$}}
\STATE $p_0 \gets \text{FIND\_OR\_CREATE\_NODE}(G, s_0,\textcolor{red}{d_l}, \textcolor{red}{d}, \textcolor{red}{0})$ \COMMENT{\textcolor{commentgreen}{ root (with depth 0)}}
\STATE $Q.\text{push}((s_0,\textcolor{red}{0}))$ 
\STATE $N = |D|$ \COMMENT{\textcolor{commentgreen}{global dataset size}} 
\WHILE{$p_0.\text{lb} \neq p_0.\text{ub}$}
    \STATE $s, \textcolor{red}{d}^\prime \gets Q.\text{pop}()$ \COMMENT{\textcolor{commentgreen}{index of problem to work on}}
    \STATE $p \gets G.\text{find}(s)$ \COMMENT{\textcolor{commentgreen}{ find problem to work on}}
    \IF{$p.\text{lb} = p.\text{ub}$}
        \STATE \textbf{continue} \COMMENT{\textcolor{commentgreen}{ problem already solved}}
    \ENDIF
    \STATE $(lb', ub') \gets (\infty, \infty)$ \COMMENT{\textcolor{commentgreen}{ loose starting bounds}}
    \FOR{each feature $j \in [1, k]$}
        \STATE $(s_l, s_r) \gets \text{split}(s, j, D)$ \COMMENT{\textcolor{commentgreen}{ create children}}
        \STATE $p_l^j \gets \text{FIND\_OR\_CREATE\_NODE}(G, s_l,\textcolor{red}{d_l}, \textcolor{red}{d}, \textcolor{red}{d^\prime+1}, N)$
        \STATE $p_r^j \gets \text{FIND\_OR\_CREATE\_NODE}(G, s_r,\textcolor{red}{d_l},\textcolor{red}{d}, \textcolor{red}{d^\prime+1}, N)$
        \STATE $lb' \gets \min(lb', p_l^j.\text{lb} + p_r^j.\text{lb})$ \COMMENT{\textcolor{commentgreen}{ create bounds as if $j$ were chosen for splitting}}
        \STATE $ub' \gets \min(ub', p_l^j.\text{ub} + p_r^j.\text{ub})$
    \ENDFOR
    \IF[\textcolor{commentgreen}{ signal the parents if an update occurred}]{$p.\text{lb} \neq lb'$ or $p.\text{ub} \neq ub'$}
        \STATE $p.\text{ub} \gets \min(p.\text{ub}, ub')$
        \STATE $p.\text{lb} \gets \min(p.\text{ub}, \max(p.\text{lb}, lb'))$
        \FOR[\textcolor{commentgreen}{ propagate information upwards}]{$p_\pi \in G.\text{parent}(p)$} 
            \STATE $Q.\text{push}((p_\pi.\text{id}, \textcolor{red}{d^\prime-1}), \text{priority} = 1)$
        \ENDFOR
    \ENDIF
    \IF{$p.\text{lb} \geq p.\text{ub}$}
        \STATE \textbf{continue} \COMMENT{\textcolor{commentgreen}{ problem solved just now}}
    \ENDIF
    \IF {\textcolor{red}{$d^\prime < d_l$}}
    \FOR[\textcolor{commentgreen}{ loop, enqueue all children}]{each feature $j \in [1, M]$} 
        \STATE \textbf{repeat} line 14-16 \COMMENT{\textcolor{commentgreen}{ fetch $p_l^j$ and $p_r^j$ in case of update}}
        \STATE $lb' \gets p_l^j.\text{lb} + p_r^j.\text{lb}$
        \STATE $ub' \gets p_l^j.\text{ub} + p_r^j.\text{ub}$
        \IF{$lb' < ub'$ and $lb' \le p.\text{ub}$}
            \STATE $Q.\text{push}((s_l, \textcolor{red}{d+1}), \text{priority} = 0)$
            \STATE $Q.\text{push}((s_r, \textcolor{red}{d+1}), \text{priority} = 0)$
        \ENDIF
    \ENDFOR
    \ENDIF 
\ENDWHILE
\STATE \textbf{return} $\mathcal{G}$
\end{algorithmic}
\end{algorithm}
\begin{algorithm}[H]
\caption{FIND\_OR\_CREATE\_NODE($G$, $s$, \textcolor{red}{$d_l$}, \textcolor{red}{$d$}, \textcolor{red}{$d^\prime$}, N)}
\begin{algorithmic}[1]
\REQUIRE $G, s$, \textcolor{red}{$d_l, d, d'$}, $N$ \COMMENT{\textcolor{commentgreen}{Graph, subproblem, lookahead depth, overall depth budget, current depth, global dataset size}}\\
\RETURN representation of subproblem entry for $s$, with that subropblem being present in the graph $G$
\IF[\textcolor{commentgreen}{ $p$ not yet in graph}]{$G.\text{find}(s) = \text{NULL}$} 
    \STATE create node $p$
    \STATE $p.\text{id} \gets s$ 
    \COMMENT{\textcolor{commentgreen}{ identify $p$ by $s$}}
    \STATE $D(s) = $ Dataset associated with subproblem $s$
    \STATE $p.\text{ub}, p.\text{lb} \gets \textcolor{red}{\text{get\_bounds}(\textcolor{black}{D(s)},d_l,d^\prime, d},N\textcolor{red}{)}$
    \IF[\textcolor{commentgreen}{If a further split would lead to worse objective than the upper bound}]{$p.ub \leq p.lb + \lambda$} 
        \STATE $p.\text{lb} \gets p.\text{ub}$ \COMMENT{\textcolor{commentgreen}{ no more splitting needed}}
    \ENDIF
    \STATE $G.\text{insert}(p)$ \COMMENT{\textcolor{commentgreen}{ put $p$ in dependency graph}}
\ENDIF
\STATE \textbf{return} $G.\text{find}(s)$
\end{algorithmic}
\end{algorithm}
\begin{algorithm}[H]
\caption{get\_bounds($D$, \textcolor{red}{$d_l$, $d^\prime$, $d$}, $N$) $\to$ lb, ub}
\begin{algorithmic}[1]
\REQUIRE $D$, \textcolor{red}{$d_l$, $d^\prime$, $d$}, $N$ \COMMENT{\textcolor{commentgreen}{support, lookahead depth, current depth,  overall depth budget, global dataset size}}
\RETURN lb, ub \COMMENT{\textcolor{commentgreen}{Return Lower and Upper Bounds}}

\IF {\textcolor{red}{$d^\prime = d_l$}}
\STATE \textcolor{red}{$T_g = $ Greedy$(D,d-d_l,\lambda)$}
\STATE \textcolor{red}{$H(T_g) = \# $ Leaves in $T_g $}
\STATE \textcolor{red}{$\alpha\gets \lambda H(T_g) + \frac{1}{N}\sum_{i \in s} \mathbf{1}[y_i \neq T_g(x_i)]$}
\STATE \textcolor{red}{$lb \gets \alpha$}
\STATE \textcolor{red}{$ub \gets \alpha$}
    \STATE $lb \gets $ Equivalent points bound \cite{gosdt}
    \STATE $ub = \lambda + \min\Big(\frac{1}{N}\sum_{(x,y) \in D} \mathbbm{1}[y_i = 1], \frac{1}{N}\sum_{(x,y) \in D} \mathbbm{1}[y_i = 0]\Big)$
\ENDIF

\STATE \textbf{return} lb, ub
\end{algorithmic}
\end{algorithm}
\begin{algorithm}[H]
\caption{extract$\_$tree($D$, $\mathcal{G}$, \textcolor{red}{$d_l$})}
\begin{algorithmic}[1]
\REQUIRE $D$,$\mathcal{G}$, \textcolor{red}{$d_l$} \COMMENT{\textcolor{commentgreen}{Dataset, Dependency graph of search space, lookahead depth}}\\
\RETURN Tree $t$
\STATE $t \leftarrow $ (Leaf predicting the majority label in $D$)
\STATE $ub \leftarrow \lambda + $ (proportion of $D$ that has the minority label)
\IF {\textcolor{red}{$d_l > 1$}}
\FOR {feature $f \in \mcF$}
     \STATE $p_{f} = $ subproblem associated with $D(f)$
     \STATE $p_{\bar{f}} = $ subproblem associated with $D(\bar{f})$
     \IF {$p_f.ub$ + $p_{\bar{f}}.ub \leq ub$ }
        \STATE $f_{opt} = f$ \COMMENT{\textcolor{commentgreen}{Best Feature}}
        \STATE $ub = p_f.ub$ + $p_{\bar{f}}.ub$
     \ENDIF 
\ENDFOR
\STATE $t_{left} = $ extract$\_$tree($D(f_{opt} )$, $\mathcal{G}(f_{opt} )$, \textcolor{red}{$d_l-1$})
\STATE $t_{right} = $ extract$\_$tree($D(\bar{f_{opt} })$, $\mathcal{G}(\bar{f_{opt} })$, \textcolor{red}{$d_l-1$})
\STATE $t.left = t_{left}$
\STATE $t.right = t_{right}$
\ENDIF
\STATE \textbf{return} t
\end{algorithmic}
\end{algorithm}
\begin{algorithm}[H]
\caption{\textcolor{red}{ModifiedGOSDT($\ell$, D, $\lambda$, $d_l$, $d$)}}
\label{alg::gosdtmod}
\begin{algorithmic}[1]
\REQUIRE $\ell$, $D$, $\lambda$, $d_l$, $d$ \COMMENT{\textcolor{commentgreen}{loss function, samples, regularizer, lookahead depth, depth budget}}
\STATE $\mathcal{G}$ = \textrm{find\_lookahead\_tree}($\ell$, D, $\lambda$, $d_l$, $d$)
\STATE t = \textrm{extract\_tree}($D$, $\mathcal{G}$, $d_l$) \COMMENT{\textcolor{commentgreen}{Extracts the prefix of the found tree, without filling in the greedy splits}}\\
\RETURN t
\end{algorithmic}
\end{algorithm}

\subsection{Additional Experimental Results}

For thoroughness, we provide several additional experimental results in Figures \ref{fig:train_loss_runtime_tradeoff}, \ref{fig:test_loss_runtime_tradeoff}, and \ref{fig:split_vs_gosdt} and Tables \ref{tab:1}, \ref{tab:2}, \ref{tab:3}, \ref{tab:4}, \ref{tab:5}, \ref{tab:5.5}, \ref{tab:6}, \ref{tab:7}, \ref{tab:7.5}, \ref{tab:8}, and \ref{tab:9}. These experimental results provide other perspectives on the loss/runtime/sparsity tradeoff, with particular emphasis on comparisons with a greedy approach. 

\begin{figure}[H]
    \centering
    \includegraphics[width=0.77\linewidth]{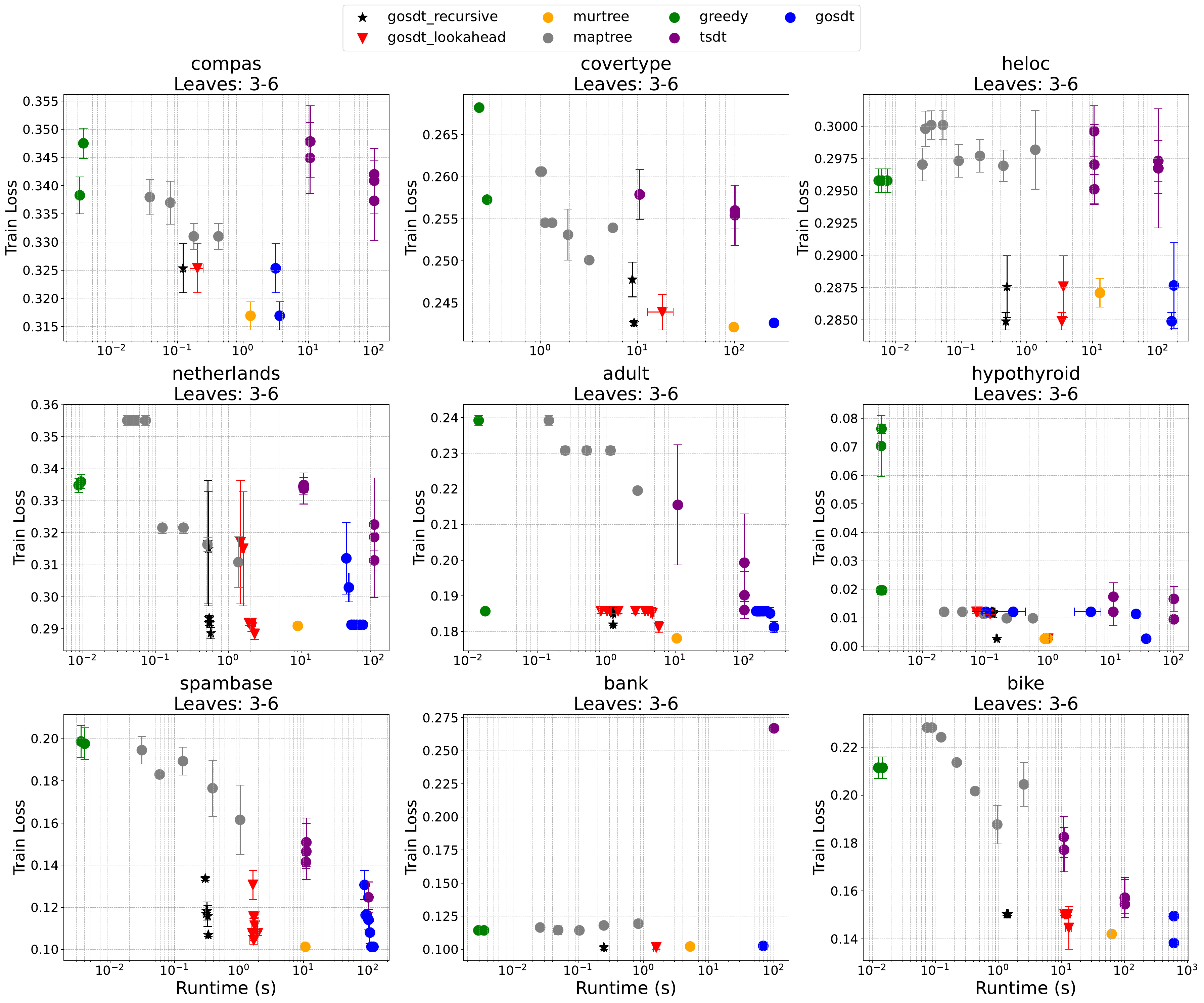}
    \includegraphics[width=0.77\linewidth]{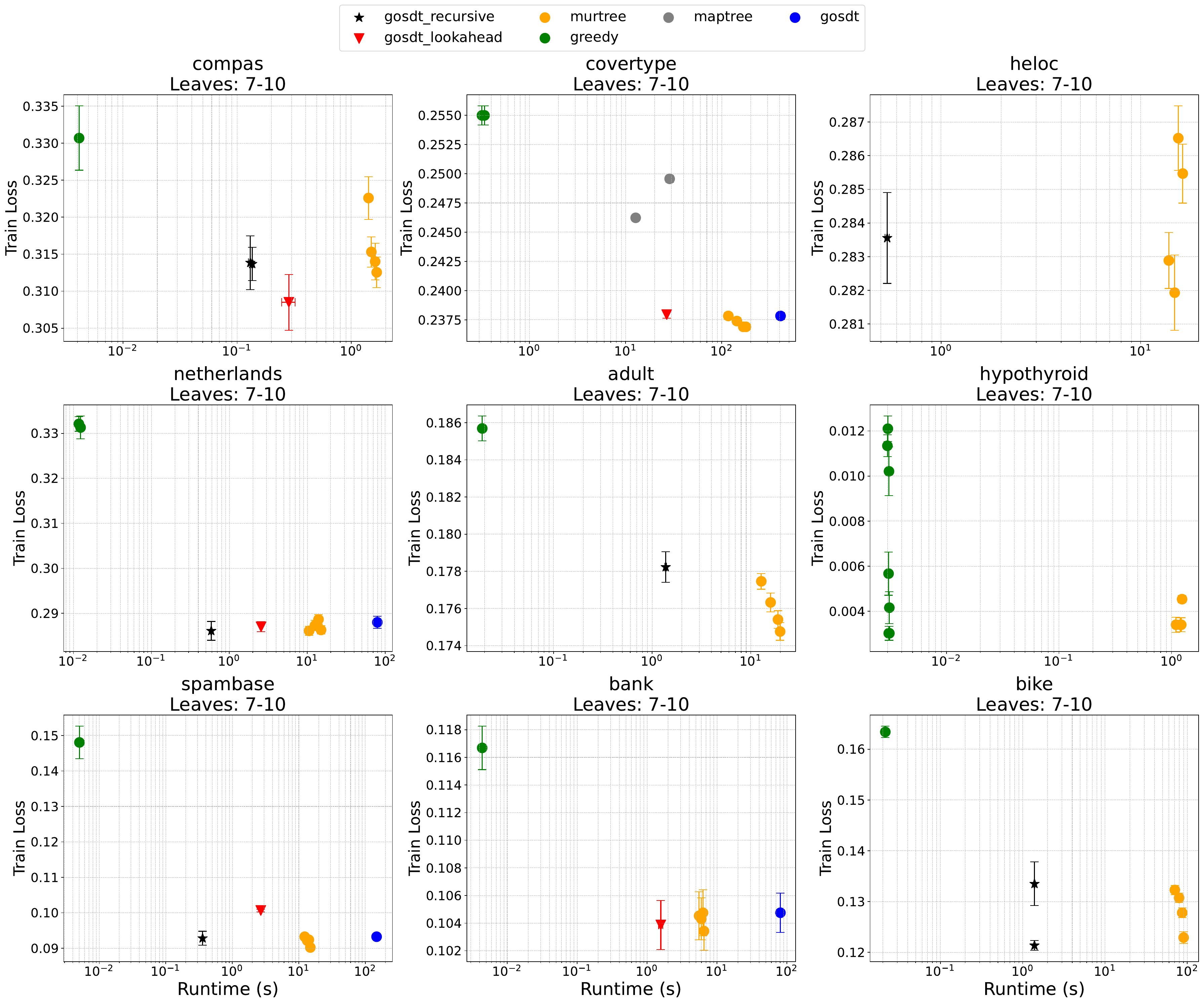}
    \caption{Tradeoff between \textbf{train loss and runtime} for all algorithms tested, for different sparsity levels (measured by $\#$ of leaves in the tree). Depth Budget $= 5$.}
    \label{fig:train_loss_runtime_tradeoff}
\end{figure}
\begin{figure}[H]
    \centering
    \includegraphics[width=0.77\linewidth]{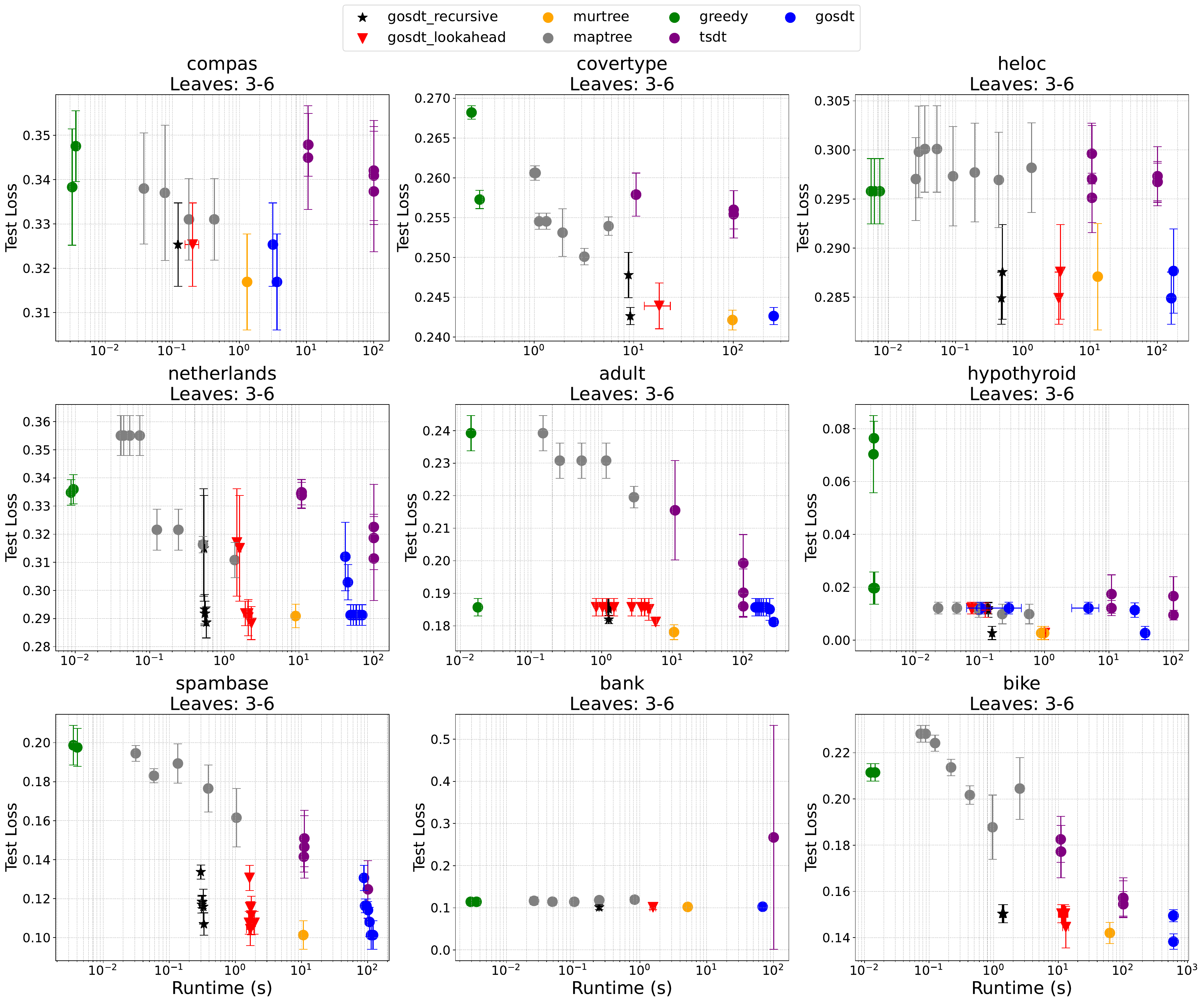}
    \includegraphics[width=0.77\linewidth]{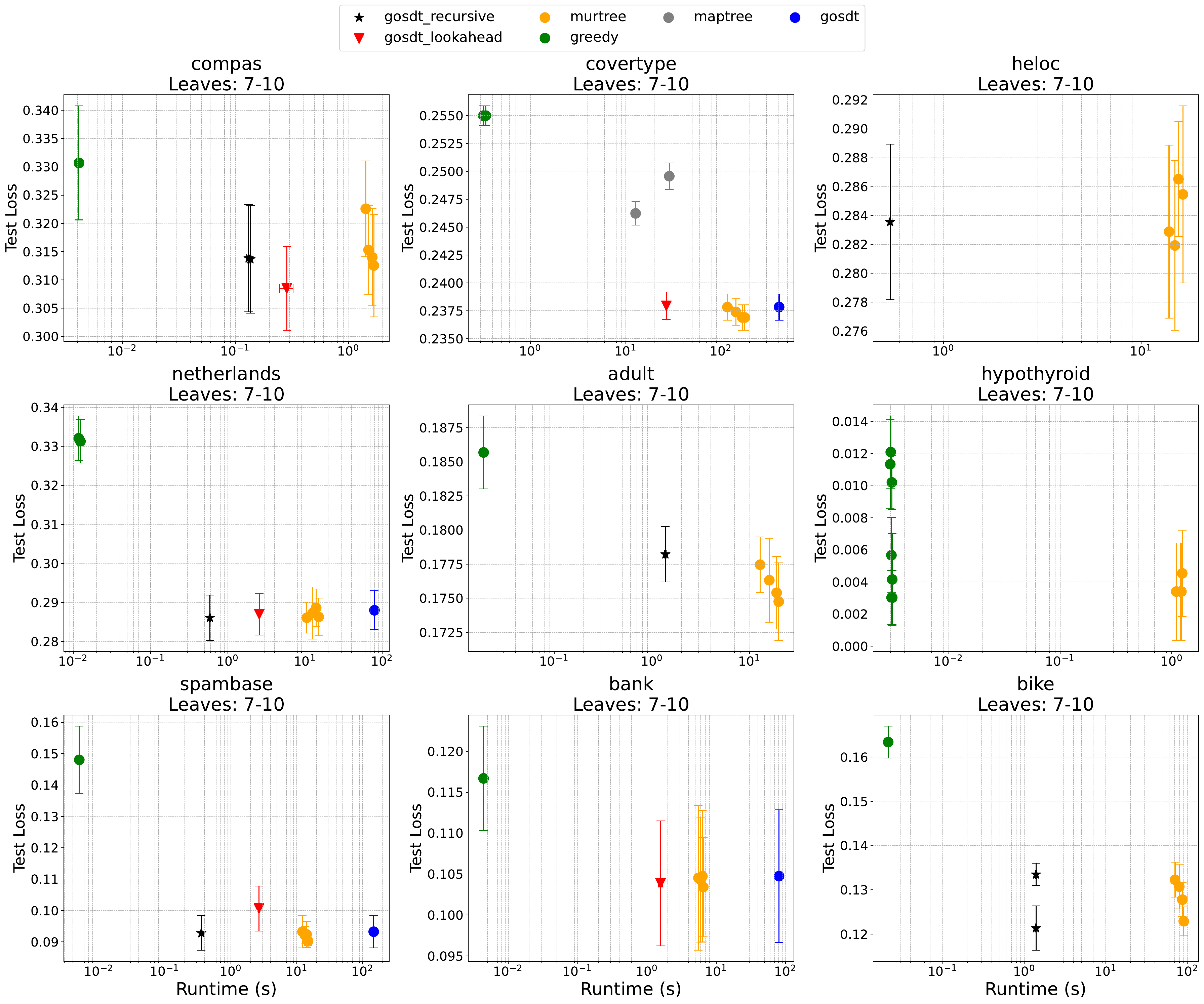}
    \caption{Tradeoff between \textbf{test loss and runtime} for all algorithms tested, for different sparsity levels (measured by $\#$ of leaves in the tree). Depth Budget $= 5$.}
    \label{fig:test_loss_runtime_tradeoff}
\end{figure}
\begin{figure}[H]
    \centering
    \includegraphics[width=\linewidth]{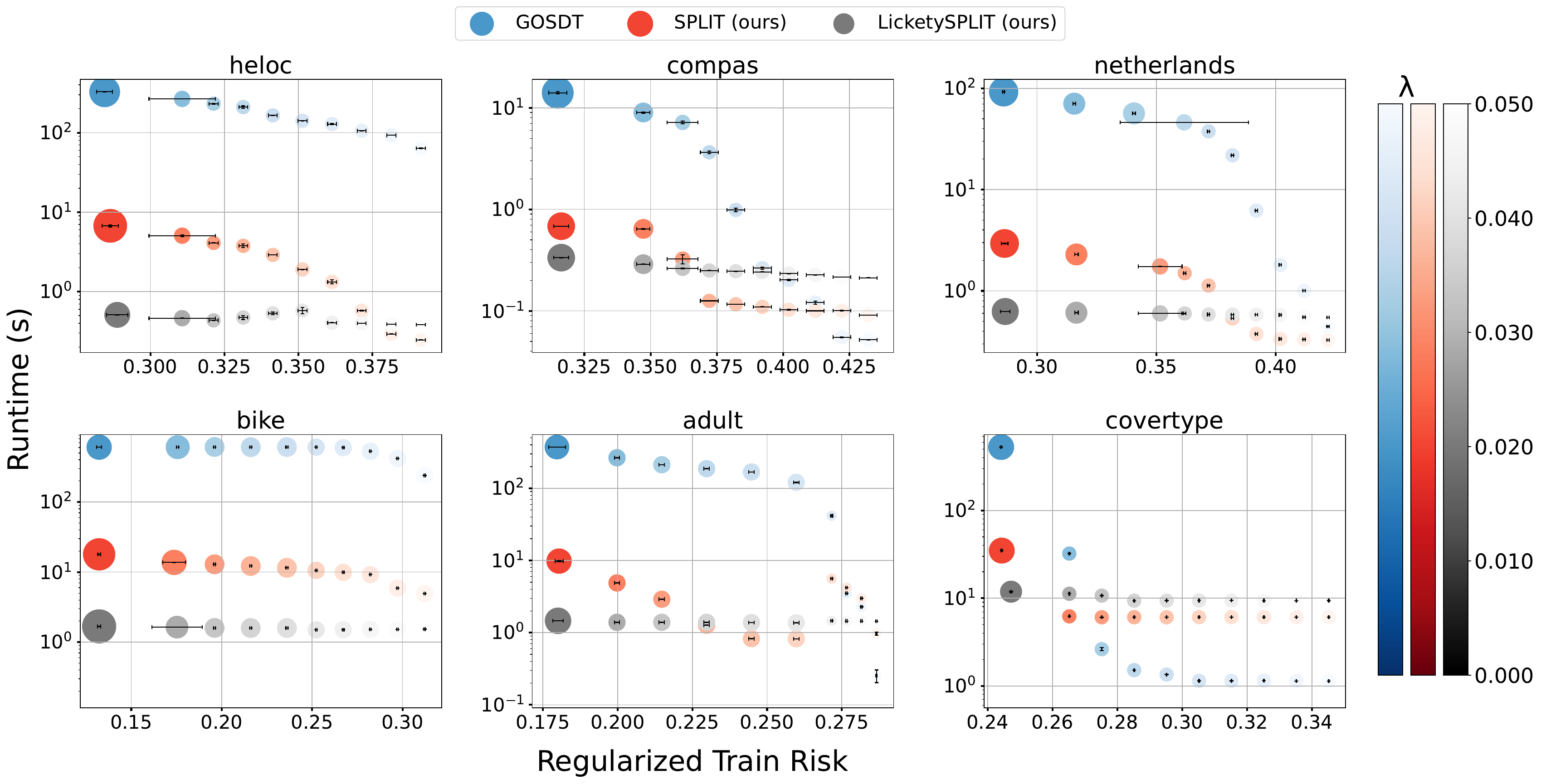}
    \caption{Regularized training objective vs$.$ training time (in seconds) for GOSDT vs$.$ our algorithms. The size of the points indicates the number of leaves in the resulting tree. Both SPLIT and LicketySPLIT are much faster for most values of sparsity penalty $\lambda$, with the only potential slowdown being in the sub-second regime due to overhead costs. Depth Budget $= 5$.}
    \label{fig:split_vs_gosdt}
\end{figure}

\begin{table}[H]
\centering
\caption{Results for $\lambda = 0.001$}
\resizebox{0.6\textwidth}{!}{%
\begin{tabular}{llcc}
\toprule
\textbf{Dataset} & \textbf{Algorithm} & \textbf{Train Objective} & \textbf{Runtime (s)} \\ \midrule
\multirow{3}{*}{bike} 
 & GOSDT (SOTA) & 0.1322 $\pm$ 0.0015 & 606.87 $\pm$ 2.12 \\
& \textbf{LicketySPLIT (ours)} & \textbf{0.1328 $\pm$ 0.0010} & \textbf{1.68 $\pm$ 0.05} \\
& Greedy & 0.2101 $\pm$ 0.0401 & 0.01 $\pm$ 0.00 \\
 \midrule
\multirow{3}{*}{adult} 
 & GOSDT (SOTA) & 0.1797 $\pm$ 0.0032 & 372.62 $\pm$ 2.01 \\
&\textbf{ LicketySPLIT (ours) }& \textbf{0.1800 $\pm$ 0.0020} & \textbf{1.46 $\pm$ 0.01} \\ 
  & Greedy & 0.1950 $\pm$ 0.0350 & 0.01 $\pm$ 0.00 \\
\midrule
\multirow{3}{*}{covertype} 
 & GOSDT (SOTA) & 0.2442 $\pm$ 0.0002 & 528.67 $\pm$ 1.68 \\
& \textbf{LicketySPLIT (ours) }& \textbf{0.2472 $\pm$ 0.0003} & \textbf{11.86 $\pm$ 0.29} \\ 
 & Greedy & 0.2681 $\pm$ 0.0020 & 0.01 $\pm$ 0.00 \\
\bottomrule
\end{tabular}%
}
\label{tab:1}
\end{table}

\begin{table}[H]
\centering
\caption{Results for $\lambda = 0.006$}
\resizebox{0.6\textwidth}{!}{%
\begin{tabular}{llcc}
\toprule
\textbf{Dataset} & \textbf{Algorithm} & \textbf{Train Objective} & \textbf{Runtime (s)} \\ \midrule
\multirow{3}{*}{heloc} 
 & GOSDT & 0.3107 $\pm$ 0.0128 & 266.15 $\pm$ 1.81 \\
 & SPLIT (ours) & 0.3107 $\pm$ 0.0128 & 5.06 $\pm$ 0.32 \\
 & LicketySPLIT (ours) & 0.3107 $\pm$ 0.0128 & 0.46 $\pm$ 0.01 \\ \midrule
\multirow{3}{*}{compas} 
 & GOSDT & 0.3473 $\pm$ 0.0029 & 8.99 $\pm$ 0.31 \\
 & SPLIT (ours) & 0.3473 $\pm$ 0.0029 & 0.64 $\pm$ 0.01 \\
 & LicketySPLIT (ours) & 0.3473 $\pm$ 0.0029 & 0.29 $\pm$ 0.00 \\ \midrule
\multirow{3}{*}{netherlands} 
 & GOSDT & 0.3156 $\pm$ 0.0006 & 70.49 $\pm$ 0.15 \\
 & SPLIT (ours) & 0.3165 $\pm$ 0.0008 & 2.29 $\pm$ 0.01 \\
 & LicketySPLIT (ours) & 0.3165 $\pm$ 0.0008 & 0.61 $\pm$ 0.00 \\ \midrule
\multirow{3}{*}{bike} 
 & GOSDT & 0.1736 $\pm$ 0.0007 & 607.32 $\pm$ 0.89 \\
 & SPLIT (ours) & 0.1737 $\pm$ 0.0072 & 13.77 $\pm$ 0.18 \\
 & LicketySPLIT (ours) & 0.1753 $\pm$ 0.0158 & 1.64 $\pm$ 0.01 \\ \midrule
\multirow{3}{*}{adult} 
 & GOSDT & 0.1998 $\pm$ 0.0010 & 264.79 $\pm$ 0.99 \\
 & SPLIT (ours) & 0.1998 $\pm$ 0.0010 & 4.89 $\pm$ 0.02 \\
 & LicketySPLIT (ours) & 0.1998 $\pm$ 0.0010 & 1.39 $\pm$ 0.00 \\ \midrule
\multirow{3}{*}{covertype} 
 & GOSDT & 0.2652 $\pm$ 0.0001 & 32.27 $\pm$ 0.27 \\
 & SPLIT (ours) & 0.2652 $\pm$ 0.0001 & 6.21 $\pm$ 0.02 \\
 & LicketySPLIT (ours) & 0.2652 $\pm$ 0.0001 & 11.24 $\pm$ 0.01 \\ \bottomrule
\end{tabular}%
}\label{tab:2}
\end{table}

\begin{table}[H]
\centering
\caption{Results for $\lambda = 0.011$}
\resizebox{0.6\textwidth}{!}{%
\begin{tabular}{llcc}
\toprule
\textbf{Dataset} & \textbf{Algorithm} & \textbf{Train Objective} & \textbf{Runtime (s)} \\ \midrule
\multirow{3}{*}{heloc} 
 & GOSDT & 0.3214 $\pm$ 0.0017 & 231.41 $\pm$ 5.55 \\
 & SPLIT (ours) & 0.3214 $\pm$ 0.0017 & 4.10 $\pm$ 0.03 \\
 & LicketySPLIT (ours) & 0.3214 $\pm$ 0.0017 & 0.44 $\pm$ 0.00 \\ \midrule
\multirow{3}{*}{compas} 
 & GOSDT & 0.3621 $\pm$ 0.0066 & 7.17 $\pm$ 0.53 \\
 & SPLIT (ours) & 0.3621 $\pm$ 0.0066 & 0.32 $\pm$ 0.08 \\
 & LicketySPLIT (ours) & 0.3621 $\pm$ 0.0066 & 0.26 $\pm$ 0.01 \\ \midrule
\multirow{3}{*}{netherlands} 
 & GOSDT & 0.3406 $\pm$ 0.0006 & 56.54 $\pm$ 0.11 \\
 & SPLIT (ours) & 0.3515 $\pm$ 0.0105 & 1.74 $\pm$ 0.01 \\
 & LicketySPLIT (ours) & 0.3515 $\pm$ 0.0105 & 0.60 $\pm$ 0.00 \\ \midrule
\multirow{3}{*}{bike} 
 & GOSDT & 0.1961 $\pm$ 0.0006 & 610.20 $\pm$ 0.55 \\
 & SPLIT (ours) & 0.1961 $\pm$ 0.0006 & 12.91 $\pm$ 0.05 \\
 & LicketySPLIT (ours) & 0.1961 $\pm$ 0.0006 & 1.60 $\pm$ 0.01 \\ \midrule
\multirow{3}{*}{adult} 
 & GOSDT & 0.2148 $\pm$ 0.0010 & 211.86 $\pm$ 0.68 \\
 & SPLIT (ours) & 0.2148 $\pm$ 0.0010 & 2.90 $\pm$ 0.07 \\
 & LicketySPLIT (ours) & 0.2148 $\pm$ 0.0010 & 1.39 $\pm$ 0.00 \\ \midrule
\multirow{3}{*}{covertype} 
 & GOSDT & 0.2752 $\pm$ 0.0001 & 2.62 $\pm$ 0.23 \\
 & SPLIT (ours) & 0.2752 $\pm$ 0.0001 & 6.07 $\pm$ 0.08 \\
 & LicketySPLIT (ours) & 0.2752 $\pm$ 0.0001 & 10.70 $\pm$ 0.00 \\ \bottomrule
\end{tabular}%
}\label{tab:3}
\end{table}

\begin{table}[H]
\centering
\begin{tabular}{l l l l l}
\toprule
\textbf{Dataset} & \makecell{\textbf{Binarization} \\ \textbf{Time (s)}} & \textbf{Algorithm} & \textbf{Runtimes (s)} & \textbf{Test Loss} \\
\midrule
\multirow{3}{*}{compas} 
 & \multirow{3}{*}{[2.69, 2.91]} 
 & LicketySPLIT (ours) & [2.85, 2.97] & \textbf{[0.306, 0.328]} \\
 & & SPLIT (ours)        & [2.84, 3.01] & [0.314, 0.335] \\
 & & CART         & [0.00, 0.00] & [0.322, 0.354] \\
\midrule
\multirow{3}{*}{bank} 
 & \multirow{3}{*}{[0.37, 0.41]} 
 & LicketySPLIT (ours) & [0.47, 0.52] & [0.103, 0.119] \\
 & & SPLIT (ours)        & [0.57, 0.67] & \textbf{[0.101, 0.116]} \\
 & & CART         & [0.00, 0.00] & [0.106, 0.123] \\
\midrule
\multirow{3}{*}{bike} 
 & \multirow{3}{*}{[2.20, 2.32]} 
 & LicketySPLIT (ours) & [2.67, 2.81] & \textbf{[0.133, 0.147]} \\
 & & SPLIT (ours)        & [3.80, 4.71] & [0.139, 0.152] \\
 & & CART         & [0.01, 0.01] & [0.207, 0.215] \\
\midrule
\multirow{3}{*}{adult} 
 & \multirow{3}{*}{[2.26, 2.74]} 
 & LicketySPLIT (ours) & [2.94, 3.19] & \textbf{[0.155, 0.159]} \\
 & & SPLIT (ours)        & [3.66, 4.26] & \textbf{[0.155, 0.159]} \\
 & & CART         & [0.02, 0.02] & [0.183, 0.191] \\
\midrule
\multirow{3}{*}{hypothyroid} 
 & \multirow{3}{*}{[0.99, 1.32]} 
 & LicketySPLIT (ours) & [1.13, 1.30] & \textbf{[0.004, 0.005]} \\
 & & SPLIT (ours)        & [1.18, 1.36] & \textbf{[0.004, 0.005]} \\
 & & CART         & [0.00, 0.00] & [0.009, 0.014] \\
\midrule
\multirow{3}{*}{covertype} 
 & \multirow{3}{*}{[19.49, 20.00]} 
 & LicketySPLIT (ours) & [25.50, 25.75] & [0.242, 0.244] \\
 & & SPLIT (ours)        & [25.62, 25.91] & \textbf{[0.242, 0.244]} \\
 & & CART         & [0.68, 0.69] & [0.266, 0.269] \\
\midrule
\multirow{3}{*}{netherlands} 
 & \multirow{3}{*}{[1.92, 2.04]} 
 & LicketySPLIT (ours) & [2.31, 2.39] & [0.285, 0.294] \\
 & & SPLIT (ours)        & [2.74, 3.06] & \textbf{[0.284, 0.294]} \\
 & & CART         & [0.00, 0.00] & [0.314, 0.338] \\
\midrule
\multirow{3}{*}{heloc} 
 & \multirow{3}{*}{[0.89, 1.01]} 
 & LicketySPLIT (ours) & [1.20, 1.27] & \textbf{[0.284, 0.289]} \\
 & & SPLIT (ours)        & [2.08, 2.53] & \textbf{[0.284, 0.289]} \\
 & & CART         & [0.01, 0.01] & [0.293, 0.299] \\
\midrule
\multirow{3}{*}{spambase} 
 & \multirow{3}{*}{[0.70, 0.73]} 
 & LicketySPLIT (ours) & [0.88, 0.90] & \textbf{[0.094, 0.114]} \\
 & & SPLIT (ours)        & [1.16, 1.19] & [0.097, 0.111] \\
 & & CART         & [0.01, 0.01] & [0.164, 0.207] \\
\bottomrule
\end{tabular}
\caption{Results ($\#$ leaves between 3--6). The $95\%$ confidence interval is shown. Binarization is only applicable to LicketySPLIT/SPLIT. The runtimes for SPLIT / LicketySPLIT \textbf{include} binarization time.}\label{tab:4}
\end{table}

\begin{table}[H]
\centering
\begin{tabular}{l l l l l}
\toprule
\textbf{Dataset} & \makecell{\textbf{Binarization} \\ \textbf{Time (s)}} & \textbf{Algorithm} & \textbf{Runtimes (s)} & \textbf{Test Loss} \\
\midrule
\multirow{3}{*}{compas} 
 & \multirow{3}{*}{[2.69, 2.91]} 
 & LicketySPLIT (ours) & [2.86, 2.98] & [0.303, 0.321] \\
 & & SPLIT (ours)        & [2.91, 3.06] & \textbf{[0.302, 0.320]} \\
 & & CART         & [0.00, 0.00] & [0.325, 0.338] \\
\midrule
\multirow{3}{*}{bank} 
 & \multirow{3}{*}{[0.37, 0.41]} 
 & LicketySPLIT (ours) & [0.48, 0.53] & [0.103, 0.118] \\
 & & SPLIT (ours)        & [0.59, 0.68] & \textbf{[0.101, 0.117]} \\
 & & CART         & [0.00, 0.00] & [0.106, 0.123] \\
\midrule
\multirow{3}{*}{bike} 
 & \multirow{3}{*}{[2.20, 2.32]} 
 & LicketySPLIT (ours) & [2.70, 2.84] & \textbf{[0.123, 0.125]} \\
 & & SPLIT (ours)        & [4.06, 4.78] & [0.127, 0.134] \\
 & & CART         & [0.01, 0.01] & [0.166, 0.238] \\
\midrule
\multirow{3}{*}{adult} 
 & \multirow{3}{*}{[2.26, 2.74]} 
 & LicketySPLIT (ours) & [2.97, 3.22] & \textbf{[0.149, 0.154]} \\
 & & SPLIT (ours)        & [3.95, 4.54] & [0.149, 0.155] \\
 & & CART         & [0.03, 0.04] & [0.165, 0.180] \\
\midrule
\multirow{3}{*}{hypothyroid} 
 & \multirow{3}{*}{[0.99, 1.32]} 
 & LicketySPLIT (ours) & [1.13, 1.30] & [0.003, 0.005] \\
 & & SPLIT (ours)        & [1.20, 1.38] & [0.003, 0.005] \\
 & & CART         & [0.00, 0.00] & \textbf{[0.002, 0.004]} \\
\midrule
\multirow{3}{*}{covertype} 
 & \multirow{3}{*}{[19.49, 20.00]} 
 & LicketySPLIT (ours) & [25.50, 25.74] & [0.240, 0.243] \\
 & & SPLIT (ours)        & [26.57, 26.92] & \textbf{[0.239, 0.242]} \\
 & & CART         & [0.99, 1.00] & [0.254, 0.256] \\
\midrule
\multirow{3}{*}{netherlands} 
 & \multirow{3}{*}{[1.92, 2.04]} 
 & LicketySPLIT (ours) & [2.31, 2.41] & \textbf{[0.282, 0.293]} \\
 & & SPLIT (ours)        & [2.79, 3.12] & \textbf{[0.282, 0.293]} \\
 & & CART         & [0.00, 0.00] & [0.297, 0.314] \\
\midrule
\multirow{3}{*}{heloc} 
 & \multirow{3}{*}{[0.89, 1.01]} 
 & LicketySPLIT (ours) & [1.21, 1.27] & [0.284, 0.293] \\
 & & SPLIT (ours)        & [2.18, 2.42] & \textbf{[0.282, 0.293]} \\
 & & CART         & [0.00, 0.00] & [0.291, 0.327] \\
\midrule
\multirow{3}{*}{spambase} 
 & \multirow{3}{*}{[0.70, 0.73]} 
 & LicketySPLIT (ours) & [0.89, 0.91] & \textbf{[0.085, 0.096]} \\
 & & SPLIT (ours)        & [1.36, 1.52] & [0.085, 0.098] \\
 & & CART         & [0.02, 0.02] & [0.114, 0.141] \\
\bottomrule
\end{tabular}
\caption{Results ($\#$ leaves between 7--10). The $95\%$ confidence interval is shown. Binarization is only applicable to LicketySPLIT/SPLIT. The runtimes for SPLIT / LicketySPLIT \textbf{include} binarization time.}\label{tab:5}
\end{table}

\begin{table}[H]
\centering
\begin{tabular}{l l l l l}
\toprule
\textbf{Dataset} & \makecell{\textbf{Binarization} \\ \textbf{Time (s)}} & \textbf{Algorithm} & \textbf{Runtimes (s)} & \textbf{Test Loss} \\
\midrule
\multirow{2}{*}{compas} 
 & \multirow{2}{*}{[2.69, 2.91]} 
 & SPLIT (ours)        & [2.92, 3.08] & \textbf{[0.302, 0.316]} \\
 & & CART         & [0.00, 0.00] & [0.318, 0.333] \\
\midrule
\multirow{3}{*}{bank} 
 & \multirow{3}{*}{[0.37, 0.41]} 
 & LicketySPLIT (ours) & [0.49, 0.53] & [0.099, 0.116] \\
 & & SPLIT (ours)        & [0.59, 0.69] & \textbf{[0.100, 0.118]} \\
 & & CART         & [0.00, 0.00] & [0.104, 0.119] \\
\midrule
\multirow{3}{*}{bike} 
 & \multirow{3}{*}{[2.20, 2.32]} 
 & LicketySPLIT (ours) & [2.70, 2.83] & \textbf{[0.114, 0.123]} \\
 & & SPLIT (ours)        & [4.40, 5.14] & [0.121, 0.129] \\
 & & CART         & [0.02, 0.02] & [0.130, 0.139] \\
\midrule
\multirow{3}{*}{adult} 
 & \multirow{3}{*}{[2.26, 2.74]} 
 & LicketySPLIT (ours) & [2.97, 3.22] & [0.148, 0.155] \\
 & & SPLIT (ours)        & [4.09, 4.79] & \textbf{[0.148, 0.154]} \\
 & & CART         & [0.04, 0.04] & [0.154, 0.161] \\
\midrule
\multirow{3}{*}{netherlands} 
 & \multirow{3}{*}{[1.92, 2.04]} 
 & LicketySPLIT (ours) & [2.32, 2.42] & [0.283, 0.291] \\
 & & SPLIT (ours)        & [2.89, 3.22] & \textbf{[0.282, 0.291]} \\
 & & CART         & [0.00, 0.00] & [0.293, 0.309] \\
\midrule
\multirow{3}{*}{heloc} 
 & \multirow{3}{*}{[0.89, 1.01]} 
 & LicketySPLIT (ours) & [1.23, 1.29] & \textbf{[0.281, 0.292]} \\
 & & SPLIT (ours)        & [2.41, 2.73] & [0.286, 0.297] \\
 & & CART         & [0.02, 0.02] & [0.298, 0.306] \\
\midrule
\multirow{3}{*}{spambase} 
 & \multirow{3}{*}{[0.70, 0.73]} 
 & LicketySPLIT (ours) & [0.89, 0.92] & [0.086, 0.094] \\
 & & SPLIT (ours)        & [1.50, 1.65] & \textbf{[0.081, 0.093]} \\
 & & CART         & [0.02, 0.02] & [0.114, 0.136] \\
\bottomrule
\end{tabular}
\caption{Results ($\#$ leaves between 11--14). The $95\%$ confidence interval is shown. Binarization is only applicable to LicketySPLIT/SPLIT. The runtimes for SPLIT / LicketySPLIT \textbf{include} binarization time.}\label{tab:5.5}
\end{table}
\begin{table}[H]
\centering
\begin{tabular}{l l l l l l}
\toprule
\textbf{Dataset} & \makecell{\textbf{Binarization} \\ \textbf{Time (s)}} & \textbf{Algorithm} & \makecell{\# \textbf{Leaves} \\ (95\% CI)} & \textbf{Runtimes (s)} & \textbf{Test Loss} \\
\midrule
\multirow{3}{*}{compas} 
 & \multirow{3}{*}{[2.69, 2.91]} 
 & LicketySPLIT (ours) & [14.2, 17.2] & [2.99, 3.23] & [0.305, 0.325] \\
 & & SPLIT (ours)        & [13.6, 16.0] & [2.76, 3.32] & \textbf{[0.304, 0.314]} \\
 & & CART         & [28.6, 30.8] & [0.00, 0.00] & [0.307, 0.324] \\
\midrule
\multirow{3}{*}{bank} 
 & \multirow{3}{*}{[0.37, 0.41]} 
 & LicketySPLIT (ours) & [20.2, 23.8] & [0.52, 0.57] & [0.102, 0.116] \\
 & & SPLIT (ours)        & [15.2, 16.0] & [0.63, 0.75] & [0.102, 0.118] \\
 & & CART         & [16.6, 17.8] & [0.01, 0.01] & \textbf{[0.098, 0.112]} \\
\midrule
\multirow{2}{*}{bike} 
 & \multirow{2}{*}{[2.20, 2.32]} 
 & LicketySPLIT (ours) & [17.8, 19.6] & [2.72, 2.94] & \textbf{[0.114, 0.122]} \\
 & & CART         & [27.6, 28.2] & [0.02, 0.03] & [0.122, 0.130] \\
\midrule
\multirow{3}{*}{adult} 
 & \multirow{3}{*}{[2.26, 2.74]} 
 & LicketySPLIT (ours) & [19.8, 21.4] & [2.90, 3.36] & \textbf{[0.146, 0.151]} \\
 & & SPLIT (ours)        & [15.2, 16.0] & [4.29, 5.45] & [0.148, 0.153] \\
 & & CART         & [22.6, 23.2] & [0.02, 0.03] & [0.152, 0.157] \\
\midrule
\multirow{3}{*}{netherlands} 
 & \multirow{3}{*}{[1.92, 2.04]} 
 & LicketySPLIT (ours) & [15.4, 18.4] & [2.33, 2.72] & [0.282, 0.291] \\
 & & SPLIT (ours)        & [13.4, 15.2] & [2.97, 3.45] & \textbf{[0.284, 0.291]} \\
 & & CART         & [18.6, 20.2] & [0.01, 0.01] & [0.293, 0.306] \\
\midrule
\multirow{3}{*}{heloc} 
 & \multirow{3}{*}{[0.89, 1.01]} 
 & LicketySPLIT (ours) & [18.0, 23.0] & [1.25, 1.44] & \textbf{[0.285, 0.295]} \\
 & & SPLIT (ours)        & [14.4, 16.4] & [2.31, 2.63] & [0.286, 0.301] \\
 & & CART         & [21.4, 23.2] & [0.02, 0.02] & [0.290, 0.299] \\
\midrule
\multirow{3}{*}{spambase} 
 & \multirow{3}{*}{[0.70, 0.73]} 
 & LicketySPLIT (ours) & [24.4, 25.6] & [0.91, 0.96] & \textbf{[0.081, 0.088]} \\
 & & SPLIT (ours)        & [14.0, 15.8] & [1.46, 1.57] & [0.081, 0.093] \\
 & & CART         & [20.0, 23.2] & [0.02, 0.02] & [0.082, 0.103] \\
\bottomrule
\end{tabular}
\caption{Comparing CART and SPLIT/LicketySPLIT for non sparse trees. The $95\%$ confidence interval is shown. Binarization is only applicable to LicketySPLIT/SPLIT. The runtimes for SPLIT / LicketySPLIT \textbf{include} binarization time. We report the best tree with between $15-30$ leaves found during hyperparameter search.}\label{tab:6}
\end{table}

\begin{table}[H]
\centering

\begin{tabular}{l l l l}
\toprule
\textbf{Dataset} & \textbf{Leaves} & \textbf{Runtimes} & \textbf{Losses} \\
\midrule
bank        & [20.20, 23.80]    & [0.52, 0.56]     & [0.102, 0.116] \\
bike        & [17.80, 19.60]    & [2.80, 2.92]     & [0.114, 0.122] \\
adult       & [19.80, 21.40]    & [3.18, 3.40]     & [0.146, 0.151] \\
netherlands & [15.40, 18.40]    & [2.34, 2.43]     & [0.282, 0.292] \\
heloc       & [18.00, 23.00]    & [1.24, 1.30]     & [0.286, 0.295] \\
spambase    & [24.40, 25.60]    & [0.88, 0.91]     & [0.081, 0.088] \\
\bottomrule
\end{tabular}
\caption{SPLIT/LicketySPLIT for non-sparse trees. For this variant, we set $\lambda = 1e-5$ and ran our algorithms over $5$ trials. We show the $95\%$ confidence interval. This shows that our algorithms are capable of producing non-sparse trees. We may not prefer to do this in practice if there are interpretability constraints or if we can get a well performing model with much fewer than $20$ leaves.}\label{tab:7}
\end{table}

\begin{table}[H]
\centering
\begin{tabular}{l l l l}
\toprule
\textbf{Dataset} & \textbf{Algorithm} & \textbf{Runtimes (s)} & \textbf{Test Loss} \\
\midrule
\multirow{2}{*}{compas}
& CART (with binary features) & [0.00, 0.00] & [0.325, 0.352] \\
& CART (with cont features) & [0.00, 0.00] & [0.322, 0.354] \\
\midrule
\multirow{2}{*}{bank}
& CART (with binary features) & [0.00, 0.00] & [0.106, 0.122] \\
& CART (with cont features) & [0.00, 0.00] & [0.106, 0.123] \\
\midrule
\multirow{2}{*}{bike}
& CART (with binary features) & [0.02, 0.03] & [0.208, 0.215] \\
& CART (with cont features) & [0.01, 0.01] & [0.208, 0.215] \\
\midrule
\multirow{2}{*}{adult}
& CART (with binary features) & [0.01, 0.01] & [0.168, 0.197] \\
& CART (with cont features) & [0.02, 0.02] & [0.183, 0.191] \\
\midrule
\multirow{2}{*}{hypothyroid}
& CART (with binary features) & [0.00, 0.00] & [0.009, 0.014] \\
& CART (with cont features) & [0.00, 0.00] & [0.009, 0.014] \\
\midrule
\multirow{2}{*}{covertype}
& CART (with binary features) & [0.06, 0.07] & [0.253, 0.256] \\
& CART (with cont features) & [0.68, 0.69] & [0.266, 0.269] \\
\midrule
\multirow{2}{*}{netherlands}
& CART (with binary features) & [0.01, 0.01] & [0.332, 0.342] \\
& CART (with cont features) & [0.00, 0.00] & [0.314, 0.338] \\
\midrule
\multirow{2}{*}{heloc}
& CART (with binary features) & [0.01, 0.01] & [0.293, 0.299] \\
& CART (with cont features) & [0.01, 0.01] & [0.293, 0.299] \\
\midrule
\multirow{2}{*}{spambase}
& CART (with binary features) & [0.00, 0.00] & [0.156, 0.208] \\
& CART (with cont features) & [0.01, 0.01] & [0.164, 0.207] \\
\bottomrule
\end{tabular}
\caption{Comparison between CART (with binary features) and CART (with cont features) for trees with 3--6 leaves. The $95\%$ confidence interval is shown.}\label{tab:7.5}
\end{table}
\begin{table}[H]
\centering
\begin{tabular}{l l l l}
\toprule
\textbf{Dataset} & \textbf{Algorithm} & \textbf{Runtimes (s)} & \textbf{Test Loss} \\
\midrule
\multirow{2}{*}{compas}
& CART (with binary features) & [0.00, 0.00] & [0.3197, 0.3388] \\
& CART (with cont features) & [0.00, 0.00] & [0.3253, 0.3385] \\
\midrule
\multirow{2}{*}{bank}
& CART (with binary features) & [0.00, 0.00] & [0.1109, 0.1193] \\
& CART (with cont features) & [0.00, 0.00] & [0.1061, 0.1227] \\
\midrule
\multirow{2}{*}{bike}
& CART (with binary features) & [0.01, 0.01] & [0.1718, 0.1911] \\
& CART (with cont features) & [0.00, 0.01] & [0.1661, 0.2380] \\
\midrule
\multirow{2}{*}{adult}
& CART (with binary features) & [0.02, 0.02] & [0.1647, 0.1803] \\
& CART (with cont features) & [0.03, 0.04] & [0.1647, 0.1803] \\
\midrule
\multirow{2}{*}{hypothyroid}
& CART (with binary features) & [0.00, 0.00] & [0.0030, 0.0053] \\
& CART (with cont features) & [0.00, 0.00] & [0.0015, 0.0042] \\
\midrule
\multirow{2}{*}{covertype}
& CART (with binary features) & [0.07, 0.07] & [0.2501, 0.2557] \\
& CART (with cont features) & [0.99, 1.00] & [0.2540, 0.2560] \\
\midrule
\multirow{2}{*}{netherlands}
& CART (with binary features) & [0.01, 0.01] & [0.3323, 0.3590] \\
& CART (with cont features) & [0.00, 0.00] & [0.2966, 0.3137] \\
\midrule
\multirow{2}{*}{heloc}
& CART (with binary features) & [0.00, 0.00] & [0.2895, 0.3015] \\
& CART (with cont features) & [0.01, 0.01] & [0.2926, 0.2990] \\
\midrule
\multirow{2}{*}{spambase}
& CART (with binary features) & [0.00, 0.00] & [0.1101, 0.1407] \\
& CART (with cont features) & [0.02, 0.02] & [0.1142, 0.1409] \\
\bottomrule
\end{tabular}
\caption{Comparison between CART (with binary features) and CART (with cont features) for trees with 7--10 leaves. The $95\%$ confidence interval is shown.}\label{tab:8}
\end{table}
\newpage
\begin{table}[H]
\centering
\begin{tabular}{l l l l}
\toprule
\textbf{Dataset} & \textbf{Algorithm} & \textbf{Runtimes (s)} & \textbf{Test Loss} \\
\midrule
\multirow{2}{*}{compas}
& CART (with binary features) & [0.00, 0.00] & [0.3179, 0.3334] \\
& CART (with cont features) & [0.00, 0.00] & [0.3176, 0.3334] \\
\midrule
\multirow{2}{*}{bank}
& CART (with binary features) & [0.00, 0.00] & [0.1087, 0.1176] \\
& CART (with cont features) & [0.00, 0.00] & [0.1045, 0.1193] \\
\midrule
\multirow{2}{*}{bike}
& CART (with binary features) & [0.04, 0.05] & [0.1288, 0.1354] \\
& CART (with cont features) & [0.02, 0.02] & [0.1296, 0.1387] \\
\midrule
\multirow{2}{*}{adult}
& CART (with binary features) & [0.02, 0.03] & [0.1543, 0.1615] \\
& CART (with cont features) & [0.04, 0.04] & [0.1542, 0.1615] \\
\midrule
\multirow{2}{*}{hypothyroid}
& CART (with binary features) & [0.00, 0.00] & [0.0045, 0.0083] \\
& CART (with cont features) & [0.00, 0.00] & [0.0023, 0.0045] \\
\midrule
\multirow{2}{*}{covertype}
& CART (with binary features) & [0.07, 0.07] & [0.2485, 0.2537] \\
& CART (with cont features) & [1.22, 1.23] & [0.2529, 0.2554] \\
\midrule
\multirow{2}{*}{netherlands}
& CART (with binary features) & [0.01, 0.01] & [0.3021, 0.3115] \\
& CART (with cont features) & [0.00, 0.00] & [0.2934, 0.3090] \\
\midrule
\multirow{2}{*}{heloc}
& CART (with binary features) & [0.01, 0.01] & [0.2961, 0.3047] \\
& CART (with cont features) & [0.02, 0.02] & [0.2983, 0.3057] \\
\midrule
\multirow{2}{*}{spambase}
& CART (with binary features) & [0.01, 0.01] & [0.1064, 0.1416] \\
& CART (with cont features) & [0.02, 0.02] & [0.1144, 0.1359] \\
\bottomrule
\end{tabular}
\caption{Comparison between CART (with binary features) and CART (with cont features) for trees with 11--14 leaves. The $95\%$ confidence interval is shown.}\label{tab:9}
\end{table}
\newpage
\subsection{Predictive Multiplicity of our Rashomon Set}
We illustrate another metric showing the approximation ability of RESPLIT. For each example in the training set, we computed the variance in predictions across models in the Rashomon set. The distribution of this variance over training examples is shown as a box plot for each dataset. Figure \ref{fig:pm} and Table \ref{tab:variance_1.96std} shows that there is minimal empirical difference in the predictive multiplicity of original vs RESPLIT Rashomon sets. 
\begin{figure}[H]
\centering
\includegraphics[width=0.8\textwidth]{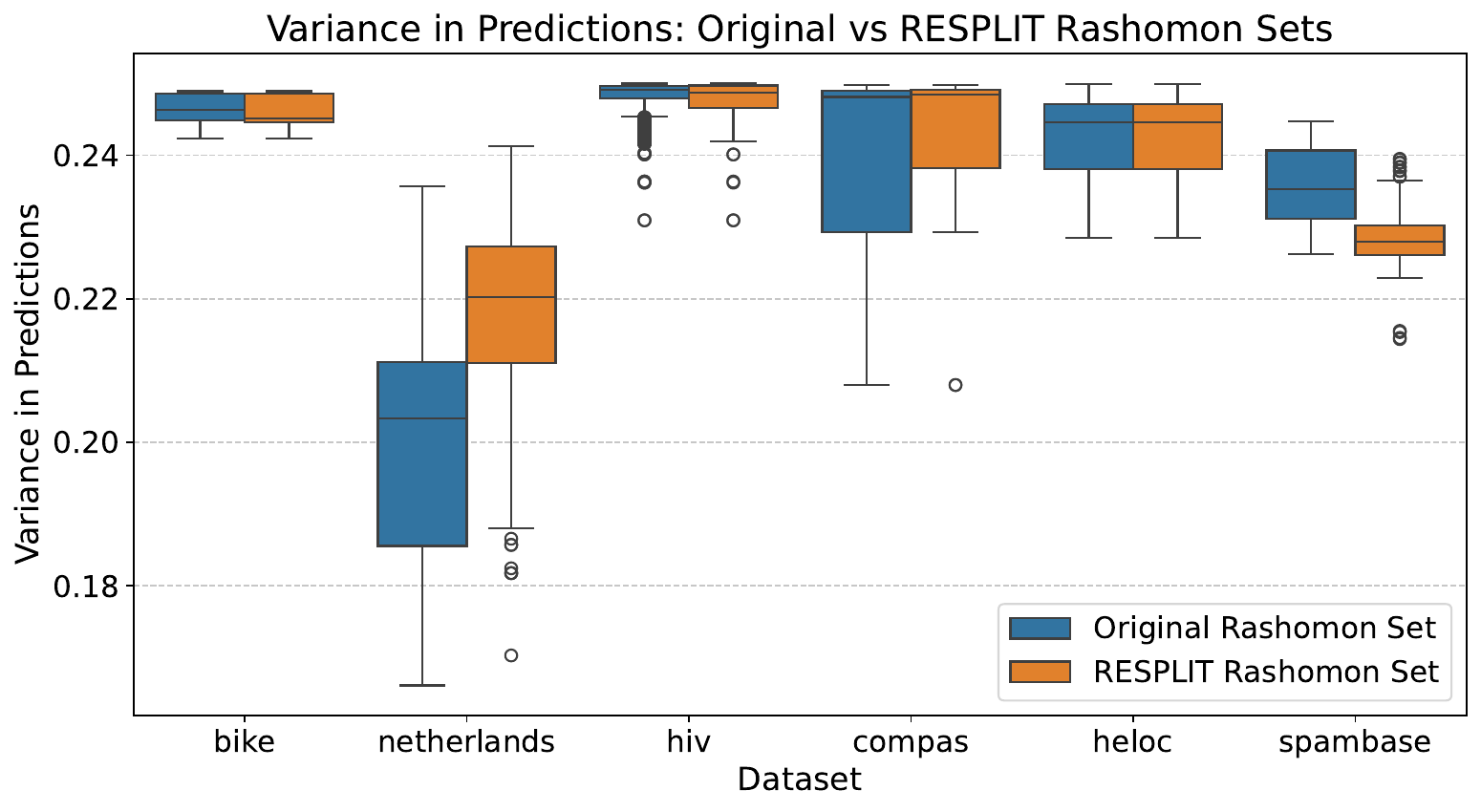}
\label{fig:predictive_multiplicity}
\caption{Illustration of the predictive multiplicity of the original and RESPLIT Rashomon Sets. $\lambda = 0.02$, $\epsilon = 0.01$, Depth Budget $= 5$, Lookahead depth $= 3$.}\label{fig:pm}
\end{figure}

\begin{table}[H]
\centering
\begin{tabular}{|l|c|c|}
\hline
\textbf{Dataset} & \makecell[c]{\textbf{Mean Variance (Original Rashomon Set)}\\\textbf{$\pm$ Std Dev}} & \makecell[c]{\textbf{Mean Variance (RESPLIT)}\\\textbf{$\pm$ Std Dev}} \\
\hline
bike & $0.2464 \pm 0.0023$ & $0.2458 \pm 0.0024$ \\
netherlands & $0.2017 \pm 0.0190$ & $0.2187 \pm 0.0122$ \\
hiv & $0.2485 \pm 0.0018$ & $0.2478 \pm 0.0027$ \\
compas & $0.2389 \pm 0.0138$ & $0.2407 \pm 0.0149$ \\
heloc & $0.2419 \pm 0.0081$ & $0.2419 \pm 0.0081$ \\
spambase & $0.2359 \pm 0.0055$ & $0.2284 \pm 0.0037$ \\
\hline
\end{tabular}
\caption{Illustration of the predictive multiplicity of the original and RESPLIT Rashomon Sets. $\lambda = 0.02$, $\epsilon = 0.01$, Depth Budget $= 5$, Lookahead depth $= 3$. This presents results in Figure $1$ in tabular form.}
\label{tab:variance_1.96std}
\end{table}
\end{document}